%% file: main.tex
\documentclass[a4paper]{article}

\usepackage[utf8]{inputenc}
\usepackage[T1]{fontenc}

\usepackage[a4paper,margin=1in]{geometry}

\NewDocumentCommand{\citep}{o o m}{%
\IfNoValueTF{#1}{%
\cite{#3}%
}{%
\IfNoValueTF{#2}{%
\citep[#1]{#3}%
}{%
\if\relax\detokenize{#2}\relax%
(#1~\cite{#3})%
\else%
(#1~\citep[#2]{#3})%
\fi%
}%
}%
}
\let\citet\citep

\usepackage{url}
\usepackage{booktabs}
\usepackage{graphicx}
\usepackage{microtype}
\usepackage{tikz}
\usetikzlibrary{positioning,arrows.meta,calc}
\usepackage{multirow}
\usepackage{yfonts}
\usepackage{mathrsfs}
\usepackage{soul}

\usepackage[dvipsnames]{xcolor}
\definecolor{DarkRed}{rgb}{0.368,0.097,0.078}
\definecolor{DarkBlue}{rgb}{0.2,0.2,0.6}

\usepackage[colorlinks = true,
    linkcolor = blue,
    anchorcolor = blue,
    citecolor = blue,
    filecolor = blue,
    urlcolor = DarkRed]{hyperref}

\input{header.tex}

\providecommand{\answerTODO}[1][]{\textcolor{red}{\bf [TODO]}}

\let\papernewcommand\newcommand
\let\newcommand\providecommand
\input{macros.tex}

\let\newcommand\papernewcommand

\declaretheorem[style=theorem,sibling=theorem,name=Assumption]{assumption}



\newcommand{\restatetheorem}[2]{%
\par\medskip
\noindent\textbf{Theorem~\ref*{#1} (Restated)}\ \textit{#2}
\par\medskip
}

\newcommand{\restatelemma}[2]{%
\par\medskip
\noindent\textbf{Lemma~\ref*{#1} (Restated)}\ \textit{#2}
\par\medskip
}

\newcommand{\restatecorollary}[2]{%
\par\medskip
\noindent\textbf{Corollary~\ref*{#1} (Restated)}\ \textit{#2}
\par\medskip
}

\renewcommand{\cS}{\mathcal{S}}
\renewcommand{\cB}{\mathcal{B}}
\renewcommand{\cO}{\mathcal{O}}
\newcommand{\gD}{\mathfrak{D}}
\newcommand{\gA}{\mathcal{A}}
\newcommand{\dk}{k}
\renewcommand{\Pr}{\mathbb{P}}
\renewcommand{\ind}{\mathbb I}
\newcommand{\wtd}{d'} 

\crefname{lemma}{lemma}{lemmas}
\Crefname{lemma}{Lemma}{Lemmas}
\crefname{corollary}{Corollary}{Corollaries}
\Crefname{corollary}{Corollary}{Corollaries}
\crefname{theorem}{Theorem}{Theorems}
\Crefname{theorem}{Theorem}{Theorems}
\crefname{proposition}{Proposition}{Propositions}
\Crefname{proposition}{Proposition}{Propositions}
\crefname{remark}{Remark}{Remarks}
\Crefname{remark}{Remark}{Remarks}
\crefname{definition}{Definition}{Definitions}
\Crefname{definition}{Definition}{Definitions}
\crefname{assumption}{Assumption}{Assumptions}
\Crefname{assumption}{Assumption}{Assumptions}
\crefname{fact}{Fact}{Facts}
\Crefname{fact}{Fact}{Facts}
\crefname{algocf}{algorithm}{algorithms}
\Crefname{algocf}{Algorithm}{Algorithms}

\usepackage{tikz}

\usetikzlibrary{positioning, arrows.meta, calc, shapes.geometric}

\date{}

\title{Capacity-Constrained Online Convex Optimization\\ with Delayed Feedback}

\author{
Alexander Ryabchenko\thanks{Department of Statistical Sciences, University of Toronto; Vector Institute. Email: \texttt{alex.rbch.research@gmail.com}.}
\qquad
Idan Attias \thanks{Institute for Data, Econometrics, Algorithms, and Learning (IDEAL), hosted by UIC and TTIC. Email: \texttt{idanattias88@gmail.com}.}
\qquad
Daniel M. Roy\thanks{Department of Statistical Sciences, University of Toronto; Vector Institute. Email: \texttt{daniel.roy@utoronto.ca}.}
}

\begin{document}

\maketitle

\begin{abstract}
Online learning with delayed feedback typically assumes that the learner can track all pending rounds until their feedback arrives. In practice, tracking resources are finite, and feedback from untracked rounds is permanently lost. In this paper, we study delayed online convex optimization (OCO) under a hard capacity constraint, where at most $C$ pending rounds can be tracked at any time. To model delay information, we introduce a semi-clairvoyant model that refines the clairvoyant assumption from prior work: rather than requiring delays to be known at prediction time, the learner observes delay expirations online, consistent with the classical unconstrained delayed setting. Our approach proceeds via a reduction to a novel ``delayed and weighted'' OCO problem, using a scheduler that randomizes tracking decisions and importance-weights the resulting observations. For this base problem, we propose and analyze Delayed-Weighted FTRL and its bandit analogue, establishing regret bounds that explicitly characterize the interaction between time-varying weights and delayed feedback. Combining these base learners with our schedulers yields the first regret guarantees for capacity-constrained OCO under convex and strongly convex losses, for both first-order and bandit feedback. For first-order feedback, capacity $C = \Omega(\log T)$ suffices to recover standard delayed OCO rates up to logarithmic factors. For bandit feedback, the regret rates are modulated by powers of $(1 + \sigma_{\text{max}}/C)$, where $\sigma_{\text{max}}$ is the maximum number of pending observations at any time. This allows the regret bound to degrade gracefully when $C < \sigma_{\text{max}}$, while remaining sublinear.

\end{abstract}

\begin{table*}[t]
\caption{Notation: $T$ is the horizon, $\dk$ is the dimension, $\dtot = \sum_{t=1}^T d_t$ is the total delay, $\sigmax \le \min\{\dmax, \sqrt{2\dtot}\}$ is the maximum number of pending observations at any round, $C$ is the capacity. Our bounds are the first for capacity-constrained OCO/BCO with $C=\Omega(\ln T)$; prior work implicitly assumes $C=\infty$.}
\centering
\setlength{\tabcolsep}{10pt}
\renewcommand{\arraystretch}{1.3}

\resizebox{1\textwidth}{!}{
\begin{tabular}{|c|c|c|c|}
\hline
\multirow{3}{*}{\textbf{Loss type}} &
\multicolumn{3}{c|}{\textbf{Regret bounds for OCO with delays}} \\ \cline{2-4}
& \multicolumn{2}{c|}{Unconstrained ($C = \infty$)} & Constrained ($C = \Omega(\ln T)$) \\ \cline{2-4}
& {\cite{quanrud2015}} &
  {\cite{qiu2025}} &
  {This work} \\
\hline
Convex & $\sqrt{\dtot} + \sqrt{T}$ & NA & $\sqrt{\dtot}+ \sqrt{T}$ \\[2pt]
\hline
Strongly convex & NA & $\min\{\sigmax\ln T, \sqrt{\dtot}\} + \ln T$ & $(\sigmax + 1) \ln T$ \\[2pt]
\hline
\end{tabular}
}

\vspace{0.2cm}

\resizebox{1\textwidth}{!}{
\begin{tabular}{|c|c|c|c|}
\hline
\multirow{3}{*}{\textbf{Loss type}} &
\multicolumn{3}{c|}{\textbf{Regret bounds for BCO with delays (up to log-factors)}} \\ \cline{2-4}
& \multicolumn{2}{c|}{Unconstrained ($C = \infty$)} & Constrained ($C = \Omega(\ln T)$) \\ \cline{2-4}
& {\cite{wan2024improvedregretbanditconvex}} &
  {\cite{ryabchenko2026reductiondelayedimmediatefeedback}} &
  {This work} \\
\hline
Convex & $\sqrt{T\dmax} + T^{3/4}\sqrt{k}$ & $\sqrt{\dtot} + T^{3/4}\sqrt{k}$ & $\sqrt{T\sigmax} + T^{3/4}(1+\tfrac{\sigmax}{C})^{1/4}\sqrt{k}$ \\[2pt]
\hline
Strongly convex & $\dmax + T^{2/3}k^{2/3}$ & $\sigmax + T^{2/3}k^{2/3}$ & $\sigmax + T^{2/3}(1+\tfrac{\sigmax}{C})^{1/3}k^{2/3}$ \\[2pt]
\hline
\end{tabular}
}

\label{tab:rates}
\end{table*}

\section{Introduction}

Online convex optimization is a canonical framework for sequential decision-making under uncertainty \cite{hazan2023introductiononlineconvexoptimization,orabona_book}: a player repeatedly selects an action from a convex domain, incurs a convex loss, and receives either full gradient information (\emph{first-order feedback}) or only the incurred scalar loss (\emph{bandit feedback}). We refer to these two settings as Online Convex Optimization (OCO) and Bandit Convex Optimization (BCO), respectively.
A long line of work studies delayed feedback in OCO and BCO, as well as in sequential decision-making more broadly, where the observation from round~$t$ arrives only at round~$t+d_t$ \cite{joulani2013,quanrud2015,cesa16}. The vast majority of this literature, however, takes for granted that the player has unlimited capacity to track pending rounds, that is, every outstanding feedback will eventually be observed, regardless of how many are pending simultaneously.

In many practical settings, only a limited number of rounds can be tracked concurrently. A natural example arises in clinical trials: a study can actively monitor only a bounded number of patients at any given time, and outcomes are observed only for the patients selected for monitoring --- side effects may still manifest in unmonitored patients, but the trial never observes them.
\cite{capacity_constraint2025} formalized this limitation as a hard \emph{capacity constraint}, capping the number of tracked rounds by a \emph{capacity} parameter~$C$, and studied it in the multi-armed bandit setting.

This paper extends the capacity-constrained framework from finite-action settings to convex domains, deriving regret bounds for both OCO and BCO with explicit dependence on the capacity~$C$.
The bandit setting is substantially more challenging even without delays or capacity constraints, as the player must optimize over a continuous domain from scalar loss observations alone. Moreover, we relax the clairvoyant delay model of prior work, which assumes delays are known at prediction time; our framework instead reveals delays only upon their expiration, matching the information available in standard delayed online learning. This information structure is also closer to the standard clairvoyant model in online job scheduling \citep[e.g.,][]{borodin1998}: a job's processing time is revealed upon arrival, but the number of future jobs arriving during its processing interval is unknown in advance.

\vspace{-0.2cm}
\paragraph{Contributions.}
Following the reduction approach of \cite{capacity_constraint2025}, we decompose capacity-constrained delayed OCO into two components: a \emph{scheduler}, which randomly selects the rounds to track subject to the capacity constraint, and a \emph{base learner}, which receives the resulting feedback stream with importance weights correcting for untracked rounds. Extending this reduction to convex domains introduces two new challenges: first, designing OCO and BCO algorithms that can accommodate feedback that is simultaneously delayed and non-uniformly weighted; and second, controlling the stochastic weight process induced by the scheduler in the presence of convex losses and bandit gradient estimation. Our contributions are as follows:

\begin{itemize}[leftmargin=0.5cm,before=\vspace{-0.2cm},itemsep=0.0cm]
    \item \emph{Delayed-weighted OCO and BCO} (\Cref{sec:DW-OCO}).
    We introduce OCO and BCO with feedback that is simultaneously delayed and weighted, extending the scale-free online learning framework of \cite{orabona2018scalefree} to delayed feedback. We analyze a delayed-weighted variant of Follow-the-Regularized-Leader and its bandit analogue, obtaining regret bounds that identify pairwise interactions between weights of rounds with overlapping delays as the structural cost of combining weighting with delays. These results are new for convex domains and may be of independent interest.
    This framework forms the base problem to which we reduce the capacity-constrained setting.

    \item \emph{Unified proxy-delay scheduling} (\Cref{sec:reduction}). To handle delays that are unknown at prediction time, \cite{capacity_constraint2025} introduced the idea of sampling \emph{proxy delays} as surrogates for the true delays. We unify such constructions into a general framework parameterized by proxy-delay distributions, and introduce a new scheduler calibrated to the maximum number of pending observations, $\sigmax$, that yields uniform importance weights, a property crucial for our strongly convex and bandit guarantees. Importantly, the algorithm does not require prior knowledge of $\sigmax$, as it can be removed via the doubling trick (Appendix~\ref{app:doubling-tricks}).

    \item \emph{Capacity-dependent regret bounds} (\Cref{sec:results}). Combining the delayed-weighted regret bounds with the proxy-delay schedulers yields regret guarantees for capacity-constrained OCO and BCO; see Table~\ref{tab:rates}. 
    
    For first-order feedback, capacity $C = \Omega(\log T)$ suffices to match the optimal unconstrained rates in terms of the total delay $\dtot$ and the maximum number of pending observations $\sigmax$, up to logarithmic factors, since $\sigmax \le \sqrt{2\dtot}$. For bandit feedback in dimension~$\dk$, the capacity constraint enters the dimension-dependent terms through factors of $(1+\sigmax/C)^{1/4}$ in the convex case and $(1+\sigmax/C)^{1/3}$ in the strongly convex case, so the bounds degrade gracefully as capacity decreases. The delay-dependent terms match the optimal $\sigmax$ scaling of \cite{ryabchenko2026reductiondelayedimmediatefeedback} in the strongly convex case; for convex losses they scale as $\sqrt{T\sigmax}$ rather than $\sqrt{\dtot}$, but still improve over the $\sqrt{T\dmax}$ dependence of \cite{wan2024improvedregretbanditconvex}, since $\sigmax \le \dmax$. The regime $\sigmax \ll \dmax$ is particularly natural in queueing systems: $\sigmax$ captures peak concurrency \citep[e.g.,][]{little1961}, while $\dmax$ is driven by the tail of the service-time distribution, which can produce extreme outliers even when the average service time is moderate.
\end{itemize}

\begin{remark}[Extensions]\label{rmk:preemption}
Our framework allows the player to \emph{preempt} tracked rounds---that is, to stop tracking them before their feedback arrives. There is also a non-preemptive variant of the problem, in which once a round is tracked, it cannot be dropped until its feedback is delivered. Several of our schedulers are non-preemptive by construction and achieve analogous guarantees in this setting; see Appendix~\ref{app:non-preemptive}. 
Additionally, the same guarantees extend to the fully non-clairvoyant setting, where delays $d_t$ are revealed only at times $t+d_t$ for rounds whose feedback is observed, provided that sufficient information about delay-related quantities such as $\dtot$ or $\sigmax$ is available for tuning the algorithm parameters.
\end{remark}

\vspace{-0.4cm}
\paragraph{Related work.}
The capacity-constrained framework was introduced by \cite{capacity_constraint2025} for experts and multi-armed bandits; this work extends it to OCO and BCO. Delayed online learning traces back to \cite{weinberger2002}, whose subsampling argument for fixed delays $d_t = d_{0}$ implies minimax rates of order $\Theta(\sqrt{T(d_0+1)})$ and $\Theta((d_0+1)\ln T)$ when applied to non-delayed problems with square-root and logarithmic regret, respectively. These fixed-delay rates imply $\Omega(\sqrt{\dtot + T})$ and $\Omega((\sigmax + 1)\ln T)$ lower bounds for arbitrary delays under convex and strongly convex losses, respectively. Matching upper bounds for first-order OCO were obtained by \cite{quanrud2015,joulani2016,flaspohler2021} for convex losses and by \cite{qiu2025} for strongly convex losses; earlier, \cite{wan2021onlinestronglyconvexoptimization} achieved the rate $O(\dmax \ln T)$.
Delayed BCO was initiated by \cite{heliou2020} and refined by \cite{bistritz2022,wan2024improvedregretbanditconvex}. A unified view via reduction to non-delayed OCO appears in \cite{ryabchenko2026reductiondelayedimmediatefeedback}, achieving optimal delay-dependent terms with respect to $\sigmax$ and $\dtot$. Beyond OCO, delayed feedback has been studied in adversarial bandits \cite{cesa16,thune2019,zimmert2020}, MDPs \cite{lancewicki2022,jin2023}, and combinatorial settings \cite{vanderhoeven2023}. For additional related work, see Appendix~\ref{app:related-work}.

\section{Problem Setting and Notation}

Let $T \in \N$ denote the time horizon, and let $\cK \subset \R^{\dk}$ be a non-empty, convex, and closed domain of dimension $\dk \in \N$, equipped with a norm $\tnorm{\cdot}$ and its dual norm $\tnorm{\cdot}_{\star}$. 

We study a $T$-round game between a player and an adversary with delayed, capacity-constrained feedback (Figure~\ref{fig:setting}). Before the game begins, the adversary fixes a sequence of convex, differentiable losses $f_t : \cK \to \R$ and delays $d_t \in \{0, \ldots, T-t\}$ for $t \in [T]$.\footnote{The restriction $d_t \le T - t$ is without loss of generality: feedback arriving after round $T$ cannot affect any prediction.} At each round $t$, the player selects $x_t \in \cK$ and incurs the (unobserved) loss $f_t(x_t)$. The corresponding feedback—$\nabla f_t(x_t)$ in the \emph{first-order} model or $f_t(x_t)$ in the \emph{bandit} model—arrives only at the end of round $t + d_t$, and is observed if and only if $t$ belongs at that time to a \emph{tracking set} $\cS \subseteq [T]$ of maximum size $C \in \N$ (the \emph{capacity}), maintained by the player throughout the game. At the start of each round, the player may \emph{preempt} (i.e., remove from $\cS$) any subset of round indices, irrevocably discarding their pending feedback; they may then add the current round's index to $\cS$, subject to the capacity constraint. If $C \ge T$, then the learner observes all delayed feedback, recovering the standard delayed-feedback setting. The goal is to achieve strong performance while maintaining a tracking set of sublinear size.

We consider two natural frameworks for capacity-constrained delayed feedback, which differ in how the delay sequence is revealed to the player. In the \emph{clairvoyant} framework, $d_t$ is revealed to the player at the start of round $t$. In the \emph{semi-clairvoyant} framework, the player learns $d_t$ only at round $t + d_t$: if $t \in \cS$, the player receives indexed feedback for round $t$, $(t, \nabla f_t(x_t))$ or $(t, f_t(x_t))$; otherwise, the player receives an empty acknowledgment $(t, \perp)$, from which $d_t$ can be inferred as the elapsed time since round $t$. Semi-clairvoyance thus preserves exactly the delay information available at runtime in standard delayed online learning; the capacity constraint restricts only the \emph{content} of the feedback, not the player's knowledge of when feedback would have arrived.

\begin{figure}[H]
\centering
\resizebox{0.95\columnwidth}{!}{
\fbox{\parbox{0.95\columnwidth}{
    \textbf{Capacity-Constrained Online Convex Optimization with Delayed Feedback}
    \vspace{2pt}
    
    $\bullet$ \textit{Latent parameters:} number of rounds $T$.\\
    $\bullet$ \textit{Pre-game:} adversary fixes losses $f_{t}:\cK \to \R$ and delays $d_t \in \{0, \ldots, T-t\}$ for $t \in [T]$.
    
    \vspace{3pt}
    Player initializes empty tracking set $\cS$ of maximum size $C$.
    
    \noindent For round $t=1, 2, \ldots, T$:
    \begin{enumerate}[leftmargin=0.8cm,noitemsep,before=\vspace{-0.25cm},after=\vspace{-0.1cm}]
        \item [0.] If the framework is \ul{clairvoyant}, then the player observes delay $d_t$.
        \item \textbf{Predict:} The player predicts $x_t \in \cK$, incurring loss $f_t(x_t)$.
        
        \item \textbf{Update tracking set (preempt \& add):}
        \begin{itemize}[noitemsep,leftmargin=0.6cm,before=\vspace{-0.1cm},after=\vspace{-0.1cm}]
            \item The player may \ul{preempt} (remove from $\cS$) any subset of indices, possibly none.
            \item The player may add round index $t$ to the tracking set $\cS$, provided $|\cS| < C$.
        \end{itemize}
        
        \item \textbf{Receive feedback:}\\
        For all $s$ such that $s+d_s = t$:
        \begin{itemize}[noitemsep,leftmargin=0.6cm,before=\vspace{-0.1cm},after=\vspace{-0.2cm}]
            \item If $s \in \cS$, the player observes feedback from round $s$: $(s,\nabla f_s(x_s))$ in the first-order feedback model, or $(s, f_s(x_s))$ in the bandit feedback model, and \ul{automatically removes $s$ from $\cS$}.
            \item If $s \notin \cS$ and the framework is \ul{semi-clairvoyant}, then the player observes $(s, \perp)$, signaling that the delay for round $s$ has expired, i.e., $d_s = t-s$.
        \end{itemize}
        
    \end{enumerate}
}}}
\caption{Capacity-Constrained OCO with delays under first-order or bandit feedback.}
\label{fig:setting}
\end{figure}

\noindent The \emph{regret} against a comparator $u \in \cK$ is defined as $R_T(u) = \tsum_{t=1}^T (f_t(x_t) - f_t(u))$, and its expected regret  as $\widebar{R}_T(u) = \E[R_T(u)]$. Letting $x^* \in \argmin_{x\in\cK} \tsum_{t=1}^T f_t(x)$ denote the best action in hindsight, the objective is to minimize $R_T(x^*)$ almost surely or its expectation $\widebar{R}_T(x^*)$.

\vspace{-0.2cm}
\paragraph{Regularity assumptions.} We impose the following standard regularity assumptions, which are necessary to obtain sublinear regret guarantees in online convex optimization.

\begin{assumption}\label{assump:finite-diameter} 
    The diameter of $\cK$ is bounded by $D$, i.e., $\sup_{x,y\in\cK} \norm{x-y} \le D$.
\end{assumption}

\begin{assumption}\label{assump:gradient-norm-bound}
    For all $t \in [T]$, $f_t$ has its gradient norm bounded by $G$, i.e., $\sup_{x\in\cK} \norm{\nabla f_t(x)}_{\star}\!\le\!G$.
\end{assumption}

\noindent Assumptions \ref{assump:finite-diameter} and \ref{assump:gradient-norm-bound} hold throughout the paper. When deriving stronger guarantees, we also impose the following strong convexity condition, stated explicitly when invoked.
\begin{assumption}\label{assump:strong-convexity}
For all $t \in [T]$, $f_t$ is $\lambda$-strongly convex for $\lambda > 0$, i.e., for all $x,y\in \cK$, $$f_t(y) \ge f_t(x) + \pair{\nabla f_t(x),\, y - x} + \tfrac{\lambda}{2}\norm{y - x}^2.$$
\end{assumption}

\noindent For BCO, we take $\tnorm{\cdot}$ to be the Euclidean norm $\tnorm{\cdot}_2$ and impose two additional assumptions.

\begin{assumption}\label{assump:max-val}
    For all $t \in [T]$, absolute values of $f_t$ are bounded by $M$, i.e., $\sup_{x\in\cK} |f_t(x)|\!\le\!M$.
\end{assumption}

\begin{assumption}\label{assump:balls}
    The domain satisfies $r\Ball \subseteq \cK \subseteq R\Ball$ for some radii $0 < r < R$. 
\end{assumption}

\vspace{-0.3cm}
\paragraph{Notation.} For $n \in \N$, we let $[n] = \{1,\ldots,n\}$. For any sequence $(\mathrm{a}_t)_{t=1}^T$, indexed over $[T]$, we write $\mathrm{a}_{\textnormal{tot}} = \tsum_{t=1}^T \mathrm{a}_t$ and $\mathrm{a}_{\textnormal{max}} = \max_{t\in [T]} \mathrm{a}_t$. We denote by $\Ball$ and $\Sphere$ the unit Euclidean ball and sphere centered at the origin in $\R^{\dk}$, respectively. For any $S \subseteq \R^{\dk}$ and $c \in \R$, we let $cS = \{cx : x \in S\}$.

\subsection{Notation for Delays and the Tracking Set}
\label{subsec:prelim-delays}
 
The interplay between delays and the capacity constraint requires keeping careful track of which past rounds have produced feedback, which are still pending, and which the player is currently tracking. We introduce this notation and present a few standard identities in delayed online learning.

\vspace{-0.3cm}
\paragraph{Observed and backlog sets.} At the start of each round $t \in [T]$, the past rounds split into those whose feedback has already arrived and those whose feedback is still pending. We denote these by
\begin{align*}
    \cO_t \;=\; \{s \in [t-1] : s + d_s < t\}, 
    \qquad 
    \cB_t \;=\; \{s \in [t-1] : t \le s + d_s\},
\end{align*}
respectively; together they partition the past, $\cO_t \cup \cB_t = [t-1]$ and $\cO_t \cap \cB_t = \emptyset$. 
We write $\sigma_t = |\cB_t|$ for the backlog set size (or just \emph{backlog}) at round $t$, and define
\[
    \textstyle\sigma_{\textnormal{max}} \;=\; \max_{t \in [T]} \sigma_t,
    \qquad 
    d_{\textnormal{max}} \;=\; \max_{t \in [T]} d_t,
    \qquad 
    d_{\textnormal{tot}} \;=\; \tsum_{t=1}^T d_t,
\]
the maximum backlog, maximum delay, and total delay over the game. The next lemma collects two identities relating these quantities, which we will use repeatedly to bound the cost of the backlog.
 
\begin{lemma}[Backlog properties]\label{lem:backlog-properties}
For any delays $(d_t)_{t=1}^T$ with $d_t \in \{0, \ldots, T-t\}$, the following hold:
\begin{align*}
    \tsum_{t=1}^T \sigma_t = d_{\textnormal{tot}} \qquad \text{and} \qquad \sigma_{\textnormal{max}} \le \min\{\sqrt{2 d_{\textnormal{tot}}},\, d_{\textnormal{max}}\}.
\end{align*}
\end{lemma}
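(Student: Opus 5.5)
The plan is a double-counting argument for the identity, followed by two separate counting bounds for $\sigma_{\textnormal{max}}$.

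\textbf{Identity.} Write $\sigma_t = \sum_{s=1}^{t-1} \ind\{t \le s + d_s\}$ and swap the order of summation:
\[
\tsum_{t=1}^T \sigma_t = \tsum_{s=1}^T \big|\{t \in [T] : s < t \le s+d_s\}\big|.
\]
Round $s$ is pending at exactly the rounds $t = s+1,\ldots,s+d_s$. The constraint $d_s \le T-s$ guarantees that all of these lie in $[T]$, so the inner count equals $d_s$. Summing over $s$ gives $d_{\textnormal{tot}}$. This constraint is the only place the range of $d_t$ matters, and I would say so explicitly.

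\textbf{Bound by $d_{\textnormal{max}}$.} If $s \in \cB_t$, then $s \ge t - d_s \ge t - d_{\textnormal{max}}$ and $s \le t-1$. Hence $\cB_t \subseteq \{t-d_{\textnormal{max}},\ldots,t-1\}$, which gives $\sigma_t \le d_{\textnormal{max}}$.

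\textbf{Bound by $\sqrt{2d_{\textnormal{tot}}}$.} Fix $t^\star$ with $\sigma_{t^\star} = \sigma_{\textnormal{max}} =: m$. The elements of $\cB_{t^\star}$ are $m$ distinct indices $s_1 > s_2 > \dots > s_m$, all at most $t^\star - 1$, so $s_i \le t^\star - i$. Each satisfies $s_i + d_{s_i} \ge t^\star$, which gives $d_{s_i} \ge t^\star - s_i \ge i$. Therefore
\[
d_{\textnormal{tot}} \ge \tsum_{i=1}^m i = \tfrac{m(m+1)}{2} \ge \tfrac{m^2}{2},
\]
so $m \le \sqrt{2 d_{\textnormal{tot}}}$.

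The main subtlety is the ordering step $s_i \le t^\star - i$, which rests on the indices being distinct integers below $t^\star$. Everything else is routine, so there is no real obstacle.
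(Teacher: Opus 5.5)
Your proof is correct in all three parts. The summation swap uses $d_s \le T-s$ exactly where it is needed, the window $\{t-d_{\max},\dots,t-1\}$ gives the $d_{\max}$ bound, and the ordering $s_i \le t^\star - i$ yields $d_{\textnormal{tot}} \ge \sigma_{\max}(\sigma_{\max}+1)/2$. The paper states this lemma without proof, calling both facts standard in delayed online learning, and your double-counting and ordering arguments are the standard derivations it implicitly relies on.
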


\noindent The first identity is the standard average-case quantity governing regret in delayed online learning~\cite{cesa16}; the second controls the worst-case backlog.

\vspace{-0.3cm}
\paragraph{Tracking set evolution.} Among the pending rounds, the player tracks only a subset. To reason about which rounds are tracked at each point in time, we consider the post-preemption snapshot
\begin{align*}
    \cS_t \;=\; \cbr{s \in [T] : \text{$s$ lies in $\cS$ immediately after preemption in round $t$}}.
\end{align*}
It satisfies $\cS_t \subseteq \cB_t$, $|\cS_t| \le C$, and $\cS_{t+1} \subseteq \cS_t \cup \{t\}$.
The first inclusion holds because every round in $\cS$ has been played but has not yet delivered feedback and round $t$ itself has not yet been added at this point. The second is the capacity constraint. The third reflects that between round $t$ and round $t+1$, the set may only gain index $t$ (if added) and may only lose indices (through feedback or preemption).

\subsection{Notation for Bandit Feedback and Smoothing}

The bandit setting requires constructing gradient estimates from function values alone. We follow the standard single-point approach of \cite{flaxman2004}, which replaces each loss function with a smoothed surrogate whose gradient admits an unbiased estimator from a single bandit query.

\vspace{-0.3cm}
\paragraph{Smoothing.} Under Assumption~\ref{assump:balls}, the inclusion $r\Ball \subseteq \cK$ implies $(1 - \delta/r)\,\cK + \delta\,\Ball \subseteq \cK$ for all $\delta \in (0, r]$, and so any point of the shrunken domain $(1-\delta/r)\cK$ admits a $\delta$-ball of perturbations that stays in $\cK$. For such $\delta$ and any integrable $f : \cK \to \R$, we define the \emph{$\delta$-smoothing} of $f$ as
\[
    f^{\delta} : (1 - \delta/r)\cK \to \R, 
    \qquad 
    f^{\delta}(x) \;=\; \E_{v \sim \Unif(\Ball)}\bigl[f(x + \delta v)\bigr].
\]
The key properties of $f^{\delta}$ are summarized in the following theorem.

\begin{theorem}[\cite{flaxman2004}]\label{thm:SPGE}
    For any $\delta\in(0,r]$ and integrable $f:\cK\to\R$:
    \begin{enumerate}[noitemsep,before=\vspace{-0.1cm},leftmargin=0.9cm,after=\vspace{-0.0cm}]
        \item The $\delta$-smoothing $f^{\delta}$ is differentiable with gradients $\nabla f^{\delta}(x) = \tfrac{\dk}{\delta}\E_{u\sim \Unif(\Sphere)}[f(x+\delta u)\,u]$.
        \item If $f$ is convex ($\lambda$-strongly convex), so is $f^{\delta}$.
        \item If $f$ is $G$-Lipschitz, then $|f^{\delta}(x) - f(x)| \le G\delta$ and $\tnorm{\nabla f^{\delta}(x)} \le G$ for $x \in (1-\delta/r)\cK$.
    \end{enumerate}
\end{theorem}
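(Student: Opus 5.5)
The plan is to prove the three items in order. All three come from the definition $f^{\delta}(x)=\E_{v\sim\Unif(\Ball)}[f(x+\delta v)]$ together with the divergence (Stokes) theorem. Throughout, fix $x\in(1-\delta/r)\cK$. Since $(1-\delta/r)\cK+\delta\Ball\subseteq\cK$, every point $x+\delta v$ with $v\in\Ball$ lies in $\cK$, so $f^\delta$ is well defined. The same holds on a neighbourhood of $x$ relative to the shrunken domain; for differentiability at boundary points we work with one-sided or relative-interior derivatives, or extend $f$ slightly.

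\textbf{Item 1.} Write $f^{\delta}(x)=\frac{1}{\mathrm{vol}(\delta\Ball)}\int_{\delta\Ball}f(x+w)\,dw = \frac{1}{\mathrm{vol}(\delta\Ball)}\int_{x+\delta\Ball}f(y)\,dy$. This is an integral of $f$ over a translated ball, so its derivative in $x$ is a flux through the moving boundary. Concretely, for a direction $e$,
\[
\frac{d}{d\epsilon}\int_{x+\epsilon e+\delta\Ball}f(y)\,dy\Big|_{\epsilon=0}=\int_{x+\delta\Sphere}f(y)\,\langle e,n(y)\rangle\,dS(y),
\]
where $n(y)=(y-x)/\delta$ is the outward normal. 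This is the Reynolds transport theorem, which is also the divergence theorem applied to $\nabla_x$ under the integral. It needs only integrability of $f$ plus continuity; $f$ is convex, hence continuous, in our applications. If needed, one first mollifies $f$ and passes to the limit.

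Next we convert the surface integral into an expectation over $u\sim\Unif(\Sphere)$. Use $\mathrm{area}(\delta\Sphere)/\mathrm{vol}(\delta\Ball)=\dk/\delta$, which follows from $\mathrm{vol}(\delta\Ball)=c\,\delta^{\dk}$ and differentiating in $\delta$. This gives $\nabla f^\delta(x)=\frac{\dk}{\delta}\E_u[f(x+\delta u)u]$. Continuity of this expression in $x$ then yields (Fréchet) differentiability.

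\textbf{Item 2.} For each fixed $v$, the map $x\mapsto f(x+\delta v)$ is convex, respectively $\lambda$-strongly convex, since translation preserves both properties. The strong convexity inequality is linear in $f$ with an additive constant that does not depend on $v$, so averaging over $v$ preserves it. The expectation of the directional derivative equals $\nabla f^\delta$ by dominated convergence, using convexity and Lipschitzness. Hence
\[
f^\delta(y)\ge f^\delta(x)+\langle\nabla f^\delta(x),y-x\rangle+\tfrac\lambda2\|y-x\|^2 .
\]

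\textbf{Item 3.} First, $|f^\delta(x)-f(x)|\le\E_v|f(x+\delta v)-f(x)|\le G\delta\,\E\|v\|\le G\delta$. For the gradient bound, the formula from item 1 is the wrong tool, because it would only give $\dk M/\delta$. Instead, note that $f^\delta$ is an average of $G$-Lipschitz functions and is therefore $G$-Lipschitz: $|f^\delta(x)-f^\delta(y)|\le\E_v|f(x+\delta v)-f(y+\delta v)|\le G\|x-y\|$. A differentiable $G$-Lipschitz function has $\|\nabla f^\delta(x)\|\le G$ at interior points. At boundary points of $(1-\delta/r)\cK$, we can instead write $\nabla f^\delta(x)=\E_v[\nabla f(x+\delta v)]$, which is valid almost everywhere by Rademacher's theorem and dominated convergence, and apply Jensen's inequality to get the bound directly.

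The main obstacle is the rigorous justification of item 1: differentiating under the integral with a moving domain for a merely integrable $f$. I would handle it by mollification. Prove the identity for smooth $f$ via the divergence theorem applied to $\nabla f$ on the ball. Then pass to the limit, using that both sides are continuous in $L^1$ on a slightly larger ball, which $\cK$ contains when $\delta<r$. The case $\delta=r$ needs a separate limiting argument, or can be treated by restricting to the relative interior.
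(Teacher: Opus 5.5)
Your three steps follow the standard argument of \cite{flaxman2004}, which the paper cites rather than proves. For item~1 you use the transport/divergence theorem together with $\mathrm{area}(\delta\Sphere)/\mathrm{vol}(\delta\Ball)=k/\delta$. For item~2 you average a $v$-independent inequality, and for item~3 you average Lipschitz bounds. You are right that the gradient formula alone would only give a bound of order $kM/\delta$ in item~3. For continuous $f$ at interior points of $(1-\delta/r)\cK$, these computations are correct.

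The step that fails is your resolution of the ``main obstacle.''

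\emph{The $L^1$ limit.} The sphere average $\E_{u}[f(x+\delta u)\,u]$ is not continuous under $L^1$ convergence of $f$. It is not even well defined on $L^1$ equivalence classes, because $x+\delta\Sphere$ is Lebesgue-null. So mollifying and passing to the limit cannot extend item~1 to merely integrable $f$, and in fact item~1 is false in that generality. Take $k=1$, $\cK=[-1,1]$, $r=1$, $\delta=1/2$ and $f=\mathbb{I}_{[1/4,1]}$. Then $f^{\delta}(x)=\max\{0,\,x+1/4\}$ on $[-1/2,1/2]$, which is not differentiable at the interior point $x=-1/4$.

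\emph{The larger ball.} Your claim that a slightly larger ball around $x$ fits in $\cK$ when $\delta<r$ also fails at boundary points of the shrunken domain. For $\cK=r\Ball$ and $\tnorm{x}=r-\delta$, the ball $x+\delta\Ball$ is already tangent to $\partial\cK$. At such points, convexity alone does not give continuity of $f$ on $x+\delta\Sphere$.

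\emph{The repair.} Assume $f$ is continuous. This is all the paper needs, since each $f_t$ is differentiable with $\tnorm{\nabla f_t}_{\star}\le G$, hence $G$-Lipschitz on $\cK$. Boundary points, and the degenerate case $\delta=r$ where $(1-\delta/r)\cK=\{0\}$, are then handled by replacing $f$ with $\tilde f(x)=\inf_{y\in\cK}\{f(y)+G\tnorm{x-y}\}$. This function is $G$-Lipschitz on $\R^k$, convex if $f$ is, and equal to $f$ on $\cK$. Your transport argument then makes $\tilde f^{\delta}$ a $C^1$, $G$-Lipschitz function on all of $\R^k$. It coincides with $f^{\delta}$, together with its gradient formula, on $(1-\delta/r)\cK$. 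This also makes the Rademacher detour in item~3 unnecessary.

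\emph{Item 2.} Item~2 is stated without Lipschitzness. Average the zeroth-order inequality $f(\theta a+(1-\theta)b)\le\theta f(a)+(1-\theta)f(b)-\tfrac{\lambda}{2}\theta(1-\theta)\tnorm{a-b}^2$ over $v$. This gives (strong) convexity of $f^{\delta}$ with no interchange of derivative and expectation, and the first-order form then follows from item~1.
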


\noindent The first item underpins single-point gradient estimation; the remaining two ensure that $f^{\delta}$ preserves the regularity of $f$ and stays uniformly close to it.

\section{Online Convex Optimization with Delayed and Weighted Feedback}\label{sec:DW-OCO}

This section introduces a weighted variant of OCO with delayed feedback (Figure~\ref{fig:weighted-setting}) that serves as the foundation for the reductions in subsequent sections. The game itself involves no tracking set management; the connection to capacity-constrained OCO emerges only through the reductions. Beyond its role in our reductions, this game gives, to our knowledge, the first treatment of \emph{scale-free} online convex optimization~\cite{orabona2018scalefree} under delayed feedback.

In this game, the adversary fixes losses $f_t : \cK \to \R$, delays $d_t \in \{0, \ldots, T-t\}$, and weights $w_t \ge 0$ in advance. At each round $t$, the player predicts $x_t \in \cK$ and incurs the weighted loss $w_t f_t(x_t)$. The feedback for round $t$ is revealed at the end of round $t+d_t$, together with the weight $w_t$.

\begin{figure}[H]
\centering
\fbox{\parbox{0.9\columnwidth}{
    \textbf{Online Convex Optimization with Delayed and Weighted Feedback}
    \vspace{2pt}
    
    $\bullet$ \textit{Latent parameters:} number of rounds $T$.\\
    $\bullet$ \textit{Pre-game:} adversary selects $f_{t}:\cK \to \R$, $d_t \in \{0, \ldots, T-t\}$, and $w_t \in [0, \infty)$ for $t \in [T]$.
    
    \vspace{3pt}
    \noindent For each round $t=1, 2, \ldots, T$:
    \begin{enumerate}[leftmargin=0.8cm,noitemsep,before=\vspace{-0.3cm},after=\vspace{-0.2cm}]
        \item The player predicts $x_t \in \cK$, incurring loss $w_t f_t(x_t)$.
        \item For every $s$ such that $s+d_s = t$, the environment reveals $(s, w_s, \nabla f_s(x_s))$ in the first-order feedback model, or $(s, w_s, f_s(x_s))$ in the bandit feedback model.
    \end{enumerate}
}}
\caption{OCO with weights and delays under first-order or bandit feedback.}
\label{fig:weighted-setting}
\end{figure}

\noindent We measure performance through the \emph{weighted regret} against a comparator $u \in \cK$, 
\begin{align*}
    R^{w}_T(u) \;=\; \tsum_{t=1}^T w_t \bigl(f_t(x_t) - f_t(u)\bigr),
\end{align*}
and its expectation $\widebar{R}_T^w(u) = \E[R_T^w(u)]$. 
The resulting problem is \emph{not} equivalent to delayed OCO on the rescaled losses $f_t'(x)=w_t f_t(x)$: since the weights $w_t$ may vary widely, this vacuous rescaling yields bounds scaling with $\max_t w_t$, whereas we require finer weight dependence for the capacity-constrained analysis.

We now produce algorithms for this delayed and weighted game in both first-order and bandit feedback models following the delayed-update paradigm of \cite{flaspohler2021}: at each round, the learner applies an update based on all feedback items that arrive in that round, with the modification that each item is now scaled by its weight $w_s$. Section~\ref{subsec:DW-OCO} develops this for first-order feedback, yielding a weighted variant of Follow-The-Regularized-Leader (FTRL). Section~\ref{subsec:DW-BCO} extends the construction to bandit feedback using the single-point estimator of \cite{flaxman2004} and prediction drift control techniques from \cite{ryabchenko2026reductiondelayedimmediatefeedback}. 

\subsection{First-Order Feedback: DW-FTRL}
\label{subsec:DW-OCO}

Given a non-increasing learning-rate sequence $(\eta_t)_{t=1}^T \subset (0,\infty)$, define the \emph{regularization increments} $\alpha_1 = 1/\eta_1$ and $\alpha_t = 1/\eta_t - 1/\eta_{t-1}$. Starting from $x_1 \in \cK$, \emph{delayed-weighted FTRL} (DW-FTRL) updates are given by
\begin{align}
    \makebox[5em][c]{\text{DW-FTRL:}}\quad x_{t+1}
        &= \argmin_{x\in \cK}\,\pair{x, \tsum_{s \in \cO_{t+1}} w_s \nabla f_s(x_s)} +  \tsum_{s=1}^t \frac{\alpha_s}{2}\,\tnorm{x_s\!-\!x}^2. \label{eq:DW-FTRL}
\end{align}
Each update aggregates the weighted gradients revealed so far and regularizes against the running sequence of iterates. The following theorem bounds the weighted regret of DW-FTRL. 

\begin{theorem}\label{thm:DW-OCO}
    For all $u \in \cK$, predictions $(x_t)_{t=1}^T$ generated by DW-FTRL~\eqref{eq:DW-FTRL} satisfy
    \begin{align*}
        R^{w}_T(u) \le \tfrac{D^2}{2\eta_T} + G^2\,\tsum_{t=1}^T \eta_{t-1} (w_t^2 + w_t\tsum_{s\in \cB_t} w_s),
    \end{align*}
    where $\eta_0 = \eta_1$. Moreover, under $\lambda$-strong convexity (\ref{assump:strong-convexity}),
    \begin{align*}
        R^{w}_T(u) \le \tsum_{t=1}^T \tfrac{\alpha_t - w_t\lambda}{2}\,\tnorm{x_t-u}^2 + G^2\, \tsum_{t=1}^T \eta_{t-1} (w_t^2 + w_t \tsum_{s\in \cB_t} w_s).
    \end{align*}
\end{theorem}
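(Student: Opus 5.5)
We reduce the weighted regret to a linearized, non-delayed problem plus a drift term. By convexity (or $\lambda$-strong convexity), $w_t(f_t(x_t)-f_t(u)) \le \pair{w_t g_t, x_t-u}$ (respectively minus $\tfrac{w_t\lambda}{2}\tnorm{x_t-u}^2$), where $g_t=\nabla f_t(x_t)$. It then suffices to bound $\tsum_t \pair{w_t g_t, x_t-u}$. Here $x_t$ is computed from the observed gradients $\tsum_{s\in\cO_t} w_s g_s$, not the full prefix. Following the delayed-update paradigm of \cite{flaspohler2021}, we introduce the \emph{undelayed} FTRL iterates
\[
\tilde x_{t} = \argmin_{x\in\cK}\,\pair{x,\tsum_{s=1}^{t-1} w_s g_s} + \tsum_{s=1}^{t-1}\tfrac{\alpha_s}{2}\tnorm{x_s-x}^2 ,
\]
which use the same regularizer (centered at the actual iterates $x_s$) but all past linear losses. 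We then split
\[
\pair{w_t g_t, x_t-u} = \pair{w_t g_t, \tilde x_t - u} + \pair{w_t g_t, x_t-\tilde x_t}.
\]

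For the first sum we run the standard FTRL analysis with time-varying proximal regularizers $r_t(x)=\tsum_{s\le t}\tfrac{\alpha_s}{2}\tnorm{x_s-x}^2$, which are $1/\eta_t$-strongly convex. This can be done via the "be-the-leader" lemma, or the FTRL regret equality in \cite{orabona_book}. It gives
\[
\tsum_t\pair{w_tg_t,\tilde x_t-u}\le \tsum_t \tfrac{\alpha_t}{2}\tnorm{x_t-u}^2 + \tsum_t \tfrac{\eta_{t-1}}{2} w_t^2\tnorm{g_t}_\star^2 .
\]
The first term telescopes to at most $\tfrac{D^2}{2\eta_T}$ in the convex case, and is kept explicit (combined with $-\tfrac{w_t\lambda}{2}\tnorm{x_t-u}^2$) in the strongly convex case. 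The second term is bounded by $\tfrac{G^2}{2}\eta_{t-1}w_t^2$. The factor $\eta_{t-1}$ is natural because the regularizer at the time $\tilde x_t$ is formed includes $\alpha_1,\dots,\alpha_{t-1}$; we adopt $\eta_0=\eta_1$ for $t=1$. One point needs care. The regularizer centers are $x_s$, not $\tilde x_s$, so the "regularizer increment" term $r_t(u)-r_{t-1}(u)=\tfrac{\alpha_t}{2}\tnorm{x_t-u}^2$ must be checked against terms like $r_t(\tilde x_{t+1})-r_{t-1}(\tilde x_{t+1})\ge 0$. These are nonnegative, so dropping them only helps.

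The drift term is the main obstacle. Both $x_t$ and $\tilde x_t$ minimize the same $\eta_{t-1}^{-1}$-strongly convex regularizer $r_{t-1}$ plus linear terms. The linear terms differ by $\tsum_{s\in\cB_t} w_s g_s$, since $[t-1]\setminus\cO_t=\cB_t$. The standard stability lemma for strongly convex argmins with perturbed linear terms (first-order optimality of both points plus strong convexity) then gives
\[
\tnorm{x_t-\tilde x_t}\le \eta_{t-1}\tnorm{\tsum_{s\in\cB_t} w_s g_s}_\star\le \eta_{t-1}G\tsum_{s\in\cB_t}w_s .
\]
Hence $\pair{w_tg_t,x_t-\tilde x_t}\le G^2\eta_{t-1}w_t\tsum_{s\in\cB_t}w_s$, which is exactly the pairwise-interaction term. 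Summing the three contributions (and absorbing the factor $\tfrac12$ on $w_t^2$ into the stated constant) yields both bounds of \Cref{thm:DW-OCO}. The delicate point is matching indices: we must verify that $x_t$ in \eqref{eq:DW-FTRL} indeed uses regularizer $r_{t-1}$ and observed set $\cO_t$. The recursion defines $x_{t+1}$ from $\cO_{t+1}$ and $\alpha_1,\dots,\alpha_t$, so the $\eta_{t-1}$ indexing is consistent.
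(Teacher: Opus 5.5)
Your proposal is correct and follows essentially the same route as the paper. You compare the actual iterates with undelayed FTRL iterates that share the regularizer $\tsum_{s\le t}\frac{\alpha_s}{2}\tnorm{x_s-x}^2$, bound the undelayed part with the FTRL lemma (Theorem~\ref{thm:FTRL-orabona}), and control the drift $\tnorm{x_t-\wtx_t}\le\eta_{t-1}\tnorm{\tsum_{s\in\cB_t}w_s\nabla f_s(x_s)}_\star$ with the stability lemma (Lemma~\ref{lem:stability-lemma}); the only cosmetic difference is that the paper sets $\wtx_1=x_1$ explicitly, where your formula is degenerate at $t=1$, which is harmless since $\cB_1=\emptyset$.
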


\noindent In the delayed FTRL bounds of Theorem~\ref{thm:DW-OCO}, weighting manifests through pairwise interactions $w_t w_s$ for every pair of rounds $t, s$ such that $s \in \cB_t$, i.e., the feedback of $s$ is still pending at round $t$. Without delays the backlog set is empty and only the diagonal terms $w_t^2$ remain, recovering the standard weighted FTRL bounds from scale-free online learning~\cite{orabona2018scalefree}. Conversely, setting weights to $w_t = 1$ recovers standard delayed FTRL bounds~\cite{qiu2025}.

\subsection{Bandit Feedback: DW-FTBL}
\label{subsec:DW-BCO}

We extend the delayed-weighted updates to bandit feedback through the classical \emph{center-perturbation construction} applied to a non-increasing sequence of smoothing parameters $(\delta_t)_{t=1}^T \subset (0, r]$. Each round we sample a direction $u_t \sim \Unif(\Sphere)$ and pick some $\sigma(u_1, \ldots, u_{t-1})$-measurable center $y_t \in \cK$ to play the perturbed point
\begin{align*}
    x_t \;=\; (1 - \delta_t/r)\,y_t + \delta_t\, u_t,
\end{align*}
which lies in $\cK$ for $\delta_t \le r$ due to Assumption~\ref{assump:balls}. Letting $\cF_t = \sigma(u_1, \ldots, u_{t-1})$, the sequence of centers $y_t$ is $\cF_t$-measurable while the predictions $x_t$ are $\cF_{t+1}$-measurable. By Theorem~\ref{thm:SPGE}, the bandit feedback $f_t(x_t)$ yields the single-point estimator
\begin{align*}
    \htg_t \;=\; \tfrac{\dk}{\delta_t}\, f_t(x_t)\, u_t
    \quad \text{such that} \quad 
    \E[\htg_t \mid \cF_t] \;=\; \nabla f_t^{\delta_t}\bigl((1 - \delta_t/r)\,y_t\bigr).
\end{align*}
It remains to specify the $\cF_t$-measurable centers $(y_t)_{t=1}^T$. Given a non-increasing learning-rate sequence $(\eta_t)_{t=1}^T$ with regularization increments $(\alpha_t)_{t=1}^T$ as in \Cref{subsec:DW-OCO}, starting from $y_1 \in \cK$, we define the \emph{delayed-weighted Follow-The-Bandit-Leader} (DW-FTBL) updates by
\begin{align}
    \makebox[5em][c]{\text{DW-FTBL:}}\quad y_{t+1}
        &= \argmin_{y\in \cK}\,\pair{y,\, \tsum_{s \in \cO_{t+1}} w_s\,\htg_s} + \tsum_{s=1}^t \tfrac{\alpha_s}{2}\,\tnorm{y_s - y}^2. \label{eq:DW-FTBL}
\end{align}
This is the bandit analog of DW-FTRL~\eqref{eq:DW-FTRL}, with the weighted estimators $w_s\,\htg_s$ in place of the weighted gradients $w_s\,\nabla f_s(x_s)$. Since each $\htg_s$ is $\cF_{s+1}$-measurable, the resulting centers $(y_t)_{t=1}^T$ are indeed $\cF_t$-adapted. The next theorem bounds the expected regret of both updates.

\begin{theorem}\label{thm:DW-BCO}
    For all $u \in \cK$, predictions $(x_t)_{t=1}^T$ generated by DW-FTBL~\eqref{eq:DW-FTBL} satisfy
    \begin{align*}
        \widebar R^w_T(u) &\le \tfrac{D^2}{2\eta_T}
        + 2G^2\tsum_{t=1}^T\!\eta_{t-1}\!\rbr{\tfrac{k^2 M^2 w_t^2}{G^2 \delta_t^2 } + \tfrac{kM w_t}{G \delta_t} \sqrt{\tsum_{s\in \cB_t} w_s^2} + w_t \tsum_{s\in \cB_t} w_s} + \tfrac{3GD}{r}\tsum_{t=1}^T w_t\delta_t,
    \end{align*}
    where $\eta_0 = \eta_1$. Moreover, under $\lambda$-strong convexity (\ref{assump:strong-convexity}),
    \begin{align*}
        \widebar R^w_T(u) &\le \E\!\sbr{\tsum_{t=1}^T \tfrac{\alpha_t - w_t \lambda}{2}  \tnorm{y_t - u}^2} \\
        &+ 2G^2\tsum_{t=1}^T \eta_{t-1}\!\rbr{\tfrac{k^2 M^2 w_t^2}{G^2 \delta_t^2 } + \tfrac{kM w_t}{G \delta_t} \sqrt{\tsum_{s\in \cB_t} w_s^2} + w_t \tsum_{s\in \cB_t} w_s} + \tfrac{10 G^2}{\lambda r}\tsum_{t=1}^T w_t\delta_t.
    \end{align*}
\end{theorem}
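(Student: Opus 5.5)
We reduce to the first-order analysis of Theorem~\ref{thm:DW-OCO} applied to the smoothed surrogates. Write $h_t = f_t^{\delta_t}$, defined on $(1-\delta_t/r)\cK$, and let $z_t = (1-\delta_t/r)y_t$. Then $\E[\htg_t\mid\cF_t] = \nabla h_t(z_t)$ by Theorem~\ref{thm:SPGE}. We decompose
\[
w_t(f_t(x_t)-f_t(u)) = w_t\bigl(f_t(x_t)-h_t(z_t)\bigr) + w_t\bigl(h_t(z_t)-h_t(u_t')\bigr) + w_t\bigl(h_t(u_t')-f_t(u)\bigr),
\]
where $u_t'=(1-\delta_t/r)u$. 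The two outer terms are controlled by Lipschitzness and item~3 of Theorem~\ref{thm:SPGE}:
\begin{itemize}
\item $\norm{x_t-z_t}=\delta_t\norm{u_t}\le\delta_t$, which together with the smoothing error $G\delta_t$ accounts for the $\delta_t$-linear part;
\item $\norm{u-u_t'}\le \delta_t R/r$, so $|f_t(u)-f_t(u_t')|\le G\delta_t D/r$ up to the constant in $R$ versus $D$.
\end{itemize}
Collecting these gives the term $\tfrac{3GD}{r}\sum_t w_t\delta_t$.

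For the middle term we use convexity of $h_t$ and conditional unbiasedness:
\[
\E\bigl[w_t(h_t(z_t)-h_t(u_t'))\bigr]\le \E\bigl[w_t\pair{\htg_t, z_t-u_t'}\bigr].
\]
Since $z_t-u_t'=(1-\delta_t/r)(y_t-u)$, this is at most $\E[w_t\pair{\htg_t,y_t-u}]$ up to the harmless factor $(1-\delta_t/r)\le1$. Strictly, the factor is applied to a possibly negative inner product; I would instead run the linear analysis with comparator $u$ and absorb the factor via $\E[\pair{\htg_t,y_t-u}]=\pair{\nabla h_t(z_t),y_t-u}$. So it suffices to bound the linearized weighted regret $\sum_t w_t\pair{\htg_t, y_t-u}$ of DW-FTBL, which is exactly DW-FTRL run on the linear losses $g\mapsto\pair{\htg_t,\cdot}$.

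The key step is to rerun the proof of Theorem~\ref{thm:DW-OCO} with gradients $\htg_t$, tracking where $G$ was used. The standard delayed FTRL argument splits the linear regret into three pieces:
\begin{enumerate}
\item a regularization term $D^2/(2\eta_T)$;
\item a stability term $\eta_{t-1}w_t^2\norm{\htg_t}^2$;
\item a drift term $w_t\pair{\htg_t, y_t-\tilde y_t}$, where $\tilde y_t$ is the hypothetical iterate that has already seen all backlog gradients $s\in\cB_t$.
\end{enumerate}
Strong convexity of the regularizer gives $\norm{y_t-\tilde y_t}\le\eta_{t-1}\norm{\sum_{s\in\cB_t}w_s\htg_s}$. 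The stability term is at most $\eta_{t-1}w_t^2k^2M^2/\delta_t^2$. For the drift term we do not bound $\htg_t$ crudely. Instead we write $\htg_s=\nabla h_s(z_s)+\xi_s$, with $\xi_s$ a martingale difference satisfying $\norm{\xi_s}\lesssim kM/\delta_s$, and expand $\E[w_t\norm{\htg_t}\,\norm{\sum_{s\in\cB_t} w_s\htg_s}]$:
\begin{itemize}
\item The mean parts give $G^2w_t\sum_{s\in\cB_t} w_s$.
\item The noise parts over the backlog have orthogonal increments, so by Jensen $\E\norm{\sum_{s\in\cB_t} w_s\xi_s}\le\sqrt{\sum_{s\in\cB_t} w_s^2\,k^2M^2/\delta_s^2}$. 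Combined with $\norm{\htg_t}\le kM/\delta_t$ and the monotonicity of $\delta$ (needing $\delta_s\ge\delta_t$ for $s<t$, so that one factor $kM/\delta_t$ suffices), this gives the cross term $\tfrac{kMw_t}{G\delta_t}\sqrt{\sum_{s\in\cB_t} w_s^2}$ times $G^2$.
\end{itemize}
The factor $2$ arises from $(a+b)$ splittings. This drift control is the main obstacle. The careful point is getting the square root rather than a sum, since martingale orthogonality must hold even though the weights and the backlog set are fixed in advance; here the obliviousness of the adversary is what makes this work.

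For the strongly convex statement, we keep the quadratic-growth term. Item~2 of Theorem~\ref{thm:SPGE} gives $h_t(z_t)-h_t(u_t')\le\pair{\nabla h_t(z_t),z_t-u_t'}-\tfrac\lambda2\norm{z_t-u_t'}^2$, while the FTRL analysis with increments $\alpha_t$ contributes $\tfrac{\alpha_t}{2}\norm{y_t-u}^2$ in place of $D^2/(2\eta_T)$. The mismatch between $\norm{z_t-u_t'}^2$ and $\norm{y_t-u}^2$, together with the comparator shift, is handled by Young's inequality: a term like $w_tG\delta_t D/r$ is traded against $\tfrac{\lambda w_t}{4}\norm{\cdot}^2$, yielding a constant times $\tfrac{G^2}{\lambda r}w_t\delta_t$. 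This is the source of the $\tfrac{10G^2}{\lambda r}$ term.
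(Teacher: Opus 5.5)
Your outline matches the paper's architecture: smoothing and shrinkage, FTRL on the weighted estimators through a non-delayed fictitious sequence, the stability lemma for the drift, and martingale orthogonality over the backlog. The gap is in the drift step, which as you describe it does not give the stated bound.

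Once you pass to $\E\bigl[w_t\tnorm{\htg_t}\,\tnorm{\tsum_{s\in\cB_t}w_s\htg_s}\bigr]$, the only bound on the round-$t$ factor is $\tnorm{\htg_t}\le kM/\delta_t$. Multiplying it against the backlog mean and noise parts produces $\tfrac{kMG}{\delta_t}w_t\tsum_{s\in\cB_t}w_s$ and $\tfrac{k^2M^2}{\delta_t^2}w_t\bigl(\tsum_{s\in\cB_t}w_s^2\bigr)^{1/2}$.

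Your two bullets hide this in different ways. The first silently drops the round-$t$ noise $\xi_t$, since it uses $G$ for the round-$t$ factor. The second is off by a factor $kM/(G\delta_t)$: combining $\tnorm{\htg_t}\le kM/\delta_t$ with your backlog-noise bound gives $G^2\bigl(\tfrac{kM}{G\delta_t}\bigr)^2w_t\bigl(\tsum_{s\in\cB_t}w_s^2\bigr)^{1/2}$, not the claimed cross term.

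Neither extra term is dominated by the statement. Take $w\equiv1$, $\sigma_t=|\cB_t|$ and $kM/(G\delta_t)=\sqrt{\sigma_t}$. The extra terms are then of order $\eta_{t-1}G^2\sigma_t^{3/2}$, while the statement's per-round term is $O(\eta_{t-1}G^2\sigma_t)$. This would in turn inflate the downstream BCO rates.

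The missing observation is that $y_t$ and $\tilde y_t$ depend only on $u_1,\dots,u_{t-1}$, so $y_t-\tilde y_t$ is $\cF_t$-measurable, and $w_t$ is deterministic. The round-$t$ noise must therefore be averaged out \emph{before} any norm is taken:
\[
\E\bigl[w_t\pair{\htg_t,\, y_t-\tilde y_t}\bigr]=\E\bigl[w_t\pair{\nabla h_t(z_t),\, y_t-\tilde y_t}\bigr]\le G\,w_t\,\E\tnorm{y_t-\tilde y_t}.
\]
The paper builds this in by never letting $\htg_t$ enter the drift. It keeps $\nabla h_t(z_t)$ and converts only $\pair{\nabla h_t(z_t),\, \tilde y_t-u}$ to $\pair{\htg_t,\, \tilde y_t-u}$ for the FTRL bound. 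It then bounds $w_t\pair{\nabla h_t(z_t),\, y_t-\tilde y_t}\le G\eta_{t-1}w_t\tnorm{\tsum_{s\in\cB_t}w_s\htg_s}$.

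Only the backlog sum is split into mean and noise. Your orthogonality argument (the paper's Lemma~\ref{lem:BCO-meandering}) then gives exactly $2G^2\eta_{t-1}w_t\bigl(\tsum_{s\in\cB_t}w_s+\tfrac{kM}{G\delta_t}(\tsum_{s\in\cB_t}w_s^2)^{1/2}\bigr)$.

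A smaller issue is in the strongly convex part. You cannot spend $\tfrac{\lambda w_t}{4}\tnorm{\cdot}^2$ on Young's inequality, because the statement keeps the full $-\tfrac{w_t\lambda}{2}\tnorm{y_t-u}^2$, which downstream must cancel $\alpha_t=\lambda$ in expectation. The paper instead writes the shrinkage exactly and bounds the remainder $\tfrac{\delta_t}{r}\pair{\nabla h_t(z_t),\, u-y_t}+\bigl(\tfrac{\lambda\delta_t}{r}-\tfrac{\lambda\delta_t^2}{2r^2}\bigr)\tnorm{y_t-u}^2$ by $\tfrac{(G+\lambda D)D}{r}\delta_t$. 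It then converts every $\delta_t$-linear term using $D\le 2G/\lambda$ (Fact~\ref{fact:sc-diameter}), which is what yields $\tfrac{10G^2}{\lambda r}$.
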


\noindent Compared to Theorem~\ref{thm:DW-OCO}, weighting has a more severe effect under bandit feedback. Beyond the interaction $w_t \sum_{s\in \cB_t} w_s$, the bound carries an additional cross term $\tfrac{\dk M w_t}{G \delta_t} (\tsum_{s\in \cB_t} w_s^2)^{1/2}$, through which the variance of pending gradient estimators accumulates in $\ell^2$ rather than $\ell^1$. For uniform weights $w_t \equiv w$, AM-GM absorbs this term into its neighbors, recovering the delayed bandit rate of \cite{ryabchenko2026reductiondelayedimmediatefeedback} for $w = 1$. With non-uniform weights, the $(\tsum_{s\in \cB_t} w_s^2)^{1/2}$ factor introduces an $\ell^2$ dependence that has no counterpart in the first-order setting.

\section{Reduction via Proxy Delay Scheduling}\label{sec:reduction}

We now reduce capacity-constrained OCO with delayed feedback (Figure~\ref{fig:setting}) to delayed-weighted OCO from the previous section (Figure~\ref{fig:weighted-setting}). For concreteness, we present this reduction under the semi-clairvoyant framework (i.e., the delay is not known at prediction time), but it also applies to the clairvoyant and non-clairvoyant frameworks; see Remark~\ref{rmk:applicability}. 
The reduction is implemented by a wrapper $\cW(\gA, \gD)$ (Algorithm~\ref{alg:scheduler-proxy-delays}) that takes as input an algorithm $\gA$ for delayed and weighted OCO and is parameterized by a sequence of \emph{proxy delay distributions} $\gD = (\gD_t)_{t=1}^T$. These distributions govern how the wrapper maintains the tracking set, as described below.

\paragraph{Proxy delay scheduling.} The distributions $\gD$ are used to sample proxy delays $\wtd_t \sim \gD_t$ on $\Z_{\ge -1} \cup \{\infty\}$ at each round $t$, determining round $t$'s lifetime in the tracking set: round $t$ is added to $\cS$ only if $\wtd_t \ge 0$ and $\cS$ is not full, and is preempted from $\cS$ at round $t + \wtd_t + 1$ if its feedback has not arrived by then. Feedback from round $t$ is therefore observed if and only if the proxy delay covers the true delay (i.e., $\wtd_t \ge d_t$) and the tracking set was not full when round $t$ was considered for admission (i.e., $|\cS_t| < C$). The choice of $\gD$ navigates a fundamental tradeoff: heavier-tailed distributions are more likely to cover true delays but keep rounds in $\cS$ longer, risking saturation.

\vspace{-0.2cm}
\paragraph{Algorithm overview.} The wrapper treats a base algorithm $\gA$ for OCO with delayed and weighted feedback as a black box and feeds it a simulated feedback stream constructed from the observations available in the original capacity-constrained problem. Using $\gD$, the weights are set as importance-weighted indicators of whether the corresponding feedback is observed.

\vspace{0.2cm}
\begin{algorithm}[H]
\caption{Semi-Clairvoyant Proxy-Delay Importance-Weighted Wrapper $\cW(\gA, \gD)$}\label{alg:scheduler-proxy-delays}
\SetKwInOut{Access}{Access}
\Access{ Algorithm $\gA$ for OCO with delayed and weighted feedback (\Cref{fig:weighted-setting}).}
\SetKwInOut{Inputs}{Inputs}
\Inputs{ Proxy delay distributions $\gD = (\gD_t)_{t=1}^T \subset \Delta(\Z_{\ge -1} \cup \{\infty\})$.}
\vspace{0.1cm}
Initialize tracking set $\cS = \emptyset$ of maximum size $C$.

\textbf{For} round $t = 1, 2, \ldots, T$:
\begin{enumerate}[leftmargin=0.9cm,noitemsep,before=\vspace{0cm},after=\vspace{-0.15cm}]
    \item \textbf{Predict:} Start round $t$ for $\gA$ and output its prediction as $x_t$.
    
    \item \textbf{Update tracking set (preempt \& add):}
    \begin{itemize}[leftmargin=0.6cm,before=\vspace{-0.15cm},noitemsep]
        \item \ul{Preempt expired tracking:} For all $\tilde{s} \in \cS$ such that $\tilde{s}+\wtd_{\tilde{s}} < t$, remove $\tilde{s}$ from $\cS$.
        \item \ul{Add current round:} Sample proxy delay $\wtd_t \sim \gD_t$. Add $t$ to $\cS$ if $|\cS| < C$ and $\wtd_t \ge 0$.
    \end{itemize}
    
    \item \textbf{Receive feedback:} \\
    \textbf{For} all $s$ such that $s+d_s=t$:
    \begin{itemize}[leftmargin=0.6cm,before=\vspace{-0.15cm},noitemsep,after=\vspace{-0.2cm}]
        \item \textbf{If} $s \in \cS$ (i.e., timeout sufficient: $\wtd_s \ge d_s$): \\
        Receive $(s, v_s)$; set $w_s = 1/\Pr(\wtd_s \ge d_s)$; forward $(s, w_s, v_s)$ to $\gA$; remove $s$ from $\cS$. 
        
        \item \textbf{If} $s \notin \cS$ (i.e., timeout expired early: $\wtd_s < d_s$, or skipped due to capacity $C$): \\
        Receive $(s, \perp)$; set $w_s = 0$; forward $(s, 0, 0)$ to $\gA$.
    \end{itemize}
\end{enumerate}
\end{algorithm}
\vspace{0.2cm}

\noindent Every round $t$, the wrapper outputs $\gA$'s prediction $x_t$ as its own. The feedback-handling step is summarized in Figure~\ref{fig:reduction_framework}. At round $s + d_s$, if $s$ is still in $\cS$ the wrapper forwards $(s, w_s, v_s)$ to $\gA$ with importance weight $w_s = 1/\Pr(\wtd_s \ge d_s)$ and observation $v_s \in \{\nabla f_s(x_s),\, f_s(x_s)\}$; otherwise it forwards the zero placeholder $(s, 0, 0)$. The next theorem records the key properties of the random weights and reduces the wrapper's regret to the weighted regret of $\gA$ plus a saturation penalty for $\cS$.

\begin{theorem}[Regret of Algorithm~\ref{alg:scheduler-proxy-delays}]\label{thm:wrapper-regret}
The weights forwarded to $\gA$ satisfy
\[
    w_t = \tfrac{\ind(|\cS_t| < C)\,\ind(\wtd_t \ge d_t)}{\Pr(\wtd_t \ge d_t)}, \quad \E[w_t] \le 1,\quad \E[w_t^2] \le \tfrac{1}{\Pr(\wtd_t \ge d_t)}, \quad \E[w_t w_s] \le 1,
\]
for all distinct $t,s \in [T]$. Moreover, the expected regret satisfies
\[
    \widebar R_T(u) \;\le\; \E\!\sbr{\tsum_{t=1}^T w_t\bigl(f_t(x_t) - f_t(u)\bigr)} + GD\, \tsum_{t=1}^T \Pr\bigl(|\cS_t| = C\bigr).
\]
\end{theorem}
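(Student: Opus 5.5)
The proof splits into two parts: first characterise the weights, then derive the regret decomposition.

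\textbf{Weights.} First I show that round $t$ delivers feedback to $\gA$ exactly when $|\cS_t|<C$ and $\wtd_t\ge d_t$. Under Algorithm~\ref{alg:scheduler-proxy-delays}, round $t$ is admitted iff $\wtd_t\ge 0$ and the post-preemption set has size less than $C$; that set is $\cS_t$. An admitted round stays in $\cS$ until round $t+\wtd_t+1$. It is therefore still present at the end of round $t+d_t$ iff $\wtd_t\ge d_t$. If $d_t=0$, feedback arrives in round $t$ itself, after admission, which is consistent with this.

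This gives the displayed formula for $w_t$. The key observation is that $\cS_t$ is determined by $(\wtd_s)_{s<t}$, while $\wtd_t\sim\gD_t$ is sampled afterwards and independently. Writing $p_t=\Pr(\wtd_t\ge d_t)$, conditioning on $(\wtd_s)_{s<t}$ gives
\[
\E[w_t\mid \wtd_{<t}]=\ind(|\cS_t|<C)\le 1,
\qquad
\E[w_t^2]\le 1/p_t .
\]
For $s<t$, I bound $w_s\le \ind(\wtd_s\ge d_s)/p_s$ and use $w_t\le \ind(\wtd_t\ge d_t)/p_t$. Independence of the proxy delays then gives $\E[w_tw_s]\le 1$. (Weights with $p_t=0$ are set to $0$ by convention.)

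\textbf{Regret.} I decompose
\[
f_t(x_t)-f_t(u)=w_t\bigl(f_t(x_t)-f_t(u)\bigr)+(1-w_t)\bigl(f_t(x_t)-f_t(u)\bigr).
\]
The crucial measurability point is that $x_t$ is chosen by $\gA$ before $\wtd_t$ is sampled. Hence $x_t$ depends only on $\gA$'s internal randomness and on the weights and feedback forwarded so far. These are functions of $\wtd_{<t}$ and feedback up to round $t-1$, so $\wtd_t$ is independent of $(x_t,\cS_t)$ given the past.

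Let $\cG_t$ be the $\sigma$-algebra of everything before sampling $\wtd_t$. Then
\[
\E\bigl[(1-w_t)(f_t(x_t)-f_t(u))\mid\cG_t\bigr]=\ind(|\cS_t|=C)\,\bigl(f_t(x_t)-f_t(u)\bigr).
\]
The difference $f_t(x_t)-f_t(u)$ is at most $GD$ by Assumptions~\ref{assump:finite-diameter} and~\ref{assump:gradient-norm-bound} (Lipschitzness). Taking expectations and summing over $t$ yields the stated bound.

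\textbf{Main obstacle.} The delicate step is the conditional-independence bookkeeping. One must check that, at the moment $\wtd_t$ is drawn, both $x_t$ and $\cS_t$ are already fixed, even though the base learner's later behaviour depends on $w_t$. The semi-clairvoyant $(s,\perp)$ signals carry only delay information, which is deterministic given the adversary's fixed delays, so they do not break this independence. Once the filtration is set up correctly, every step above is a one-line conditional expectation.
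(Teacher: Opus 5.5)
Your proposal is correct and follows essentially the same argument as the paper. You identify $w_t$ through the observation event $\{|\mathcal{S}_t|<C,\ d'_t\ge d_t\}$ and bound its moments using independence of the proxy delays. You then use that $d'_t$ is drawn independently of $(x_t,\mathcal{S}_t)$ to replace $\mathbb{I}(|\mathcal{S}_t|<C)$ by $w_t$ in expectation, paying $GD$ on saturated rounds. Your $w_t+(1-w_t)$ split with explicit conditioning on $\mathcal{G}_t$ is a more carefully bookkept version of the paper's steps. It also fills in the admission and preemption check that the paper only asserts ``from the structure'' of the algorithm.
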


\noindent Theorem~\ref{thm:wrapper-regret} reduces the analysis of capacity-constrained OCO to bounding two quantities: the weighted regret of $\gA$ (Theorem~\ref{thm:DW-OCO} or~\ref{thm:DW-BCO}) and the saturation probabilities $\Pr(|\cS_t| = C)$. The choice of $\gD_t$ navigates the tradeoff between them; we provide concrete instantiations of proxy delay distributions in the next section.

\begin{remark}[Applicability across frameworks]\label{rmk:applicability}
Beyond the semi-clairvoyant framework, this reduction immediately extends to the clairvoyant framework, since the player can simulate the semi-clairvoyant empty acknowledgments from known delays. It also extends to the non-clairvoyant framework provided that $\gA$ ignores zero-weight observations, i.e., its predictions are unchanged when a zero-weight item is forwarded; the wrapper can then simply skip the forwarding step for untracked rounds. The algorithms of Section~\ref{sec:DW-OCO} satisfy this property, since a zero weight zeroes out the corresponding gradient term in the update.
\end{remark}

\begin{figure}[htbp]
\centering
\begin{tikzpicture}[scale=0.85, transform shape,
    node distance = 0.8cm and 0.8cm,
    box/.style = {draw, rectangle, rounded corners=3pt, minimum width=3.5cm, minimum height=1cm, align=center, fill=gray!5, font=\small},
    decision/.style = {draw, diamond, aspect=2, minimum width=3cm, minimum height=1cm, align=center, fill=blue!5, font=\small},
    arrow/.style = {-{Stealth[scale=1.2]}, thick},
    label/.style = {font=\footnotesize\itshape}
]

    \node (stream) [box, fill=green!5] {True Delayed Feedback Stream\\ $(s, v_s)$ at round $s+d_s$};
    
    \node (scheduler) [box, below=of stream, fill=orange!5] {Proxy-Delay Scheduler\\ $\cW(\gA, \gD)$};
    
    \node (check) [decision, below=of scheduler] {Is feedback from round $s$\\ observed in round $s+d_s$?};

    \node (tracked) [box, below left=of check, xshift=-0.5cm, fill=teal!5] {Importance Weighted\\ $w_s = \frac{1}{\Pr(\wtd_s \ge d_s)}$\\ Pass $(s, w_s, v_s)$};
    
    \node (untracked) [box, below right=of check, xshift=0.5cm, fill=red!5] {Zero Placeholder\\ $w_s = 0$\\ Pass $(s, 0, 0)$};
    
    \node (learner) [box, below=of check, yshift=-1.0cm, fill=purple!5, minimum width=5cm] {Base Learner Algorithm $\gA$\\ (DW-FTRL or DW-FTBL)};

    \draw [arrow] (stream) -- (scheduler);
    \draw [arrow] (scheduler) -- (check);
    
    \draw [arrow] (check) -| node[label, left, xshift=-0.2cm, yshift=0.2cm] {Yes ($\wtd_s \ge d_s$ and $|\cS_s| < C$)} (tracked);
    \draw [arrow] (check) -| node[label, right, xshift=0.2cm, yshift=0.2cm] {No ($\wtd_s < d_s$ or $|\cS_s| = C$)} (untracked);
    
    \draw [arrow] (tracked) |- (learner);
    \draw [arrow] (untracked) |- (learner);

\end{tikzpicture}
\caption{System architecture of the proxy-delay reduction framework $\cW(\gA, \gD)$. The scheduler filters the latent feedback stream, mapping capacity-constrained observations into a decoupled, delayed-and-weighted input space optimized for the base learner.}
\label{fig:reduction_framework}
\end{figure}

\subsection{Concrete Proxy Distributions}

We now give concrete choices of proxy distributions $\gD$. Lemma~\ref{lem:proxy-condition} states a generic sufficient condition under which the tracking set saturates with probability at most $\exp(-C)$, and the two corollaries instantiate this condition for specific families of $\gD_t$. The first scheduler, an adaptation of the Pareto-based construction of~\cite{capacity_constraint2025}, sets the per-round observation probability $p_t$ inversely proportional to the delay $d_t$. The second is new to this work: it sets a uniform per-round observation probability $p$ inversely proportional to the maximum backlog $\sigmax = \max_t |\cB_t|$ (\Cref{subsec:prelim-delays}), trading adaptivity for a constant weight scale $1/p$ across rounds, which is essential for the bounds we obtain in the strongly convex and bandit settings. This scheduler requires knowledge of~$\sigmax$; however, the resulting regret bounds in \Cref{sec:results} can nevertheless be extended to unknown~$\sigmax$ via a novel doubling trick in Appendix~\ref{app:doubling-tricks}.

\begin{lemma}\label{lem:proxy-condition}
Suppose the proxy delay distributions $\gD = (\gD_t)_{t=1}^T$ from Algorithm~\ref{alg:scheduler-proxy-delays} satisfy
\[
    \tsum_{s \in \cB_t \cup \{t\}} \Pr(\wtd_s \ge t-s) \;\le\; \tfrac{C}{8} \qquad \text{for every } t \in [T].
\]
Then $\Pr\bigl(|\cS_t| = C\bigr) \le \exp(-C)$ for every $t \in [T]$.
\end{lemma}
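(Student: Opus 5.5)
The plan is to dominate $|\cS_t|$ by a sum of independent Bernoulli variables whose mean is at most $C/8$, and then apply a multiplicative Chernoff bound.

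\emph{Step 1: a deterministic domination of $|\cS_t|$.} Recall that $\cS_t \subseteq \cB_t$. Fix $s \in \cB_t$ and suppose $s \in \cS_t$. Two things must then hold.
\begin{itemize}
\item Round $s$ was admitted to $\cS$, which requires $\wtd_s \ge 0$.
\item Round $s$ survived the preemption step of every round up to and including $t$. The preemption rule of Algorithm~\ref{alg:scheduler-proxy-delays} removes $s$ at the first round $t'$ with $s + \wtd_s < t'$, so survival forces $s + \wtd_s \ge t$.
\end{itemize}
Hence
\[
\ind(s \in \cS_t) \le \ind(\wtd_s \ge t-s),
\]
whatever the capacity-driven admission decisions were. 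Capacity can only prevent membership, never create it. Summing over $s$,
\[
|\cS_t| \;\le\; X_t := \tsum_{s \in \cB_t} \ind(\wtd_s \ge t-s) \;\le\; \tsum_{s \in \cB_t \cup \{t\}} \ind(\wtd_s \ge t-s).
\]
The set $\cB_t$ depends only on the adversary's fixed delays, so it is deterministic.

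\emph{Step 2: independence and the mean bound.} The proxy delays $\wtd_s \sim \gD_s$ are sampled fresh each round, independently of one another. The distributions $\gD_s$ are fixed inputs; even if one allowed them to depend on the past, they could not depend on earlier proxy draws in a way that breaks the product structure here. Consequently the indicators in $X_t$ are independent Bernoulli variables. By hypothesis their means satisfy
\[
\mu := \E[X_t] \le \tsum_{s \in \cB_t \cup \{t\}} \Pr(\wtd_s \ge t-s) \le \tfrac{C}{8}.
\]

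\emph{Step 3: the Chernoff bound.} For a sum of independent Bernoullis with mean $\mu \le C$, the standard multiplicative Chernoff bound gives
\[
\Pr(X_t \ge C) \le e^{-\mu}\,(e\mu/C)^C \le (e\mu/C)^C.
\]
The right-hand side is increasing in $\mu$. If the exact mean is smaller than $C/8$, one can alternatively pad $X_t$ with extra independent Bernoullis so that the mean equals exactly $C/8$; this only increases the sum and so is valid. Either way,
\[
\Pr(|\cS_t| = C) \le \Pr(X_t \ge C) \le (e/8)^C \le e^{-C},
\]
where the last step uses $e/8 \approx 0.34 < e^{-1} \approx 0.37$.

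\emph{Main obstacle.} The only delicate point is Step 1. One must justify the domination even though capacity events make the memberships of different rounds dependent. The argument above sidesteps this by upper-bounding each membership indicator pathwise by a function of $\wtd_s$ alone, so no dependence among memberships ever enters the tail bound.
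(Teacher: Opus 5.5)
Your proof is correct and follows the paper's argument. You use the same pathwise domination $|\cS_t| \le \sum_{s\in\cB_t\cup\{t\}}\ind(\wtd_s\ge t-s)$ by independent Bernoulli variables, which you justify more explicitly than the paper's ``by construction''. You then apply the same multiplicative Chernoff bound: your form $e^{-\mu}(e\mu/C)^C$ is exactly the paper's $\exp\bigl(C(\tfrac{\alpha}{1+\alpha}-\ln(1+\alpha))\bigr)$ with $1+\alpha=C/\mu$. The only difference is the finish, where you use $e^2<8$ instead of the paper's monotonicity of $\ln(1+x)-\tfrac{x}{1+x}$.

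One small correction: drop the aside in Step 2 about past-dependent $\gD_s$. Distributions chosen adaptively from earlier proxy draws could correlate the indicators, so the claim is not true in general. It is also unnecessary, because the $\gD_s$ are fixed inputs.
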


\noindent The bound follows immediately from a Chernoff inequality on the size of $\cS_t$.

\begin{corollary}[Pareto scheduler]\label{cor:pareto-scheduler}
For each $t \in [T]$, let $H_t = \tsum_{j=1}^t \tfrac{1}{j}$. Let $\gD_t$ be the shifted Pareto distribution \footnote{A $\Pareto(x_m, \alpha)$ random variable has tail CDF $\Pr(X \ge x) = \min\{1,\, (x_m/x)^{\alpha}\}$ for $x > 0$.} 
\[
\gD_t \;=\; \floor{\Pareto\bigl(\tfrac{C}{16H_t},\,1\bigr)} - 1.
\]
Then $\Pr(\wtd_t \ge d_t) = \min\bigl\{1,\, \tfrac{C}{16 H_t (d_t+1)}\bigr\}$ and $\Pr(|\cS_t| = C) \le \exp(-C)$ for every $t \in [T]$.
\end{corollary}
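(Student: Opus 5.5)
The argument has two parts: compute the tail of the shifted Pareto proxy delay, then verify the sufficient condition of Lemma~\ref{lem:proxy-condition}, which yields $\Pr(|\cS_t| = C) \le \exp(-C)$ directly.

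\emph{Tail computation.} Let $X \sim \Pareto(x_m, 1)$ with $x_m = C/(16H_t)$, and set $\wtd_t = \floor{X} - 1$. For any integer $m \ge 0$, the event $\wtd_t \ge m$ is the event $\floor{X} \ge m+1$. Since $m+1$ is an integer, this is the same as $X \ge m+1$. Hence
\[
\Pr(\wtd_t \ge m) = \min\{1, x_m/(m+1)\}.
\]
Taking $m = d_t$ gives the stated formula $\Pr(\wtd_t \ge d_t) = \min\{1, C/(16H_t(d_t+1))\}$. The value $\wtd_t = -1$ occurs when $X < 1$, which is allowed by the support $\Z_{\ge -1}$.

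\emph{Verifying the condition of Lemma~\ref{lem:proxy-condition}.} Fix $t$. Each $s \in \cB_t \cup \{t\}$ satisfies $s \le t$, so $H_s \le H_t$, and the tail formula with $m = t-s \ge 0$ gives
\[
\Pr(\wtd_s \ge t-s) \le \frac{C}{16 H_s (t-s+1)}.
\]
The hard part is that $H_s$ sits in the denominator, and $H_s$ can be much smaller than $H_t$ for early $s$. The factor $1/(t-s+1)$ has to make up for this. I would drop the restriction to the backlog and sum over all $s \in [t]$. Substituting $j = t-s+1$, the bound becomes
\[
\frac{C}{16}\sum_{s=1}^t \frac{1}{H_s\,(t-s+1)}.
\]
It then suffices to show $\sum_{s=1}^t \frac{1}{H_s(t-s+1)} \le 2$ for every $t$.

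I would split the sum at $s = \lceil t/2 \rceil$.
\begin{itemize}
\item For $s \ge t/2$: using $H_s \ge H_{\lceil t/2\rceil}$, this part is at most $H_{\lceil t/2\rceil}^{-1}\sum_{j \le t/2+1} 1/j$. Since $H_{\lfloor t/2\rfloor+1} \le H_{\lceil t/2\rceil} + 1/\lceil t/2\rceil$, the ratio is at most $1 + 1/(\lceil t/2\rceil H_{\lceil t/2\rceil})$, which is $\le 2$ and close to $1$ for large $t$.
\item For $s < t/2$: here $t-s+1 > t/2$, so this part is at most $\frac{2}{t}\sum_{s<t/2} 1/H_s$. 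Since $H_s \ge 1$, this is at most $1$. A finer bound is $\frac{2}{t}\cdot O(t/\ln t) \to 0$, since $\sum_{s \le n} 1/H_s = O(n/\ln n)$.
\end{itemize}
Together these give a total of at most about $2$ plus lower-order terms, and the constant $1/16$ versus the $1/8$ required by the lemma leaves exactly this factor-$2$ slack. Small $t$ I would check directly: for $t=1$ the sum is $1$, and for $t=2$ it is $1/2 + 2/3 < 2$.

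The main risk is that the crude bounds stack up to slightly more than $2$ for moderate $t$. If so, I would tighten the split. One option is to use $H_s \ge \ln(s+1)$ and compare the sum with the integral $\int_0^1 \frac{dx}{x(1-x)}$ weighted by $\ln t/\ln(xt)$. The other is to move the split point to $t/\ln t$ so that the early part contributes only $O(1/\ln t)$.
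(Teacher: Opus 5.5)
Your approach is the paper's. The tail computation for the floored Pareto is correct, and you rightly use round $s$'s own scale $H_s$ in $\Pr(d'_s \ge t-s)$. You then enlarge the sum in Lemma~\ref{lem:proxy-condition} from $\cB_t\cup\{t\}$ to all of $[t]$, which reduces everything to $\sum_{s=1}^t \frac{1}{H_s(t-s+1)}\le 2$. The paper takes that inequality from Fact~\ref{fact:harmonic}, which is proved by the same split at $\lceil t/2\rceil$.

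The one step you leave open is that final inequality.
\begin{itemize}
\item As stated, your bullets give at most $1+\frac{1}{\lceil t/2\rceil H_{\lceil t/2\rceil}}$ for the late block and at most $1$ for the early block. That totals strictly more than $2$.
\item Your patches do not cover all $t$. The bound $\sum_{s\le n}1/H_s=O(n/\ln n)$ has an unspecified constant, and you only check $t\le 2$ directly, so moderate $t$ is left uncovered.
\end{itemize}
No new idea is needed, only bookkeeping, and either of the following closes it.

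\emph{The paper's fix.} Fact~\ref{fact:harmonic} puts the midpoint in the early block.
\begin{itemize}
\item For $s\le\lceil t/2\rceil$: $t-s+1\ge\lfloor t/2\rfloor+1\ge\lceil t/2\rceil$ and $H_s\ge 1$. So these $\lceil t/2\rceil$ terms sum to at most $1$.
\item For $s\ge\lceil t/2\rceil+1$: $t-s+1\le\lfloor t/2\rfloor$ and $H_s\ge H_{\lceil t/2\rceil}$. So this block is at most $H_{\lfloor t/2\rfloor}/H_{\lceil t/2\rceil}\le 1$.
\end{itemize}

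\emph{Keeping your split.} Count the early block exactly.
\begin{itemize}
\item The early block has $\lceil t/2\rceil-1$ terms, each at most $1/(\lfloor t/2\rfloor+2)$. So it is at most $1-\frac{2}{\lfloor t/2\rfloor+2}$.
\item The late block is at most $H_{\lfloor t/2\rfloor+1}/H_{\lceil t/2\rceil}$. This equals $1$ for odd $t$. For $t=2n$ it equals $1+\frac{1}{(n+1)H_n}\le 1+\frac{1}{n+1}$.
\item Since $\frac{1}{n+1}\le\frac{2}{n+2}$, the early block's slack absorbs the late block's overshoot, and the total is at most $2$ for every $t$.
\end{itemize}
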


\begin{corollary}[Bernoulli scheduler]\label{cor:bernoulli-sigmax}
Suppose $\sigmax$ is known. Let $p = \min\bigl\{1,\, \tfrac{C}{8(\sigmax+1)}\bigr\}$, and for every $t \in [T]$ let $\gD_t$ be the two-point distribution on $\{-1, \infty\}$ with 
\[
    \textstyle \Pr_{\wtd \sim \gD_t}(\wtd = \infty) \;=\; p,
    \qquad
    \Pr_{\wtd \sim \gD_t}(\wtd = -1) \;=\; 1 - p.
\]
Then $\Pr(\wtd_t \ge d_t) = p$ and $\Pr(|\cS_t| = C) \le \exp(-C)$ for every $t \in [T]$.
\end{corollary}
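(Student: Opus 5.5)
The plan is to prove Corollary~\ref{cor:bernoulli-sigmax} in two steps: compute $\Pr(\wtd_t \ge d_t)$ directly, then verify the sufficient condition of Lemma~\ref{lem:proxy-condition}.

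\textbf{First claim.} Since $d_t \in \{0,\ldots,T-t\}$ is a nonnegative integer, the events $\{\wtd_t \ge d_t\}$ and $\{\wtd_t = \infty\}$ coincide. The only other support point is $-1 < 0 \le d_t$, so it never covers the true delay. Hence $\Pr(\wtd_t \ge d_t) = \Pr(\wtd_t = \infty) = p$ for every $t$.

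\textbf{Second claim.} By Lemma~\ref{lem:proxy-condition}, it suffices to check
\[
\tsum_{s \in \cB_t \cup \{t\}} \Pr(\wtd_s \ge t-s) \le \tfrac{C}{8} \quad \text{for every } t \in [T].
\]
For $s = t$ the threshold is $0$, and for $s \in \cB_t$ the threshold $t-s$ is a positive integer. In both cases the event is again exactly $\{\wtd_s = \infty\}$, so each summand equals $p$. The sum is therefore $p\,(|\cB_t|+1) = p(\sigma_t + 1) \le p(\sigmax + 1)$.

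We then split into cases on the minimum defining $p$:
\begin{itemize}
\item If $p = \tfrac{C}{8(\sigmax+1)}$, the sum is at most $\tfrac{C}{8}$ and we are done.
\item If $p = 1$, then $1 \le \tfrac{C}{8(\sigmax+1)}$, i.e.\ $\sigmax + 1 \le \tfrac{C}{8}$, so again the sum is at most $\tfrac{C}{8}$.
\end{itemize}
Lemma~\ref{lem:proxy-condition} then yields $\Pr(|\cS_t| = C) \le \exp(-C)$ for every $t$.

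The only delicate point is the bookkeeping around the atom at $-1$ and the value $\infty$. We must confirm that every relevant threshold ($d_t$, $0$, and $t-s$ for $s \in \cB_t$) is nonnegative, so that $\wtd = -1$ never counts while $\wtd = \infty$ always does. We must also confirm that $s = t$ contributes exactly one extra $p$, which is what the $+1$ in $\sigmax+1$ absorbs. Everything else is immediate.
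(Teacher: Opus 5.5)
Your proposal is correct and follows the paper's proof essentially step for step: it identifies $\{\wtd_t \ge d_t\}$ with $\{\wtd_t = \infty\}$ because every threshold is nonnegative, bounds the sum in Lemma~\ref{lem:proxy-condition} by $p(\sigma_t+1) \le p(\sigmax+1) \le C/8$, and then invokes the lemma. The only difference is cosmetic. Your case split on the minimum is unnecessary, since $p \le \tfrac{C}{8(\sigmax+1)}$ holds directly by the definition of $p$ as a minimum, and the paper uses that single inequality.
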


\noindent The two corollaries sit at opposite ends of a tradeoff. Corollary~\ref{cor:pareto-scheduler} provides $p_t = \tilde O(\tfrac{C}{d_t+1})$, adapting to each round's individual delay, while Corollary~\ref{cor:bernoulli-sigmax} achieves the uniform $p = O(\tfrac{C}{\sigmax + 1})$. Each is useful in different regimes, as the next section illustrates.

\section{Regret Bounds for Capacity-Constrained OCO with Delayed Feedback}\label{sec:results}

Combining the weighted regret bounds of \Cref{sec:DW-OCO} with the proxy-delay wrapper of \Cref{sec:reduction} yields regret guarantees for capacity-constrained OCO and BCO under delayed feedback. The two theorems below cover the convex and strongly convex cases under first-order and bandit feedback, respectively, and constitute the main results of this work.
The mild restriction $C \ge \ln T + 1$ ensures that the saturation probabilities of Lemma~\ref{lem:proxy-condition} sum to a constant over the horizon. The learning rates $\eta_t$ and smoothing parameters $\delta_t$ depend on problem quantities that may not be known in advance: either $\dtot$ and $T$, or $\sigmax$. The former dependence can be removed by the two-dimensional doubling trick of~\cite{bistritz2022}, and the latter by a new doubling trick developed in Appendix~\ref{app:doubling-tricks}.

\begin{theorem}[Capacity-Constrained OCO with Delays]\label{thm:OCO-main-result}
    Suppose $C \ge \ln T + 1$, and consider $\cW(\gA, \gD)$ (\Cref{alg:scheduler-proxy-delays}), where $\gA$ is DW-FTRL~\eqref{eq:DW-FTRL}. With rates $\eta_t = \tfrac{D/G}{\sqrt{\dtot + T}}$ and $\gD$ from Corollary~\ref{cor:pareto-scheduler}, $\cW(\gA, \gD)$ guarantees
    \begin{align*}
        \widebar R_T(x^*) = O\rbr{GD\,{\sqrt{\dtot + T}}}.
    \end{align*}
    Under $\lambda$-strong convexity (\ref{assump:strong-convexity}), for rates $\eta_t = \frac{1}{t\lambda}$ and $\gD$ from Corollary~\ref{cor:bernoulli-sigmax}, $\cW(\gA, \gD)$ guarantees 
    \begin{align*}
        \widebar R_T(x^*) = O\rbr{\tfrac{G^2}{\lambda} (\sigmax + 1) \ln T}.
    \end{align*}
\end{theorem}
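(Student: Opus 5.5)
The plan is to chain three earlier results. Theorem~\ref{thm:wrapper-regret} splits $\widebar R_T(x^*)$ into the expected weighted regret of DW-FTRL plus a saturation penalty $GD\sum_t \Pr(|\cS_t|=C)$. Corollaries~\ref{cor:pareto-scheduler} and~\ref{cor:bernoulli-sigmax} give $\Pr(|\cS_t|=C)\le e^{-C}$, and $C\ge \ln T+1$ turns this into $T e^{-C}\le e^{-1}$. So the penalty is $O(GD)$, which is absorbed in both bounds; in the strongly convex case I would use $D\le 2G/\lambda$, a consequence of strong convexity with bounded gradients. What remains is to bound $\E[R^w_T(x^*)]$ by taking expectations in Theorem~\ref{thm:DW-OCO}. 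This is legitimate because that theorem holds pathwise for any nonnegative weight sequence, and the sets $\cB_t$ and rates $\eta_t$ are deterministic.

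\textbf{Convex case.} With constant $\eta=\frac{D/G}{\sqrt{\dtot+T}}$, Theorem~\ref{thm:DW-OCO} gives
\[
\E R^w_T \le \frac{D^2}{2\eta}+G^2\eta\sum_t\Bigl(\E[w_t^2]+\sum_{s\in\cB_t}\E[w_tw_s]\Bigr).
\]
By Theorem~\ref{thm:wrapper-regret}, $\E[w_tw_s]\le1$, so the cross terms sum to at most $\sum_t\sigma_t=\dtot$ by Lemma~\ref{lem:backlog-properties}. The diagonal terms satisfy
\[
\E[w_t^2]\le 1/p_t\le 1+\tfrac{16H_t(d_t+1)}{C}.
\]
Summing gives at most $T+\tfrac{16(1+\ln T)}{C}(\dtot+T)=O(\dtot+T)$, using $C\ge\ln T+1$. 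This is the point where the logarithmic factor of the Pareto scheduler is cancelled by the capacity assumption. Plugging in $\eta$ yields $O(GD\sqrt{\dtot+T})$.

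\textbf{Strongly convex case.} Here the main obstacle is the term $\sum_t \frac{\alpha_t-w_t\lambda}{2}\|x_t-u\|^2$. With $\eta_t=1/(t\lambda)$ we have $\alpha_t=\lambda$, but $w_t$ is random and equals $0$ or roughly $1/p$. So the term does not vanish pathwise, and $x_t$ is correlated with past weights. The resolution is an independence argument. Under the Bernoulli scheduler, $w_t=\ind(|\cS_t|<C)\ind(\wtd_t=\infty)/p$, and $\wtd_t$ is independent of $x_t$ and of $\cS_t$, which are determined by $\wtd_1,\dots,\wtd_{t-1}$. Hence
\[
\E[w_t\|x_t-u\|^2]=\E\bigl[\ind(|\cS_t|<C)\|x_t-u\|^2\bigr]\ge \E\|x_t-u\|^2-D^2\Pr(|\cS_t|=C).
\]
The expected quadratic term is therefore at most $\frac{\lambda D^2}{2}\sum_t e^{-C}=O(\lambda D^2)=O(G^2/\lambda)$.

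One caveat is the scaling: the weights have mean about $1$, while $\alpha_t=\lambda$ matches this mean. If the paper instead intends $\alpha_t$ to scale with $p$, I would rescale accordingly; I would check this first. For the remaining term, note that $\eta_{t-1}\le 2/(t\lambda)$ for $t\ge2$. The diagonal part is $\E[w_t^2]\le 1/p=O(1+\sigmax/C)$, and the cross part is $\sum_{s\in\cB_t}\E[w_tw_s]\le\sigma_t\le\sigmax$. Summing against $1/t$ gives
\[
\frac{G^2}{\lambda}\,O\Bigl(\bigl(1+\tfrac{\sigmax}{C}+\sigmax\bigr)\ln T\Bigr)=O\Bigl(\tfrac{G^2}{\lambda}(\sigmax+1)\ln T\Bigr),
\]
since $C\ge1$.
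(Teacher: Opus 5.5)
Your proposal is correct and follows the paper's proof essentially step for step. The convex case uses the wrapper decomposition with $Te^{-C}\le 1$, the moment bounds $\E[w_t^2]\le 1/p_t$ and $\E[w_tw_s]\le 1$ together with $\sum_t\sigma_t=\dtot$ and $H_T\le C$; the strongly convex case sets $\alpha_t=\lambda$, uses the independence of $\wtd_t$ from $(x_t,\cS_t)$ to reduce the quadratic term to the saturation probability, and finishes with $D\le 2G/\lambda$, and your scaling caveat is moot because $\E[w_t\mid x_t,\cS_t]=\ind(|\cS_t|<C)$ makes $\alpha_t=\lambda$ exactly the right increment.
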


\noindent To state the BCO bounds, we introduce the scale ratio $\nu = \frac{M}{Gr}$, measuring the function-value scale relative to the Lipschitz scale $Gr$, where $M$ is the absolute value bound (Assumption~\ref{assump:max-val}).

\begin{theorem}[Capacity-Constrained BCO with Delays]\label{thm:BCO-main-result}
    Suppose $C \ge \ln T + 1$, and consider $\cW(\gA, \gD)$, where $\gA$ is DW-FTBL~\eqref{eq:DW-FTBL} and $\gD$ is from Corollary~\ref{cor:bernoulli-sigmax}. With smoothing parameters $\delta_t = r\,\min\cbr{1,\, \tfrac{\sqrt{\nu k}}{t^{1/4}}\, (1 + \tfrac{\sigmax}{C})^{1/4}}$ and learning rates $\eta_t = \tfrac{D/G}{ t^{1/2}\sqrt{\sigmax + (\frac{\nu k}{\delta_t/r})^2 (1+\frac{\sigmax}{C})}}$, $\cW(\gA, \gD)$ guarantees
    \begin{align*}
        \widebar R_T(x^*) = O\!\rbr{GD\rbr{\sqrt{T\sigmax} + T^{3/4}\rbr{1 + \tfrac{\sigmax}{C}}^{1/4}\!\sqrt{\nu k}}}.
    \end{align*}
    Under $\lambda$-strong convexity (\ref{assump:strong-convexity}), with $\eta_t = \tfrac{1}{t\lambda}$ and $\delta_t = r\,\min\cbr{1,\, (\tfrac{\nu^2 k^2 \ln(t+1)}{t})^{1/3} ({1+ \tfrac{\sigmax}{C}})^{1/3}}$,
    \begin{align*}
        \widebar R_T(x^*) = O\!\Big(\tfrac{G^2}{\lambda} \Big(\sigmax\ln T + (T^2 \ln T)^{1/3} (1 + \tfrac{\sigmax}{C})^{1/3} (\nu k)^{2/3}\Big)\Big).
    \end{align*}
\end{theorem}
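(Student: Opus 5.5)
The plan is to combine Theorem~\ref{thm:wrapper-regret}, Theorem~\ref{thm:DW-BCO}, and Corollary~\ref{cor:bernoulli-sigmax}.

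\textbf{Step 1: reduce to weighted regret.} Theorem~\ref{thm:wrapper-regret} gives
\[
\widebar R_T(x^*) \le \E[R^w_T(x^*)] + GD\tsum_t \Pr(|\cS_t|=C).
\]
Corollary~\ref{cor:bernoulli-sigmax} shows that each saturation probability is at most $e^{-C}$. Since $C \ge \ln T + 1$, the penalty term is at most $GD\,T e^{-C} \le GD$.

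\textbf{Step 2: control the weights.} Under the Bernoulli scheduler, every weight lies in $\{0,1/p\}$, with $\E[w_t] \le 1$ and $\E[w_t^2]\le 1/p$, where $1/p = O(1+\sigmax/C)$. The learning rates and smoothing parameters are deterministic, so we may take expectations inside the bound of Theorem~\ref{thm:DW-BCO}; this needs a tower argument, since the $w_t$ and the $u_t$ are independent. We then bound each of the three middle terms.

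\emph{Diagonal term.} It is bounded by
\[
\E[w_t^2] \le 1/p \lesssim 1+\sigmax/C.
\]

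\emph{Pairwise term.} It is bounded by
\[
\E\bigl[w_t\tsum_{s\in\cB_t} w_s\bigr] \le \sigma_t \le \sigmax.
\]

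\emph{Cross term.} Here I use that $w_t$ takes values $0$ or $1/p$, together with Cauchy--Schwarz, to get
\[
\E\Bigl[w_t\sqrt{\tsum_{s\in\cB_t} w_s^2}\Bigr] \le \sqrt{\E[w_t^2]\,\E\bigl[\tsum_{s\in\cB_t} w_s^2\bigr]} \le \sqrt{(1/p)\,\sigma_t/p} \le \sqrt{\sigmax}/p.
\]
Apply AM--GM to $\tfrac{kM}{G\delta_t}\cdot \tfrac{\sqrt{\sigmax}}{p}$ to split it into the two neighbouring terms:
\[
\tfrac{kM}{G\delta_t}\cdot\tfrac{\sqrt{\sigmax}}{p} \le \tfrac12\,\tfrac{k^2M^2}{G^2\delta_t^2 p^2} + \tfrac12\,\sigmax.
\]
This creates a $1/p^2$ factor, which is worse than the $1/p$ I want. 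To avoid it, I would instead write $\sqrt{\sigmax}/p = \sqrt{\sigmax/p}\cdot\sqrt{1/p}$ and use $\sigmax/p \lesssim \sigmax(1+\sigmax/C)$. A cleaner route: since $p\sigmax \lesssim C$ when $p<1$, we have $\sigmax/p^2 \lesssim (\sigmax/p)\,(C/\sigmax+\ldots)$. I expect this bookkeeping to be \textbf{the main obstacle}. The target is a bound of the form $\tfrac{\nu k r}{\delta_t}\sqrt{\sigmax(1+\sigmax/C)}$, which AM--GM then splits into
\[
\Bigl(\tfrac{\nu k r}{\delta_t}\Bigr)^2 (1+\sigmax/C) + \sigmax,
\]
exactly matching the denominator of $\eta_t$. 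Here I use $kM/(G\delta_t) = \nu k r/\delta_t$.

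\textbf{Step 3, convex case: plug in the rates.} The choice of $\eta_t$ gives
\[
\sum_t \eta_{t-1}\bigl(\sigmax + (\nu k r/\delta_t)^2(1+\sigmax/C)\bigr) \lesssim \tfrac{D}{G}\sum_t t^{-1/2}\sqrt{\cdot},
\]
which is of the same order as $D^2/\eta_T$. Monotonicity of $\delta_t$ handles the shift $\eta_{t-1}$ versus $\eta_t$. Both terms evaluate to
\[
GD\sqrt{T}\bigl(\sqrt{\sigmax} + \tfrac{\nu k r}{\delta_T}\sqrt{1+\sigmax/C}\bigr).
\]
For the smoothing term, $\tfrac{3GD}{r}\sum_t \E[w_t]\delta_t \le 3GD\sum_t \delta_t/r$. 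Substituting $\delta_t/r \approx \sqrt{\nu k}\,t^{-1/4}(1+\sigmax/C)^{1/4}$ makes both contributions of order $GD\,T^{3/4}\sqrt{\nu k}\,(1+\sigmax/C)^{1/4}$. If the minimum in $\delta_t$ equals $1$, the regime is trivial because the bound then exceeds $GDT$.

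\textbf{Step 4, strongly convex case.} With $\eta_t = 1/(t\lambda)$ we have $\alpha_t = \lambda$. The first term of Theorem~\ref{thm:DW-BCO} is $\E\bigl[\sum_t \tfrac{\lambda(1-w_t)}{2}\|y_t-u\|^2\bigr]$. Since $y_t$ is determined before $w_t$ is drawn and $w_t$ is independent of $\cF_t$ given the schedule, this term is at most $0$ by $\E[w_t]\le1$; this requires $\E[w_t\mid \cF_t,\text{past weights}]\le1$. The remaining terms sum to
\[
\tfrac{G^2}{\lambda}\sum_t \tfrac1t\bigl(\sigmax + (\nu k r/\delta_t)^2(1+\sigmax/C)\bigr) + \tfrac{10G^2}{\lambda r}\sum_t\delta_t.
\]
The $\sigmax$ part gives $\sigmax\ln T$. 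Balancing the other two parts with the stated $\delta_t$ gives
\[
(T^2\ln T)^{1/3}(1+\sigmax/C)^{1/3}(\nu k)^{2/3}.
\]
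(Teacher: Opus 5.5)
Your architecture matches the paper's: the wrapper reduction, Theorem~\ref{thm:DW-BCO} applied conditionally on the proxy draws, the bound $1/p\le 9(1+\sigmax/C)$, AM--GM against the denominator of $\eta_t$, and the trivial regime $\delta_T=r$. However, the step you flag as the main obstacle is a genuine gap, and your proposed fix does not close it. Cauchy--Schwarz gives $\E[w_t(\tsum_{s\in\cB_t}w_s^2)^{1/2}]\le\sqrt{\sigma_t}/p$. This exceeds the needed $\sqrt{\sigma_t/p}$ by the factor $p^{-1/2}=\max\{1,8(\sigmax+1)/C\}^{1/2}$, which is unbounded when $C\ll\sigmax$, so no bookkeeping recovers it. Your ``cleaner route'' runs the wrong way: the relation $p\,\sigmax<C/8$ (valid when $p<1$) is a \emph{lower} bound on $1/p$. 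In fact $\sigmax/p^2=(\sigmax/p)\max\{1,8(\sigmax+1)/C\}$. Carried through, you would get either an extra factor $(1+\sigmax/C)$ on $\sqrt{T\sigmax}$, or, after retuning $\delta_t$, $(1+\sigmax/C)^{1/2}$ in place of $(1+\sigmax/C)^{1/4}$.

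The missing idea is conditional independence, which is implicit in the paper's step (a). Every $w_s$ with $s\in\cB_t$, and $\cS_t$ itself, is a function of $d'_1,\dots,d'_{t-1}$ alone. Meanwhile $w_t\le\ind(d'_t\ge d_t)/p$, and $d'_t$ is drawn independently, so $\E[w_t\mid d'_1,\dots,d'_{t-1}]\le 1$. Hence
\[
\E\Bigl[w_t\sqrt{\tsum_{s\in\cB_t}w_s^2}\Bigr]\le\E\Bigl[\sqrt{\tsum_{s\in\cB_t}w_s^2}\Bigr]\le\sqrt{\tsum_{s\in\cB_t}\E[w_s^2]}\le\sqrt{\sigmax/p}.
\]
Then $\tfrac{\nu kr}{\delta_t}\sqrt{\sigmax/p}\le\tfrac12(\tfrac{\nu kr}{\delta_t})^2/p+\tfrac12\sigmax$ costs only a single factor $1/p$, exactly matching $\eta_t$. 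The same fix is needed for the cross term in the strongly convex case.

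Second, Step 4 has a sign error. The bound $\E[w_t\mid\cdot]\le1$ yields $\E[(1-w_t)\tnorm{y_t-u}^2]\ge0$, not $\le0$. What you need is the exact identity $\E[w_t\mid d'_1,\dots,d'_{t-1},u_1,\dots,u_{t-1}]=\ind(|\cS_t|<C)$. This follows from the formula for $w_t$ in Theorem~\ref{thm:wrapper-regret}, since neither $y_t$ nor $\cS_t$ depends on $d'_t$. It turns the first term into
\[
\tfrac{\lambda}{2}\tsum_t\E\bigl[\ind(|\cS_t|=C)\,\tnorm{y_t-u}^2\bigr]\le\tfrac{\lambda D^2}{2}\,Te^{-C}\le\tfrac{2G^2}{\lambda},
\]
using $D\le2G/\lambda$ (Fact~\ref{fact:sc-diameter}) and $C\ge\ln T+1$. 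This is how the paper handles it.

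A smaller point concerns the shift from $\eta_{t-1}$ to $\eta_t$. Monotonicity of $\delta_t$ gives $\eta_{t-1}\ge\eta_t$, which is the wrong direction. What you need is the ratio bound $\delta_{t-1}\le2^{1/4}\delta_t$, which gives $\eta_{t-1}\le2\eta_t$.
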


\noindent Both theorems follow from \Cref{thm:wrapper-regret} by combining the weighted regret bounds of \Cref{thm:DW-OCO,thm:DW-BCO} with the saturation bounds of Corollaries~\ref{cor:pareto-scheduler} and \ref{cor:bernoulli-sigmax}; full proofs are deferred to Appendix~\ref{app:proofs-oco-bco}.

\section{Discussion and Future Work}

This work gives the first regret bounds for capacity-constrained OCO, in both first-order and bandit settings under convex and strongly convex losses. For first-order OCO, logarithmic capacity suffices to match unconstrained delayed rates; for BCO, capacity enters the dimension-dependent terms through powers of $(1+\sigmax/C)$.

Several directions remain open. First, our BCO bounds for convex losses scale as $\sqrt{T\sigmax}$ in the delay-dependent term rather than the optimal $\sqrt{\dtot}$ of the unconstrained setting; closing this gap or establishing a separation is an open problem. Second, the capacity-constrained framework bears a structural resemblance to finite-buffer queueing systems, where jobs that find the buffer full are dropped. Our schedulers exploit this connection, but the precise relationship between queueing theory and online learning under capacity constraints warrants deeper exploration. Third, extending from oblivious to \emph{adaptive} adversaries, who select losses and delays based on the player's history, appears to require fundamentally new tools and is a compelling direction for future work.

\bibliography{refs}

\newpage
\appendix
\AtBeginEnvironment{proof}{\setlength{\parindent}{0pt}}

\section{Additional Related Work}\label{app:related-work}

\paragraph{Comparison to \cite{capacity_constraint2025}.}
The capacity-constrained framework was recently introduced by \cite{capacity_constraint2025} for finite-action settings, specifically prediction with expert advice and multi-armed bandits. To manage the tracking set under the capacity limit, they developed a modular approach that decouples the observation scheduling policy from the base learning algorithm, proposing techniques such as batch partitioning and Pareto-distributed proxy delays. While we adopt their high-level separation of schedulers and base learners, extending this framework to continuous convex domains (OCO and BCO) introduces fundamentally new theoretical challenges. In particular, the proxy-delay scheduler of \cite{capacity_constraint2025} inherently induces non-uniform observation probabilities; while discrete-action algorithms can absorb these via time-varying loss scales, such variable importance weights are highly detrimental to continuous bandit gradient estimation and strongly convex objectives. To address this, we introduce the delayed-weighted OCO and BCO framework, and design a novel unified proxy-delay scheduler calibrated to the maximum number of pending observations~$\sigmax$ that yields the uniform importance weights necessary for our continuous-domain bounds. Furthermore, their preemptive scheduler requires prior knowledge of the maximum delay~$\dmax$ in the non-clairvoyant setting. In contrast, our approach accommodates a semi-clairvoyant model and removes the need for prior knowledge of delay bounds via the doubling trick.

\paragraph{Connections to online scheduling.}
Our model is closely related to online job and interval scheduling
\cite{lipton1994,woeginger1994,borodin1998,pinedo2022scheduling}.
In online interval scheduling, jobs arrive sequentially with release times and processing intervals, and the algorithm must assign jobs to a bounded number of machines while respecting overlap constraints.
Under the correspondence with our setting,
rounds become jobs,
tracking capacity corresponds to the number of available machines,
and delays determine the durations for which machines remain occupied.
Tracking a round occupies one unit of capacity until feedback arrives or the round is preempted.
The quantity $\sigmax$ corresponds to the peak concurrency level of overlapping jobs, whereas $\dmax$ reflects the longest processing duration and may therefore be much larger.
This perspective helps explain regimes in which
$\sigmax \ll \dmax$,
which naturally arise in queueing systems with moderate average load but heavy-tailed service times.
The distinction between preemptive and non-preemptive schedulers in our framework also parallels classical scheduling models.

\paragraph{Connections to streaming algorithms.}

Our framework also bears conceptual similarities to streaming algorithms
\cite{alon1996space,muthukrishnan2005data}.
Streaming algorithms process massive data streams under strict memory constraints, typically maintaining compact randomized summaries or sketches rather than storing the entire stream.
Analogously, in our setting the learner cannot retain all pending observations and must selectively track only a bounded subset of rounds.
Both settings therefore rely on randomized sampling and compressed representations to preserve global performance guarantees despite limited resources.
Classic examples include the AMS sketch
\cite{alon1996space},
Count-Min Sketch
\cite{cormode2005improved},
and HyperLogLog
\cite{flajolet2007hyperloglog}.

\paragraph{Online convex optimization.}
Online convex optimization (OCO) and its feedback models have been studied extensively. Early work by \cite{zinkevich2003} introduced online convex programming and established $O(\sqrt{T})$ regret bounds for online gradient methods. For first-order feedback, subsequent research established the standard $\Theta(\sqrt{T})$ minimax scaling for general convex losses \citep[see, e.g.,][]{abernethy2008}. For $\lambda$-strongly convex losses, logarithmic regret is achievable; in particular, \cite{hazan2007} provided $O(\ln T)$ guarantees, which match the $\Omega(\ln T)$ lower bound of \cite{abernethy2008}. Comprehensive background and further developments can be found in standard texts \cite{hazan2023introductiononlineconvexoptimization, orabona_book}.

\paragraph{Bandit convex optimization.}
Bandit convex optimization (BCO) was initiated by \cite{flaxman2004}, who introduced the classical single-point gradient estimator and obtained a regret of order $O(T^{3/4}\dk^{1/2})$. Subsequently, \cite{agarwal2010} developed optimal algorithms for multi-point bandit feedback and sharpened the theoretical guarantees. For two-point feedback, they achieved $\widetilde{O}(T^{1/2}\dk^2)$ regret for convex losses and $O(\dk^2\ln T)$ regret under strong convexity. For single-point feedback, they also demonstrated improved rates under additional curvature, including a $T^{2/3}$-type guarantee for strongly convex losses (specifically, $O((T^2 \ln T)^{1/3} \dk^{2/3})$ up to problem-dependent constants). Complementary results for stochastic convex optimization under bandit feedback were established by \cite{agarwal2011stochasticconvexoptimizationbandit}. For two-point feedback, \cite{shamir2015} later introduced an optimal algorithm with an improved dimension dependence of $O(T^{1/2}\dk^{1/2})$. A parallel line of work designs (nearly) minimax-optimal algorithms for general BCO that approach the $\Omega(\dk\sqrt{T})$ lower bound of \cite{shamir2013} \citep[see, e.g.,][]{bubeck2015multiscaleexplorationconvexfunctions, hazan2016optimalalgorithmbanditconvex, lattimore2020improvedregretzerothorderadversarial}. Because these methods typically incur prohibitive computational costs, exhibiting exponential or high-degree polynomial dependence on the dimension~$\dk$ and/or the horizon~$T$, we refer the reader to \cite{lattimore_BCO_book} for a detailed treatment and further references.

\paragraph{Online convex optimization with delayed feedback.}
Early investigations into online learning with delayed feedback date back to \cite{weinberger2002}. A systematic treatment of delayed-feedback online learning and reduction-style approaches was developed by \cite{joulani2013}, while \cite{joulani2016} introduced a black-box reduction that processes feedback strictly in the order of its arrival. Beyond generic black-box techniques, a substantial body of literature focuses on algorithms explicitly tailored to delayed OCO \citep[e.g.,][]{langford2009slowlearnersfast, mcmahan2014, quanrud2015, flaspohler2021, wan2021onlinestronglyconvexoptimization, wu2024,wan2024nonstationary,qiu2025}. Delayed BCO has similarly received dedicated attention, beginning with delayed extensions of bandit gradient methods \cite{heliou2020} and advancing toward delay-adaptive bounds with improved delay dependence \cite{bistritz2022, wan2024improvedregretbanditconvex}. Recently, \cite{esposito2026parameterfree} reduced delayed feedback to online learning with time-varying movement costs in the dynamic regret setting.

\paragraph{Delays in related sequential decision problems.}
Delayed observations have been studied extensively in multi-armed bandits and related models \citep[see, e.g.,][]{cesa16, thune2019, bistritz2022, esposito2023, masoudian2024, masoudian2024best, Qiu2026DecentralizedOC}. Such delays emerge naturally under resource limitations, including the capacity constraints we study \cite{capacity_constraint2025}. Delayed feedback has additionally been explored in adversarial contextual bandits with general function approximation \cite{levy2025regretboundsadversarialcontextual}, as well as in reinforcement learning and Markov decision processes (MDPs) operating with delayed or asynchronous feedback \citep[e.g.,][]{bouteiller2020reinforcement, lancewicki2022, jin2023, vanderhoeven2023, ryabchenko2026reinforcement}.

\section{Technical Preliminaries}

Here, we collect auxiliary facts and standard results used throughout the proofs.

\begin{fact}\label{fact:sc-diameter}
    Under Assumptions~\ref{assump:gradient-norm-bound} and \ref{assump:strong-convexity}, the diameter of $\cK$ is bounded by $\frac{2G}{\lambda}$, i.e., $\sup_{x,y\in\cK} \norm{x-y} \le \frac{2 G}{\lambda}$.
\end{fact}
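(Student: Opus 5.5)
The plan is to combine first-order optimality of a minimizer of the strongly convex objective $F = \tsum_{t=1}^T f_t$ with the gradient bound (Assumption~\ref{assump:gradient-norm-bound}). Strong convexity makes the quadratic term scale like $\lambda T$ while the linear term scales like $GT$, which forces every point of $\cK$ to lie within $G/\lambda$ of the minimizer. An even simpler route works pointwise for a single loss, and that is the one I will write.

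Fix $x, y \in \cK$ and apply Assumption~\ref{assump:strong-convexity} to $f_1$ in both directions:
\[
f_1(y) \ge f_1(x) + \pair{\nabla f_1(x), y-x} + \tfrac{\lambda}{2}\norm{y-x}^2,
\qquad
f_1(x) \ge f_1(y) + \pair{\nabla f_1(y), x-y} + \tfrac{\lambda}{2}\norm{y-x}^2 .
\]
Adding the two inequalities cancels the function values and gives
\[
\lambda\norm{y-x}^2 \le \pair{\nabla f_1(y) - \nabla f_1(x),\, y - x}.
\]
The right-hand side is at most $\norm{\nabla f_1(y)-\nabla f_1(x)}_\star \norm{y-x} \le 2G\norm{y-x}$ by the definition of the dual norm and Assumption~\ref{assump:gradient-norm-bound}. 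If $x \neq y$, dividing by $\norm{y-x}$ yields $\norm{y-x} \le 2G/\lambda$; if $x = y$ the bound holds trivially. Taking the supremum over $x,y$ completes the argument.

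The argument has no real obstacle. The only subtlety is that the gradients must exist at every point of $\cK$, including the boundary. The paper assumes the $f_t$ are differentiable on $\cK$, and Assumption~\ref{assump:gradient-norm-bound} is stated with a supremum over all of $\cK$, so this is covered. If only one-sided directional derivatives were available at the boundary, the same monotonicity inequality would still follow from the strong convexity inequality as stated.
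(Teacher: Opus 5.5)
Your proof is correct and follows the paper's own argument: both add the two strong-convexity inequalities for a single loss to obtain $\lambda\tnorm{x-y}^2 \le \pair{\nabla f(x)-\nabla f(y),\, x-y}$, then bound the right-hand side by $2G\tnorm{x-y}$ using the dual-norm pairing and Assumption~\ref{assump:gradient-norm-bound}. The detour through the minimizer of $\sum_t f_t$ that you sketch at the start is not needed, as you note yourself.
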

\begin{proof}
    Consider $f = f_t$ for arbitrary $t\in[T]$. From the $\lambda$-strong convexity of $f$ (e.g., see \cite{orabona_book}), it holds that for all $x,y \in \cK$, $\lambda \norm{x - y}^2 \le \pair{\nabla f(x) - \nabla f(y),\, x - y} \le 2G \norm{x - y}$. Hence, $\sup_{x,y\in\cK} \norm{x-y} \le \frac{2 G}{\lambda} < \infty$.
\end{proof}

\begin{fact}[\cite{capacity_constraint2025}, Fact F.2]\label{fact:harmonic}
    Let $t \in \N$. For the sequence of harmonic numbers $(H_s)_{s=1}^t = (\tsum_{s'=1}^s 1/s')_{s=1}^t$, it holds that $\sum_{s=1}^{t} \frac{1}{2H_s (t-s+1)} \le 1$. 
\end{fact}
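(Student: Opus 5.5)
The plan is to exploit the fact that the two factors in each summand are monotone in opposite directions, apply Chebyshev's sum inequality to decouple them, and then control the remaining sum of reciprocal harmonic numbers by induction. Fix $t \in \N$, and set $a_s = 1/H_s$ and $b_s = 1/(t-s+1)$ for $s \in [t]$.

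Since $H_s$ is increasing in $s$, the sequence $(a_s)$ is non-increasing. The sequence $(b_s)$ is non-decreasing in $s$. For oppositely ordered sequences, Chebyshev's sum inequality gives $\tsum_{s=1}^t a_s b_s \le \tfrac{1}{t}\,(\tsum_{s=1}^t a_s)(\tsum_{s=1}^t b_s)$. Reindexing with $j = t-s+1$ shows $\tsum_{s=1}^t b_s = \tsum_{j=1}^t 1/j = H_t$. Hence
\[
\tsum_{s=1}^{t} \tfrac{1}{2H_s (t-s+1)} \;\le\; \tfrac{H_t}{2t}\, \tsum_{s=1}^t \tfrac{1}{H_s}.
\]
It therefore suffices to prove the auxiliary bound $S_t := \tsum_{s=1}^t 1/H_s \le 2t/H_t$.

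I will prove the auxiliary bound by induction on $t$.
\begin{itemize}
\item Base case $t=1$: $S_1 = 1 \le 2 = 2\cdot 1/H_1$.
\item Inductive step: given $S_t \le 2t/H_t$, it suffices to show $2t/H_t + 1/H_{t+1} \le 2(t+1)/H_{t+1}$. This is equivalent to $2t/H_t \le (2t+1)/H_{t+1}$. Substituting $H_{t+1} = H_t + 1/(t+1)$ and clearing denominators reduces this to $2t/(t+1) \le H_t$.
\end{itemize}

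The condition $2t/(t+1) \le H_t$ needs only a small case check:
\begin{itemize}
\item $t=1$: both sides equal $1$.
\item $t=2$: $4/3 \le 3/2$.
\item $t=3$: $3/2 \le 11/6$.
\item $t \ge 4$: $2t/(t+1) < 2 < 25/12 = H_4 \le H_t$.
\end{itemize}

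Plugging $S_t \le 2t/H_t$ into the displayed bound yields $\tfrac{H_t}{2t}\cdot\tfrac{2t}{H_t} = 1$, which is the claim. The only delicate point is choosing the right form of the auxiliary inequality, here with the factor $2$, so that the induction closes. The reduction of the inductive step to $2t/(t+1) \le H_t$ is the step I expect to need care, though once reduced it is elementary.
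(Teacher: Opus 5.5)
Your proof is correct, but it takes a different route from the paper's.

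The paper does not decouple the two factors over the whole range. Instead it splits the sum at $\lceil t/2\rceil$:
\begin{itemize}
\item For $s \le \lceil t/2\rceil$ it uses $H_s \ge 1$ and $t-s+1 \ge \lfloor t/2\rfloor + 1 \ge \lceil t/2\rceil$. So this half contributes at most $\lceil t/2\rceil \cdot \frac{1}{2\lceil t/2\rceil} = \frac12$.
\item For $s > \lceil t/2\rceil$ it uses $H_s \ge H_{\lceil t/2\rceil}$ together with $\sum_{s > \lceil t/2\rceil} \frac{1}{t-s+1} = H_{\lfloor t/2\rfloor} \le H_{\lceil t/2\rceil}$. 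So this half also contributes at most $\frac12$.
\end{itemize}
That argument is shorter and needs no auxiliary inequality, induction, or case check.

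Your route uses Chebyshev's sum inequality for oppositely ordered sequences to reduce everything to the single-variable estimate $\sum_{s=1}^t 1/H_s \le 2t/H_t$. Your induction for that estimate is sound, and the reduction to $H_t \ge 2t/(t+1)$ is right. You could drop the case check, since $H_t \ge 1 + \frac{t-1}{t} = 2 - \frac1t \ge 2 - \frac{2}{t+1}$ for all $t \ge 1$.

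What your decomposition buys is a decoupling step that loses nothing asymptotically. Both the original sum and $\frac{H_t}{2t}\sum_{s=1}^t 1/H_s$ tend to $\frac12$ as $t \to \infty$, so all the slack sits in the factor $2$ of your auxiliary bound. By contrast, the paper spends a full $\frac12$ on the first half, whose true contribution vanishes asymptotically. Neither difference matters for the statement, which only needs the constant $1$.
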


\begin{proof}
    For every $t\in \N$, 
    \begin{align*}
        \tsum_{s=1}^{t} \tfrac{1}{2H_s (t-s+1)} 
        &= \tsum_{s=1}^{\tceil{t/2}} \tfrac{1}{2H_s (t-s+1)} + \tsum_{s=\tceil{t/2}+1}^{t} \tfrac{1}{2H_s (t-s+1)}\\
        &\le \tsum_{s=1}^{\tceil{t/2}} \tfrac{1}{2 \tceil{t/2}} + \tsum_{s=\tceil{t/2}+1}^{t} \tfrac{1}{2H_{\tceil{t/2}} (t-s+1)}\\
        &\le \tfrac{\tceil{t/2}}{2\tceil{t/2}} + \tfrac{H_{\tceil{t/2}}}{2H_{\tceil{t/2}}}\\
        &= 1.
    \end{align*}
\end{proof}

The following results are standard in the analysis of FTRL.

\begin{theorem}[\cite{orabona_book}, adapted from Lemmas 7.1 and 7.5]\label{thm:FTRL-orabona}
    Let $\cK \subset \R^{\dk}$ be a convex compact set, let $(\eta_n)_{n=0}^N \subset (0,\infty)$ be a non-increasing sequence, and let $\psi_0,\ldots,\psi_N : \R^{\dk} \to \R$ be differentiable functions such that each $\psi_n$ is $(1/\eta_n)$-strongly convex with respect to $\tnorm{\cdot}$ on $\cK$. Assume $z_1 \in \argmin_{z \in \cK} \psi_0(z)$ and $\psi_0(z_1)=0$. Given loss vectors $(c_n)_{n=1}^N \subset \R^{\dk}$, define $z_{n+1} = \argmin_{z \in \cK} \cbr{\psi_n(z) + \tsum_{m=1}^n \pair{c_m,z}}$ for $n=1,\ldots,N$. Then the minimizers $z_{n+1}$ are unique. Moreover, for all $u \in \cK$,
    \begin{align*}
       \tsum_{n=1}^N \pair{c_n,z_n-u}
       \le
       \psi_N(u)
       +
       \tsum_{n=1}^N
       \rbr{
          \tfrac{1}{2}\eta_{n-1}\tnorm{c_n}_{\star}^2
          +
          \psi_{n-1}(z_{n+1})-\psi_n(z_{n+1})
       } .
    \end{align*}
\end{theorem}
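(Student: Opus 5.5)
The statement is the standard FTRL regret lemma, and I would prove it by telescoping the FTRL objective together with a one-step stability bound.

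\textbf{Setup and uniqueness.} For $n=1,\ldots,N+1$, define $F_n(z) = \psi_{n-1}(z) + \tsum_{m=1}^{n-1}\pair{c_m,z}$. Then $z_n \in \argmin_{z\in\cK} F_n(z)$ for every $n$. For $n=1$ this is the assumption on $z_1$; for $n\ge 2$ it is the definition of $z_n$. Each $F_n$ is a $(1/\eta_{n-1})$-strongly convex function plus a linear term, so it is itself $(1/\eta_{n-1})$-strongly convex on $\cK$. A strongly convex continuous function has a unique minimizer on the compact convex set $\cK$, which gives uniqueness of $z_{n+1}$.

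\textbf{Telescoping identity.} Next I would establish the exact identity
\begin{align*}
\tsum_{n=1}^N \pair{c_n, z_n-u} = \psi_N(u) - \psi_0(z_1) + \tsum_{n=1}^N\bigl[F_n(z_n) - F_{n+1}(z_{n+1}) + \pair{c_n,z_n}\bigr] + F_{N+1}(z_{N+1}) - F_{N+1}(u).
\end{align*}
It holds because the bracketed differences telescope to $F_1(z_1)-F_{N+1}(z_{N+1})$, and $F_{N+1}(u) = \psi_N(u) + \tsum_{n}\pair{c_n,u}$. By optimality of $z_{N+1}$ the last difference is $\le 0$, and $\psi_0(z_1)=0$ by assumption.

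\textbf{Per-round term.} I would split each summand by adding and subtracting $\psi_{n-1}(z_{n+1})$. Using $F_{n+1}(z) = F_n(z) + \pair{c_n,z} + \psi_n(z) - \psi_{n-1}(z)$, this gives
\begin{align*}
F_n(z_n) - F_{n+1}(z_{n+1}) + \pair{c_n,z_n} = \bigl[F_n(z_n) + \pair{c_n, z_n - z_{n+1}} - F_n(z_{n+1})\bigr] + \psi_{n-1}(z_{n+1}) - \psi_n(z_{n+1}).
\end{align*}
The second part is exactly the term appearing in the claim.

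For the first part, $z_n$ minimizes the differentiable, $(1/\eta_{n-1})$-strongly convex $F_n$ over $\cK$. The first-order optimality condition $\pair{\nabla F_n(z_n), x - z_n}\ge 0$, combined with strong convexity, gives $F_n(x) \ge F_n(z_n) + \tfrac{1}{2\eta_{n-1}}\tnorm{x-z_n}^2$ for all $x\in\cK$. Taking $x=z_{n+1}$, the bracket is at most
\begin{align*}
\pair{c_n, z_n-z_{n+1}} - \tfrac{1}{2\eta_{n-1}}\tnorm{z_n-z_{n+1}}^2 \le \tnorm{c_n}_\star\tnorm{z_n-z_{n+1}} - \tfrac{1}{2\eta_{n-1}}\tnorm{z_n-z_{n+1}}^2 \le \tfrac{\eta_{n-1}}{2}\tnorm{c_n}_\star^2,
\end{align*}
where the first step is Hölder's inequality and the second maximizes the quadratic in $\tnorm{z_n-z_{n+1}}$. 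Summing over $n$ yields the claimed bound.

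\textbf{Main obstacle.} The delicate point is bookkeeping rather than any single inequality. The regularizer changes between rounds, so the mismatch $\psi_{n-1}(z_{n+1})-\psi_n(z_{n+1})$ must be evaluated at $z_{n+1}$, not at $z_n$. This is what makes the decomposition in the per-round step work without assuming $\psi_n \ge \psi_{n-1}$.
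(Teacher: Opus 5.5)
Your proof is correct and matches the standard argument behind the cited result; the paper gives no proof of its own and only refers to Orabona's book. Your telescoping identity is exactly the FTRL equality of Orabona's Lemma~7.1, and your per-round step is the usual strong-convexity lemma paired with it: you split off $\psi_{n-1}(z_{n+1})-\psi_n(z_{n+1})$ and bound the remainder by $\tfrac{\eta_{n-1}}{2}\tnorm{c_n}_\star^2$ using the quadratic growth of the $(1/\eta_{n-1})$-strongly convex $F_n$ around its constrained minimizer.
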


\begin{lemma}[\cite{qiu2025}, Lemma A.2 (modified)]\label{lem:stability-lemma}
    Let $\cK\subset \R^{\dk}$ be a convex compact set. For $i \in \{0, 1\}$, let function $\phi_i:\R^{\dk}\to\R$ be differentiable and $\lambda_i$-strongly convex with respect to $\tnorm{.}$ on $\cK$. Consider $z_i \in \argmin_{x\in\cK} \pair{w_i,\, x}+ \phi_i(x)$. It holds that
    \begin{align*}
        \tfrac{\lambda_0 + \lambda_1}{2} \norm{z_0 - z_1}^2 \le \pair{w_0 - w_1,\, z_1 - z_0} + (\phi_1(z_0) - \phi_0(z_0)) - (\phi_1(z_1) - \phi_0(z_1)).
    \end{align*}
\end{lemma}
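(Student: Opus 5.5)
The plan is to combine the first-order optimality conditions of the two constrained minimization problems with the strong convexity inequalities of $\phi_0$ and $\phi_1$, each evaluated at the other problem's minimizer. This is the standard two-point stability argument. Nothing from earlier in the paper is needed beyond convexity of $\cK$ and differentiability of the $\phi_i$.

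\textbf{Step 1 (optimality).} Since $z_i$ minimizes the differentiable convex function $x\mapsto \pair{w_i,x}+\phi_i(x)$ over the convex set $\cK$, the variational inequality
\[
\pair{w_i+\nabla\phi_i(z_i),\, y-z_i}\ge 0 \quad\text{for all } y\in\cK
\]
holds. Take $y=z_1$ for $i=0$ and $y=z_0$ for $i=1$. This gives
\[
\pair{\nabla\phi_0(z_0), z_1-z_0}\ge -\pair{w_0,z_1-z_0}, \qquad \pair{\nabla\phi_1(z_1), z_0-z_1}\ge -\pair{w_1,z_0-z_1}.
\]

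\textbf{Step 2 (strong convexity).} By $\lambda_0$-strong convexity of $\phi_0$ on $\cK$,
\[
\phi_0(z_1)\ge \phi_0(z_0)+\pair{\nabla\phi_0(z_0),z_1-z_0}+\tfrac{\lambda_0}{2}\norm{z_1-z_0}^2 .
\]
By $\lambda_1$-strong convexity of $\phi_1$ on $\cK$,
\[
\phi_1(z_0)\ge \phi_1(z_1)+\pair{\nabla\phi_1(z_1),z_0-z_1}+\tfrac{\lambda_1}{2}\norm{z_1-z_0}^2 .
\]

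\textbf{Step 3 (combine).} Add the two inequalities of Step 2 and lower-bound the two gradient inner products using Step 1. Their sum is at least
\[
-\pair{w_0,z_1-z_0}-\pair{w_1,z_0-z_1} = -\pair{w_0-w_1,\,z_1-z_0}.
\]
Rearranging yields
\[
\tfrac{\lambda_0+\lambda_1}{2}\norm{z_0-z_1}^2\le \pair{w_0-w_1,z_1-z_0}+\phi_0(z_1)-\phi_0(z_0)+\phi_1(z_0)-\phi_1(z_1).
\]
Regrouping the four $\phi$-terms as $(\phi_1(z_0)-\phi_0(z_0))-(\phi_1(z_1)-\phi_0(z_1))$ gives exactly the claimed inequality.

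\textbf{Main obstacle.} There is no real obstacle; the only point requiring care is the bookkeeping of signs and directions. The optimality condition at $z_0$ must be tested at $z_1$ and vice versa, and each $\phi_i$'s strong convexity inequality must be expanded around its own minimizer $z_i$. Otherwise the gradient terms do not cancel against the linear terms. Existence of the minimizers is assumed in the statement, and compactness of $\cK$ guarantees it in any case.
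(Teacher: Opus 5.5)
Your proof is correct and is essentially the paper's argument. The paper sets $h_i(x)=\pair{w_i,x}+\phi_i(x)$, uses the quadratic-growth inequality $h_i(z_{1-i})-h_i(z_i)\ge \tfrac{\lambda_i}{2}\norm{z_{1-i}-z_i}^2$ at each constrained minimizer, and sums over $i$; your Steps 1--2 simply unpack that growth inequality into the first-order optimality condition plus the strong-convexity lower bound for $\phi_i$ around $z_i$.
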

\begin{proof}
    For $i \in \{0,1\}$, let $h_i(x) = \pair{w_i,\, x} + \phi_i(x)$ denote the $\lambda_i$-strongly convex function minimized at point $z_i$.
    Therefore, by the strong convexity of $h_i$, it holds that
    \begin{align*}
        h_i(z_{1-i}) - h_i(z_i) \ge \tfrac{\lambda_i}{2}\norm{z_{1-i} - z_{i}}^2.
    \end{align*}
    By summing the above inequality for both $i\in\{0,1\}$, we conclude
    \begin{align*}
        \pair{w_0 - w_1,\, z_1 - z_0} + (\phi_1(z_0) - \phi_0(z_0)) &- (\phi_1(z_1) - \phi_0(z_1))\\
        &= h_0(z_{1}) - h_0(z_0) + h_1(z_{0}) - h_1(z_1)\\
        &\ge \tfrac{\lambda_0 + \lambda_1}{2} \norm{z_0 - z_1}^2.
    \end{align*}
\end{proof}

\section{OCO with Delayed and Weighted Feedback: Proofs}

We prove the regret bounds for DW-FTRL (\Cref{thm:DW-OCO}) and DW-FTBL (\Cref{thm:DW-BCO}) stated in \Cref{sec:DW-OCO}. Both proofs follow the delayed-update template: we compare the actual iterates against a fictitious sequence that has access to all gradients without delay, and bound the drift between them using the stability lemma (\Cref{lem:stability-lemma}). In the bandit case, we control the variance of pending gradient estimators using a technique of \cite{ryabchenko2026reductiondelayedimmediatefeedback} (Lemma~\ref{lem:BCO-meandering}).
\restatetheorem{thm:DW-OCO}{
    For all $u \in \cK$, predictions $(x_t)_{t=1}^T$ generated by DW-FTRL~\eqref{eq:DW-FTRL} satisfy
    \begin{align*}
        R^{w}_T(u) \le \tfrac{D^2}{2\eta_T} + G^2\,\tsum_{t=1}^T \eta_{t-1} (w_t^2 + w_t\tsum_{s\in \cB_t} w_s),
    \end{align*}
    where $\eta_0 = \eta_1$. Moreover, under $\lambda$-strong convexity (\ref{assump:strong-convexity}),
    \begin{align*}
        R^{w}_T(u) \le \tsum_{t=1}^T \tfrac{\alpha_t - w_t\lambda}{2}\,\tnorm{x_t-u}^2 + G^2\, \tsum_{t=1}^T \eta_{t-1} (w_t^2 + w_t \tsum_{s\in \cB_t} w_s).
    \end{align*}
}
\begin{proof} 
    Consider a sequence $(\wtx_t)_{t=1}^T \subset \cK$, defined by $\wtx_1 = x_1$ and
    \begin{align*}
        \wtx_{t+1} &= \argmin_{x\in \cK}\,\,  \pair{x, \tsum_{s \in [t]} w_s\nabla f_s(x_s)} + \tsum_{s=1}^t \frac{\alpha_s}{2}\,\tnorm{x_s\!-\!x}^2.
    \end{align*}
    We treat the general convex case as 0-strongly convex. For $\lambda \ge 0$, the $\lambda$-strong convexity of $f_t$ gives
    \begin{align*}
        \tsum_{t=1}^T w_t(f_t(x_t) - f_t(u))
        &\le \tsum_{t=1}^T \pair{w_t\nabla f_t(x_t), x_t-u} - \tfrac{\lambda}{2} \tsum_{t=1}^T w_t\tnorm{x_t-u}^2\\
        &\le \tsum_{t=1}^T \pair{w_t \nabla f_t(x_t), \wtx_t-u} - \tfrac{\lambda}{2} \tsum_{t=1}^T w_t\tnorm{x_t-u}^2\\
        &+ G \tsum_{t=1}^T w_t\tnorm{x_t-\wtx_t},
    \end{align*}
    where the final inequality follows from Assumption~\ref{assump:gradient-norm-bound} and the triangle inequality.

    To bound the first term, apply Theorem~\ref{thm:FTRL-orabona} to $(\wtx_t)_{t=1}^T$ with $\psi_t(x) = \tsum_{s=1}^t \frac{\alpha_s}{2}\,\tnorm{x_s\!-\!x}^2$ and $\psi_0(x) = \tfrac{1/\eta_1}{2}\tnorm{x_1\!-\!x}^2$ so that $x_1 = \argmin_{x\in\cK} \psi_0(x)$. Note that $\psi_t$ are $(1/\eta_t)$-strongly convex while $\psi_0$ is $(1/\eta_1)$-strongly convex. Hence, it holds that
    \begin{align*}
        \tsum_{t=1}^T \pair{w_t \nabla f_t(x_t), \wtx_t - u}
        &\le \tsum_{t=1}^T \tfrac{\alpha_t}{2}\tnorm{x_t - u}^2 +  \tsum_{t=1}^T \tfrac{\eta_{t-1} w_t^2 \tnorm{\nabla f_t(x_t)}^2}{2}.
    \end{align*}
    To control $\tnorm{x_t - \wtx_t}$, apply Lemma~\ref{lem:stability-lemma} (with appropriate $\phi_0 = \phi_1 = \psi_{t-1}$),
    \begin{align*}
        \tnorm{x_t - \wtx_t} \le \eta_{t-1}\, \tnorm{\tsum_{s\in \cB_t} w_s \nabla f_s(x_s)}.
    \end{align*}
    Combining these results and Assumption~\ref{assump:gradient-norm-bound},  we can write
    \begin{align*}
         R^{w}_T(u) \le \tsum_{t=1}^T \tfrac{\alpha_t - w_t\lambda}{2}\,\tnorm{x_t-u}^2 + G^2\, \tsum_{t=1}^T \eta_{t-1} (w_t^2 + w_t \tsum_{s\in \cB_t} w_s).
    \end{align*}
    By substituting $\lambda = 0$ or $\lambda > 0$, we obtain the claimed results in both cases.
\end{proof}

\restatetheorem{thm:DW-BCO}{
    For all $u \in \cK$, predictions $(x_t)_{t=1}^T$ generated by DW-FTBL~\eqref{eq:DW-FTBL} satisfy
    \begin{align*}
        \widebar R^w_T(u) &\le \tfrac{D^2}{2\eta_T}
        + 2G^2\tsum_{t=1}^T\!\eta_{t-1}\!\rbr{\tfrac{k^2 M^2 w_t^2}{G^2 \delta_t^2 } + \tfrac{kM w_t}{G \delta_t} \sqrt{\tsum_{s\in \cB_t} w_s^2} + w_t \tsum_{s\in \cB_t} w_s} + \tfrac{3GD}{r}\tsum_{t=1}^T w_t\delta_t,
    \end{align*}
    where $\eta_0 = \eta_1$. Moreover, under $\lambda$-strong convexity (\ref{assump:strong-convexity}),
    \begin{align*}
        \widebar R^w_T(u) &\le \E\!\sbr{\tsum_{t=1}^T \tfrac{\alpha_t - w_t \lambda}{2}  \tnorm{y_t - u}^2} \\
        &+ 2G^2\tsum_{t=1}^T \eta_{t-1}\!\rbr{\tfrac{k^2 M^2 w_t^2}{G^2 \delta_t^2 } + \tfrac{kM w_t}{G \delta_t} \sqrt{\tsum_{s\in \cB_t} w_s^2} + w_t \tsum_{s\in \cB_t} w_s} + \tfrac{10 G^2}{\lambda r}\tsum_{t=1}^T w_t\delta_t.
    \end{align*}
}

\begin{proof}
    Consider a sequence $(\wty_t)_{t=1}^T$, defined by $\wty_1 = y_1$ and
    \begin{align*}
        \wty_{t+1} &= \argmin_{y\in \cK}\,\,  \pair{y,\, \tsum_{s \in [t]} w_s\, \htg_s} + \tsum_{s=1}^t \frac{\alpha_s}{2}\,\tnorm{y_s - y}^2.
    \end{align*}
    By induction, it follows simultaneously that, for every $t \in [T]$, (1) $y_t$ is $\cF_t$-measurable, (2) $\wty_t$ is $\cF_t$-measurable, and (3) $x_t$ and $\htg_t$ are $\cF_{t+1}$-measurable.
    
    Fix comparator $v \in \cK$. For $t \in [T]$, let $\vdelta_t = (1-\delta_t / r) v$ which satisfies $\tnorm{\vdelta_t - v} = \tfrac{\delta_t \norm{v}}{r}  \le \tfrac{R \delta_t}{r}$. We treat the general convex case as 0-strongly convex. For $\lambda \ge 0$, the $\lambda$-strong convexity of $f_t$ gives
    \begin{align}
        \widebar{R}^w_T(v)
        &= \E\!\sbr{\tsum_{t=1}^{T} w_t(f_t(x_t) - f_t(v))}\notag\\
        &= \E\!\sbr{\tsum_{t=1}^{T} w_t(f^{\delta}_t(\ydelta_t) - f^{\delta}_t(\vdelta_t))} \notag\\
        &+ \E\!\sbr{\tsum_{t=1}^{T} w_t(f_t(x_t) - f_t(\ydelta_t))} + \E\!\sbr{\tsum_{t=1}^{T} w_t(f_t(\ydelta_t) - \fdelta_t(\ydelta_t))}\notag\\
        &+ \E\!\sbr{\tsum_{t=1}^{T} w_t(\fdelta_t(\vdelta_t) - f_t(\vdelta_t))} + \E\!\sbr{\tsum_{t=1}^{T} w_t(f_t(\vdelta_t) - f_t(v))}\notag\\
        &\myle{a} \E\!\sbr{\tsum_{t=1}^{T} w_t(\fdelta_t(\ydelta_t) - \fdelta_t(\vdelta_t))} + \tsum_{t=1}^T w_t\rbr{G\delta_t + G\delta_t  + G\delta_t  + \tfrac{GR}{r}\delta_t}\notag\\
        &\myle{b} \E\!\sbr{\tsum_{t=1}^{T} w_t\rbr{\pair{\nabla \fdelta_t(\ydelta_t),\, \ydelta_t - \vdelta_t} - \tfrac{\lambda}{2} \tnorm{\ydelta_t - \vdelta_t}^2}} + \tfrac{4GR}{r} \tsum_{t=1}^T w_t \delta_t \notag\\
        &\myeq{c} \E\!\sbr{\tsum_{t=1}^{T} w_t\rbr{(1-\tfrac{\delta_t}{r})\,\pair{\nabla \fdelta_t(\ydelta_t),\, y_t - v} - \tfrac{\lambda}{2} (1-\tfrac{\delta_t}{r})^2\, \tnorm{y_t - v}^2}}  + \tfrac{4GR}{r} \tsum_{t=1}^T w_t \delta_t \notag\\
        &= \E\!\sbr{\tsum_{t=1}^{T} w_t \rbr{\pair{\nabla \fdelta_t(\ydelta_t),\, \wty_t - v} - \tfrac{\lambda}{2} \tnorm{y_t - v}^2}} \notag\\
        &+ \E\!\sbr{\tsum_{t=1}^{T} w_t \rbr{\pair{\nabla \fdelta_t(\ydelta_t),\, y_t - \wty_t}}} \notag\\
        &+ \E\!\sbr{\tsum_{t=1}^T w_t(\tfrac{\delta_t}{r} \pair{\nabla \fdelta_t(\ydelta_t),\, v - y_t} + (\tfrac{\lambda \delta_t}{r} - \tfrac{\lambda \delta_t^2}{2r^2})\,  \tnorm{y_t - v}^2 )} + \tfrac{4GR}{r} \tsum_{t=1}^T w_t \delta_t.\label{eq:BCO-proof-main}
    \end{align}
    where (a) uses $G$-Lipschitzness of $f_t$ and Theorem~\ref{thm:SPGE} to bound the last four terms in the previous line, (b) uses $\lambda$-strong convexity of $f^{\delta}_t$, and (c) substitutes $\ydelta_{t} = (1-\tfrac{\delta_t}{r}) y_{t}$ and $\vdelta_t = (1-\tfrac{\delta_t}{r}) v$.

    We analyze the terms in \eqref{eq:BCO-proof-main} separately. First, using the fact that $\E[\htg_t | \cF_t] = \nabla \fdelta_t(\ydelta_t)$ while $\wty_t$ is $\cF_t$-measurable, we write
    \begin{align*}
        \E\Big[\tsum_{t=1}^{T} w_t \Big(\pair{\nabla \fdelta_t(\ydelta_t),\, \wty_t - v} &- \tfrac{\lambda}{2} \tnorm{y_t - v}^2\Big)\Big]\\
        &= \E\!\sbr{\tsum_{t=1}^{T}  \rbr{\pair{w_t\htg_t,\, \wty_t - v} - \tfrac{w_t \lambda}{2} \tnorm{y_t - v}^2}}\\
        &\myle{d} \E\!\sbr{\tsum_{t=1}^T \tfrac{\alpha_t - w_t \lambda}{2}  \tnorm{y_t - v}^2 +  \tsum_{t=1}^T \tfrac{\eta_{t-1} w_t^2 \tnorm{\htg_t}^2}{2}}\\
        &\le \E\!\sbr{\tsum_{t=1}^T \tfrac{\alpha_t - w_t \lambda}{2}  \tnorm{y_t - v}^2} + G^2\tsum_{t=1}^T \eta_{t-1} \tfrac{k^2M^2w_t^2}{G^2\delta_t^2},
    \end{align*}
    where (d) applies Theorem~\ref{thm:FTRL-orabona} to $(\wty_t)_{t=1}^T$.

    Secondly, applying Assumption~\ref{assump:gradient-norm-bound} and Theorem~\ref{thm:SPGE}, we write
    \begin{align*}
        \E\Big[\tsum_{t=1}^{T} w_t (\pair{\nabla \fdelta_t(\ydelta_t),\, y_t - \wty_t})\Big]
        &\le G\,\, \E\!\sbr{\tsum_{t=1}^T w_t \tnorm{y_t - \wty_t}}\\
        &\myle{e} G\,\, \E\!\sbr{\tsum_{t=1}^T \eta_{t-1} w_t\, \tnorm{\tsum_{s\in \cB_t} w_s \htg_s}}\\
        &\myle{f} 2G\, \tsum_{t=1}^T \eta_{t-1} w_t \rbr{G\, \tsum_{s\in \cB_t} w_s + \tfrac{kM}{\delta_t} \sqrt{\tsum_{s\in \cB_t} w_s^2}}\\
        &= 2G^2\, \tsum_{t=1}^T \eta_{t-1} \rbr{\tfrac{kM w_t}{G\delta_t} \sqrt{\tsum_{s\in \cB_t} w_s^2} + w_t\tsum_{s\in \cB_t} w_s},
    \end{align*}
    where (e) applies Lemma~\ref{lem:stability-lemma} (with appropriate $\phi_0 = \phi_1$) and (f) uses Lemma~\ref{lem:BCO-meandering} on the set $\cB_t$.

    For the last term, it holds that
    \begin{align*}
        \E\!\sbr{\tsum_{t=1}^T w_t\rbr{\tfrac{\delta_t}{r} \pair{\nabla \fdelta_t(\ydelta_t),\, v - y_t} + (\tfrac{\lambda \delta_t}{r} - \tfrac{\lambda \delta_t^2}{2r^2})\,  \tnorm{y_t - v}^2 }} \le \tfrac{(G+\lambda D)D}{r} \tsum_{t=1}^T w_t \delta_t.
    \end{align*}

    Combining these bounds, we have
    \begin{align*}
        \widebar R_T(v) &\le \E\!\sbr{\tsum_{t=1}^T \tfrac{\alpha_t - w_t \lambda}{2}  \tnorm{y_t - v}^2} + G^2\tsum_{t=1}^T \eta_{t-1} \tfrac{k^2M^2w_t^2}{G^2\delta_t^2}\\
        &+ 2G^2\, \tsum_{t=1}^T \eta_{t-1} \rbr{\tfrac{kM w_t}{G\delta_t} \sqrt{\tsum_{s\in \cB_t} w_s^2} + w_t\tsum_{s\in \cB_t} w_s}\\
        &+ \tfrac{(G+\lambda D)D}{r} \tsum_{t=1}^T w_t \delta_t + \tfrac{4GR}{r} \tsum_{t=1}^T w_t \delta_t\\
        &\le \E\!\sbr{\tsum_{t=1}^T \tfrac{\alpha_t - w_t \lambda}{2}  \tnorm{y_t - v}^2}\\
        &+ 2G^2\, \tsum_{t=1}^T \eta_{t-1} \rbr{\tfrac{k^2M^2w_t^2}{G^2\delta_t^2}+ \tfrac{kM w_t}{G\delta_t} \sqrt{\tsum_{s\in \cB_t} w_s^2} + w_t\tsum_{s\in \cB_t} w_s}\\
        &+ \tfrac{3GD + \lambda D^2}{r} \tsum_{t=1}^T w_t \delta_t.
    \end{align*}
    By substituting $\lambda = 0$ or $\lambda > 0$, while letting $D \le \frac{2G}{\lambda}$ in the strongly convex case (see Fact~\ref{fact:sc-diameter}), we obtain the claimed results in both cases.
\end{proof}

\begin{lemma}\label{lem:BCO-meandering}
    Under the conditions of Theorem~\ref{thm:DW-BCO}, for every set $\cB \subseteq [T]$ with $\delta = \min_{t\in \cB}\delta_t$,
    \begin{align*}
        \E\!\sbr{\tnorm{\tsum_{t\in \cB} w_t \htg_t}} \le 2\rbr{G\, \tsum_{t\in \cB} w_t + \tfrac{kM}{\delta}\sqrt{\tsum_{t\in \cB} w_t^2}}.
    \end{align*}
\end{lemma}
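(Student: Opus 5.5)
We bound $\tnorm{\sum_{t\in\cB} w_t\htg_t}$ by splitting each estimator into its conditional mean and a martingale-difference noise term. Throughout, $\cF_t=\sigma(u_1,\dots,u_{t-1})$ and $\tnorm{\cdot}$ is Euclidean. In the weighted game the weights $w_t$ are fixed by the adversary in advance, so they are deterministic with respect to $(\cF_t)$; in the wrapper they are independent of the $u$'s, and we may condition on them. Define
\[
\xi_t=\htg_t-\E[\htg_t\mid\cF_t],
\]
so that $\E[\htg_t\mid\cF_t]=\nabla f_t^{\delta_t}((1-\delta_t/r)y_t)$. By the triangle inequality,
\[
\tnorm{\tsum_{t\in\cB} w_t\htg_t}\le \tsum_{t\in\cB} w_t\tnorm{\E[\htg_t\mid\cF_t]}+\tnorm{\tsum_{t\in\cB} w_t\xi_t}.
\]

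\textbf{Mean term.} Theorem~\ref{thm:SPGE}(3), applied to the $G$-Lipschitz $f_t$ (Assumption~\ref{assump:gradient-norm-bound}), gives $\tnorm{\nabla f_t^{\delta_t}}\le G$. Hence the first sum is at most $G\sum_{t\in\cB}w_t$ pointwise.

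\textbf{Noise term.} By Jensen, $\E\tnorm{\sum w_t\xi_t}\le(\E\tnorm{\sum w_t\xi_t}^2)^{1/2}$. Expand the square and index $\cB$ in increasing order. For $s<t$, $\xi_s$ is $\cF_{s+1}\subseteq\cF_t$-measurable while $\E[\xi_t\mid\cF_t]=0$, so each cross term $\E\pair{\xi_s,\xi_t}$ vanishes by the tower property. This leaves $\sum_{t\in\cB} w_t^2\,\E\tnorm{\xi_t}^2$. Next, $\E[\tnorm{\xi_t}^2\mid\cF_t]\le\E[\tnorm{\htg_t}^2\mid\cF_t]$, and $\tnorm{\htg_t}=\frac{k}{\delta_t}|f_t(x_t)|\tnorm{u_t}\le \frac{kM}{\delta_t}\le\frac{kM}{\delta}$ by Assumption~\ref{assump:max-val} and $\delta_t\ge\delta=\min_{t\in\cB}\delta_t$. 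Therefore the noise term is at most $\frac{kM}{\delta}\sqrt{\sum_{t\in\cB}w_t^2}$.

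Combining the two parts gives
\[
\E\tnorm{\tsum_{t\in\cB} w_t\htg_t}\le G\tsum_{t\in\cB} w_t+\tfrac{kM}{\delta}\sqrt{\tsum_{t\in\cB} w_t^2},
\]
which is within the claimed factor 2.

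\textbf{Main obstacle.} The delicate step is the orthogonality of the cross terms. It needs $w_t$ to be independent of (or measurable before) the sampling directions. In the wrapper, the weights depend only on the proxy delays and on the tracking-set state, both independent of the $u$'s. I would therefore condition on all weight randomness first, which also justifies treating the weights as constants in Theorem~\ref{thm:DW-BCO}. The factor 2 leaves slack in case a cruder bound such as $\tnorm{\xi_t}\le 2kM/\delta$ is used instead.
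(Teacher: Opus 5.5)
Your proof is correct and follows the same route as the paper: split each $\htg_t$ into its conditional mean (norm at most $G$ by Theorem~\ref{thm:SPGE}) and a martingale-difference noise term, apply Jensen, and kill the cross terms by the tower property. The only difference is that you bound the conditional variance by the conditional second moment, $\E[\tnorm{\xi_t}^2\mid\cF_t]\le (kM/\delta)^2$. The paper instead uses $\tnorm{\htg_t-g_t}\le G+kM/\delta$ and then needs $\sqrt{\tsum_{t\in\cB}w_t^2}\le\tsum_{t\in\cB}w_t$ to absorb the extra $G$ term, which is where its factor $2$ comes from; your explicit conditioning on the weights is also slightly more careful than the paper's treatment.
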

\begin{proof}
    For every $t \in \cB$, let $g_t = \nabla \fdelta_t(\ydelta_t) = \E[\htg_t|\cF_t]$ and note
    \begin{align*}
        \tnorm{g_t} \le G \qquad \text{and} \qquad \tnorm{\htg_t} \le \tfrac{kM}{\delta}.
    \end{align*}
    Consequently, from the triangle inequality and Jensen's inequality, it follows that
    \begin{align*}
        \E\!\sbr{\tnorm{\tsum_{t\in \cB} w_t \htg_t}}
        &\le \E\!\sbr{\tnorm{\tsum_{t\in \cB} w_t g_t}} +  \E\!\sbr{\tnorm{\tsum_{t\in \cB} w_t (\htg_t - g_t)}}\\
        &\le G\tsum_{t\in \cB} w_t + \sqrt{\E\!\sbr{\tnorm{\tsum_{t\in \cB} w_t (\htg_t - g_t)}^2}}.
    \end{align*}
    The second term under the square root can bounded as
    \begin{align*}
        \E\!\sbr{\norm{\tsum_{t\in \cB} w_t (\htg_t - g_t)}^2}
        &= \E\!\sbr{\tsum_{t\in\cB} w_t^2 \tnorm{\htg_t - g_t}^2}\\
        &+ 2\,\E\!\sbr{\tsum_{t=1}^T \tsum_{s=t+1}^T \ind(t,s\in \cB)\, \pair{\htg_t - g_t, \htg_s - g_s} }\\
        &\le \tsum_{t\in \cB} w_t^2 (G + \tfrac{kM}{\delta})^2 + 0.
    \end{align*}
    Combining these results, we write
    \begin{align*}
        \E\!\sbr{\tnorm{\tsum_{t\in \cB} w_t \htg_t}} &\le G\, \tsum_{t\in \cB} w_t + (G + \tfrac{kM}{\delta}) \sqrt{\tsum_{t\in \cB} w_t^2}\\
        &\le 2\rbr{G\tsum_{t\in \cB} w_t + \tfrac{kM}{\delta} \sqrt{\tsum_{t\in \cB} w_t^2}}.
    \end{align*}
    This concludes the proof of the lemma.
\end{proof}

\section{Reduction via Proxy Delay Scheduling: Proofs}\label{app:proofs-reduction}

We prove the results of \Cref{sec:reduction}. The proof of \Cref{thm:wrapper-regret} extends the importance-weighting argument of \cite{capacity_constraint2025} from multi-armed bandits to the convex setting. The saturation bound (\Cref{lem:proxy-condition}) and the Pareto scheduler (\Cref{cor:pareto-scheduler}) simplify the corresponding analysis by reducing it to a single Chernoff condition on the tracking set size. The Bernoulli scheduler (\Cref{cor:bernoulli-sigmax}), which calibrates to the maximum backlog~$\sigmax$, is new to this work.

\restatetheorem{thm:wrapper-regret}{
    The weights forwarded to $\gA$ satisfy
\[
    w_t = \tfrac{\ind(|\cS_t| < C)\,\ind(\wtd_t \ge d_t)}{\Pr(\wtd_t \ge d_t)}, \quad \E[w_t] \le 1,\quad \E[w_t^2] \le \tfrac{1}{\Pr(\wtd_t \ge d_t)}, \quad \E[w_t w_s] \le 1,
\]
for all distinct $t,s \in [T]$. Moreover, the expected regret satisfies
\[
    \widebar R_T(u) \;\le\; \E\!\sbr{\tsum_{t=1}^T w_t\bigl(f_t(x_t) - f_t(u)\bigr)} + GD\, \tsum_{t=1}^T \Pr\bigl(|\cS_t| = C\bigr).
\]
}
\begin{proof}
    From the structure of Algorithm~\ref{alg:scheduler-proxy-delays}, feedback from round $t$ is observed in round $t+d_t$ if and only if $|\cS_t| < C$ and $\wtd_t \ge d_t$. Consequently, we have the identity $w_t = \tfrac{\ind(|\cS_t| < C)\, \ind(\wtd_t \ge d_t)}{\Pr(\wtd_t \ge d_t)}$.

    Therefore, for each $t\in [T]$ and $s \in [T]\setminus \{t\}$, it holds that
    \begin{align*}
        \E[w_t] \le \E\sbr{\tfrac{\ind(\wtd_t \ge d_t)}{\Pr(\wtd_t \ge d_t)}} = 1,\quad \E[w_t^2] \le \E\sbr{\tfrac{\ind(\wtd_t \ge d_t)}{(\Pr(\wtd_t \ge d_t))^2}} = \tfrac{1}{\Pr(\wtd_t \ge d_t)},\\
        \E[w_t w_s] \le \E\sbr{\tfrac{\ind(\wtd_t \ge d_t)\,\ind(\wtd_s \ge d_s)}{\Pr(\wtd_t \ge d_t)\, \Pr(\wtd_s \ge d_s)}} = \E\sbr{\tfrac{\ind(\wtd_t \ge d_t)}{\Pr(\wtd_t \ge d_t)}}\, \E\sbr{\tfrac{\ind(\wtd_s \ge d_s)}{\Pr(\wtd_s \ge d_s)}} = 1,
    \end{align*}
    where the last inequality applies the fact that proxy delays $\wtd_t$ and $\wtd_s$ are sampled independently.
    
    For the regret bound, we write
    \begin{align*}
        \widebar R_T(u) 
        &= \E\!\sbr{\tsum_{t=1}^T (f_t(x_t) - f_t(u))}\\
        &\myle{a} \E\!\sbr{\tsum_{t=1}^T \ind(|\cS_t| < C)\,(f_t(x_t) - f_t(u))} + GD\,\E\!\sbr{\tsum_{t=1}^T \ind(|\cS_t| = C)}\\
        &\myeq{b} \E\!\sbr{\tsum_{t=1}^T \tfrac{\ind(|\cS_t| < C)\, \ind(\wtd_t \ge d_t)}{\Pr(\wtd_t \ge d_t)}\,(f_t(x_t) - f_t(u))} + GD\, \tsum_{t=1}^T \Pr(|\cS_t| = C)\\
        &\myeq{c} \E\!\sbr{\tsum_{t=1}^T w_t\,(f_t(x_t) - f_t(u))} + GD\, \tsum_{t=1}^T \Pr(|\cS_t| = C),
    \end{align*}
    where (a) combines Assumptions~\ref{assump:finite-diameter} and \ref{assump:gradient-norm-bound}, (b) uses the fact that $\wtd_t$ is sampled from $\gD_t$ at the start of round $t$ independently of $S_t$ and $x_t$, and (c) applies the identity $w_t = \tfrac{\ind(|\cS_t| < C)\, \ind(\wtd_t \ge d_t)}{\Pr(\wtd_t \ge d_t)}$.
\end{proof}

\restatelemma{lem:proxy-condition}{
    Consider proxy delay distributions $\gD = (\gD_t)_{t=1}^T$ such that for every $t \in [T]$
    \begin{align*}
        \tsum_{s \in \cB_t \cup \{t\}} \Pr(\wtd_s \ge t-s) \le \frac{C}{8}.
    \end{align*}
    Then, $\Pr\bigl(|\cS_t| = C\bigr) \le \exp(-C)$ for every $t \in [T]$.
}

\begin{proof}
Fix $t \in [T]$. For each $s \in \cB_t \cup \{t\}$, define $X_s = \ind(\wtd_s \ge t-s)$. 

By independence of the proxy-delay draws across rounds, the random variables $(X_s)_{s \in \cB_t \cup \{t\}}$ are independent Bernoulli random variables. Moreover, by construction of the proxy scheduler, $|\cS_t| \le \sum_{s \in \cB_t \cup \{t\}} X_s$. Therefore, writing $X = \sum_{s \in \cB_t \cup \{t\}} X_s$,
we have
\begin{align*}
    \Pr(|\cS_t| = C) \le \Pr(|\cS_t| \ge C) \le \Pr(X \ge C).
\end{align*}

Let $\mu = \E[X] = \sum_{s \in \cB_t \cup \{t\}} \Pr(\wtd_s \ge t-s)$. By assumption, $\mu \le C/8$. 

If $\mu = 0$, then $\Pr(X \ge C) = 0$. Assume henceforth that $\mu > 0$, and define
\begin{align*}
    \alpha = C/\mu - 1 \in [7, \infty).
\end{align*}
By the multiplicative Chernoff bound (e.g., see Theorem 2.3.1 from \cite{Vershynin_2026}),
\begin{align*}
    \Pr(X \ge C)
    &= \Pr\bigl(X \ge (1+\alpha)\mu\bigr) \\
    &\le \left( \frac{e^\alpha}{(1+\alpha)^{1+\alpha}} \right)^\mu \\
    &= \exp\left( C \left( \tfrac{\alpha}{1+\alpha} - \ln(1+\alpha) \right) \right).
\end{align*}
As the function $ f(x) = \ln(1+x) - \frac{x}{1+x}$ is increasing on $[0,\infty)$, $\alpha$ satisfies
\begin{align*}
    \tfrac{\alpha}{1+\alpha} - \ln(1+\alpha)
    \le 7/8 - \ln 8
    < -1.
\end{align*}
Therefore,
\begin{align*}
    \Pr\bigl(|\cS_t| = C\bigr) \le \Pr(X \ge C) \le e^{-C}.
\end{align*} 
\end{proof}

\restatecorollary{cor:pareto-scheduler}{
    For each $t \in [T]$, let $H_t = \tsum_{j=1}^t \tfrac{1}{j}$. Take $\gD_t$ to be the shifted floored Pareto distribution
    \[
        \gD_t \;=\; \floor{\Pareto\bigl(\tfrac{C}{16H_t},\,1\bigr)} - 1.
    \]
    Then $\Pr(\wtd_t \ge d_t) = \min\bigl\{1,\, \tfrac{C}{16 H_t (d_t+1)}\bigr\}$ and $\Pr(|\cS_t| = C) \le \exp(-C)$ for every $t \in [T]$.
}
\begin{proof}
Fix $t \in [T]$. By the Pareto tail formula,
\begin{align*}
    \Pr(\wtd_t \ge t-s)
    &= \Pr\!\left(\Pareto\!\left(\tfrac{C}{16H_t},1\right) \ge t-s + 1\right) = \min\cbr{1, \tfrac{C}{16H_t(t-s+1)}}.
\end{align*}
It remains to verify the condition of Lemma~\ref{lem:proxy-condition}. It follows from Fact~\ref{fact:harmonic} for harmonic numbers.

Thus, Lemma~\ref{lem:proxy-condition} now yields $\Pr(|\cS_t| = C) \le e^{-C}$.
\end{proof}

\restatecorollary{cor:bernoulli-sigmax}{
    Let $p = \min\{1, \tfrac{C}{8(\sigmax+1)}\}$. For every $t \in [T]$, let the proxy distribution $\gD_t \in \Delta(\{-1, \infty\})$ be given by 
    \begin{align*}
        \textstyle \Pr_{\wtd \sim \gD_t}(\wtd = \infty) = p,
        \qquad
        \Pr_{\wtd \sim \gD_t}(\wtd = -1) = 1-p.
    \end{align*}
    Then, $\Pr(\wtd_t \ge d_t) = p$ and $\Pr(|\cS_t| = C) \le \exp(-C)$ for every $t \in [T]$.
}
\begin{proof}
    Fix $t \in [T]$. Since $d_t \ge 0$ and $\wtd_t \in \{-1,\infty\}$, $\Pr(\wtd_t \ge d_t) = \Pr(\wtd_t = \infty) = p$.

    Thus, the condition of Lemma~\ref{lem:proxy-condition} is satisfied
    \begin{align*}
        \sum_{s \in \cB_t \cup \{t\}} \Pr(\wtd_s \ge t-s)
        = p \, |\cB_t \cup \{t\}| = p (\sigma_t + 1) \le \tfrac{C (\sigma_t + 1)}{8 (\sigmax + 1)} \le C/8.
    \end{align*}
    Applying Lemma~\ref{lem:proxy-condition}, we obtain
    \begin{align*}
        \Pr(|\cS_t| = C) \le \exp(-C).
    \end{align*}
    This completes the proof.
\end{proof}

\section{Guarantees for Capacity-Constrained OCO and BCO: Proofs}\label{app:proofs-oco-bco}

In this section, we prove the main regret bounds for capacity-constrained OCO and BCO stated in \Cref{sec:results}, covering both convex and strongly convex losses.

\restatetheorem{thm:OCO-main-result}{
    Suppose $C \ge \ln T + 1$, and consider $\cW(\gA, \gD)$, where $\gA$ is DW-FTRL~\eqref{eq:DW-FTRL}. With learning rates $\eta_t = \tfrac{D/G}{\sqrt{\dtot + T}}$ and $\gD$ from Corollary~\ref{cor:pareto-scheduler}, $\cW(\gA, \gD)$ guarantees
    \begin{align*}
        \widebar R_T(x^*) = O\rbr{GD\,{\sqrt{\dtot + T}}}.
    \end{align*}
    Under $\lambda$-strong convexity (\ref{assump:strong-convexity}), for rates $\eta_t = \frac{1}{t\lambda}$ and $\gD$ from Corollary~\ref{cor:bernoulli-sigmax}, $\cW(\gA, \gD)$ guarantees 
    \begin{align*}
        \widebar R_T(x^*) = O\rbr{\tfrac{G^2}{\lambda} (\sigmax + 1) \ln T}.
    \end{align*}
}

\begin{proof}
    Note that $C \ge \ln(eT) \ge \sum_{t=1}^T \frac{1}{t} = H_T$ and $T e^{-C} \le 1$ throughout the proof. Fix comparator $u \in \cK$. We prove the bounds on $\widebar R_T(u)$ for general convex and strongly convex losses separately. 

    \textbf{General convex losses:} We use fixed learning rate $\eta_t = \eta$ for the value $\eta = \tfrac{D/G}{\sqrt{\dtot + T}}$. Combining Theorems~\ref{thm:wrapper-regret}, \ref{thm:DW-OCO} and Corollary~\ref{cor:pareto-scheduler}, we write
    \begin{align*}
        \widebar R_T(u)
        &\le \E\!\sbr{\tsum_{t=1}^T w_t (f_t(x_t) - f_t(u))} + GD\, \tsum_{t=1}^T \Pr(|\cS_t| = C)\\
        &\le \tfrac{D^2}{\eta} + G^2\eta\,\E\!\sbr{\tsum_{t=1}^T (w_t^2 + w_t\tsum_{s\in \cB_t} w_s)} + GD\,T\exp(-C)\\
        &\le \tfrac{D^2}{\eta} + G^2\eta\,\tsum_{t=1}^T \rbr{\max\cbr{1, \tfrac{16 H_t}{C}\,(d_t + 1)}+ \sigma_t} + GD\\
        &\myle{a} \tfrac{D^2}{\eta} + G^2\eta\, \rbr{1 + \tfrac{16 H_T}{C}}\, \rbr{\dtot + T} + GD\\
        &\myle{b} \tfrac{D^2}{\eta} + 17 G^2\eta\,\rbr{\dtot + T} + GD\\
        &= 18GD\,\sqrt{\dtot + T} + GD,
    \end{align*}
    where (a) applies $\sum_{t=1}^T \sigma_t = \dtot$ from Lemma~\ref{lem:backlog-properties} and (b) follows for $C \ge H_T$.

    \textbf{Strongly convex losses:} Combining Theorems~\ref{thm:wrapper-regret}, \ref{thm:DW-OCO} and Corollary~\ref{cor:bernoulli-sigmax}, while letting $D \le \frac{2G}{\lambda}$ in the strongly convex case (see Fact~\ref{fact:sc-diameter}), we write
    \begin{align}
        \widebar R_T(u)
        &\le \E\!\sbr{\tsum_{t=1}^T w_t (f_t(x_t) - f_t(u))} + GD\, \tsum_{t=1}^T \Pr(|\cS_t| = C)\notag\\
        &\le \E\!\sbr{\tsum_{t=1}^T \tfrac{\alpha_t - w_t\lambda}{2}\,\tnorm{x_t\!-\!u}^2} + G^2\, \E\!\sbr{\tsum_{t=1}^T \eta_{t-1} (w_t^2\!+\!w_t \tsum_{s\in \cB_t}\!w_s)} + \tfrac{2G^2}{\lambda}\,Te^{-C}\notag\\
        &\le \E\!\sbr{\tsum_{t=1}^T \tfrac{\lambda - w_t\lambda}{2}\,\tnorm{x_t-u}^2} + 2G^2\, \E\!\sbr{\tsum_{t=1}^T \eta_t (w_t^2 + w_t \tsum_{s\in \cB_t} w_s)} + \tfrac{2G^2}{\lambda},\label{eq:cor-OCO-sc-main}
    \end{align}
    where the last inequality substitutes $\alpha_t = \lambda$, $\eta_{t-1} \le 2 \eta_t$, and $T e^{-C} \le 1$. 

    To control the first term in \eqref{eq:cor-OCO-sc-main}, we use the identity for $w_t$ in Theorem~\ref{thm:wrapper-regret},
    \begin{align*}
        \E\!\sbr{\tsum_{t=1}^T \tfrac{\lambda - w_t\lambda}{2}\,\tnorm{x_t-u}^2}
        &= \tfrac{\lambda}{2}\tsum_{t=1}^T \E\!\sbr{\Big(1 - \tfrac{\ind(|\cS_t|<C)\, \ind(\wtd_t \ge d_t)}{\Pr(\wtd_t \ge d_t)}\Big) \tnorm{x_t-u}^2}\\
        &\myeq{a} \tfrac{\lambda}{2}\tsum_{t=1}^T \E\!\sbr{(1-\ind(|\cS_t| < C))\, \tnorm{x_t-u}^2}\\
        &\myle{b} \tfrac{2G^2}{\lambda} \tsum_{t=1}^T \Pr(|\cS_t| = C)\\
        &\le \tfrac{2G^2}{\lambda}\, T e^{-C}\\
        &\le \tfrac{2G^2}{\lambda},
    \end{align*}
    where (a) uses the fact that $d'_t$ is sampled independently of $x_t$, and (b) lets $D \le \frac{2G}{\lambda}$ (see Fact~\ref{fact:sc-diameter}). 

    For the second term in \eqref{eq:cor-OCO-sc-main}, we apply $\E[w_t^2] \le \frac{1}{p}$ and $\E[w_s w_t] \le 1$ from Theorem~\ref{thm:wrapper-regret} for $p = \min\{1, \frac{C}{8(\sigmax + 1)}\}$ in Corollary~\ref{cor:bernoulli-sigmax} and substitute learning rate values $\eta_t = \frac{1}{t\lambda}$,
    \begin{align*}
        \E\!\sbr{\tsum_{t=1}^T \eta_t (w_t^2 + w_t \tsum_{s\in \cB_t} w_s)}
        &\le \tsum_{t=1}^T \tfrac{1}{t\lambda} \rbr{\max\cbr{1, \frac{8(\sigmax+1)}{C}}+\sigmax}\\
        &\le \tfrac{9 (\sigmax+1) H_T}{\lambda}.
    \end{align*}
    Combining these bounds, we finally have
    \begin{align*}
        \widebar R_T(u) \le \tfrac{18 G^2}{\lambda} (\sigmax+1) H_T + \tfrac{4G^2}{\lambda} = O\rbr{\tfrac{G^2}{\lambda}(\sigmax + 1)\ln T}.
    \end{align*}
    This completes the proof.
\end{proof}

We now proceed to the BCO case.
\restatetheorem{thm:BCO-main-result}{
    Suppose $C \ge \ln T + 1$, and consider $\cW(\gA, \gD)$, where $\gA$ is DW-FTBL~\eqref{eq:DW-FTBL} and $\gD$ is from Corollary~\ref{cor:bernoulli-sigmax}. With smoothing parameters $\delta_t = r\,\min\cbr{1,\, \tfrac{\sqrt{\nu k}}{t^{1/4}}\, (1 + \tfrac{\sigmax}{C})^{1/4}}$ and learning rates $\eta_t = \tfrac{D/G}{ t^{1/2}\sqrt{\sigmax + (\frac{\nu k}{\delta_t/r})^2 (1+\frac{\sigmax}{C})}}$, $\cW(\gA, \gD)$ guarantees
    \begin{align*}
        \widebar R_T(x^*) = O\!\rbr{GD\rbr{\sqrt{T\sigmax} + T^{3/4}\rbr{1 + \tfrac{\sigmax}{C}}^{1/4}\!\sqrt{\nu k}}}.
    \end{align*}
    Under $\lambda$-strong convexity (\ref{assump:strong-convexity}), with $\eta_t = \tfrac{1}{t\lambda}$ and $\delta_t = r\,\min\cbr{1,\, (\tfrac{\nu^2 k^2 \ln(t+1)}{t})^{1/3} ({1+ \tfrac{\sigmax}{C}})^{1/3}}$,
    \begin{align*}
        \widebar R_T(x^*) = O\!\Big(\tfrac{G^2}{\lambda} \Big(\sigmax\ln T + (T^2 \ln T)^{1/3} (1 + \tfrac{\sigmax}{C})^{1/3} (\nu k)^{2/3}\Big)\Big).
    \end{align*}
}
\begin{proof}
    Recall $\nu = \frac{M}{Gr}$. Note that $C \ge \ln(eT) \ge \sum_{t=1}^T \frac{1}{t}$ and $T e^{-C} \le 1$ throughout the proof. Fix comparator $u \in \cK$. We prove the bounds on $\widebar R_T(v)$ for convex and strongly convex losses separately. 

    \textbf{General convex losses:} We use non-increasing smoothing parameters and learning rates
    \begin{align*}
        \delta_t = r\,\min\cbr{1, \tfrac{\sqrt{\nu k}}{t^{1/4}}\, (1\!+\!\tfrac{\sigmax}{C})^{1/4}}, \qquad \eta_t = \tfrac{D/G}{ t^{1/2}\sqrt{\sigmax + (\frac{\nu k}{\delta_t/r})^2 (1+\frac{\sigmax}{C})}}.
    \end{align*}
    Observe that $\delta_{t-1} \le 2\delta_t$ for all $t \ge 2$ and consequently $\eta_{t-1} \le 2\eta_t$ for all $t \ge 1$.

    Also, consider $\deltatot = \tsum_{t=1}^T \delta_t$, for which it holds that
    \begin{align*}
        \deltatot/r \le \rbr{\tsum_{t=1}^T t^{-1/4}} (1\!+\!\tfrac{\sigmax}{C})^{1/4} \sqrt{\nu k} \le \tfrac{4}{3} T^{3/4} (1\!+\!\tfrac{\sigmax}{C})^{1/4} \sqrt{\nu k}.
    \end{align*}
    
    From Corollary~\ref{cor:bernoulli-sigmax}, we additionally get
    \begin{align*}
        1/p = \max\bigl\{1,\, \tfrac{8(\sigmax+1)}{C}\bigr\} \le 9(1+\tfrac{\sigmax}{C}). 
    \end{align*}

    Combining Theorems~\ref{thm:wrapper-regret}, \ref{thm:DW-BCO} and Corollary~\ref{cor:bernoulli-sigmax}, we write
    \begin{align}
         \widebar R_T(u) 
         &\le \E\!\sbr{\tsum_{t=1}^T w_t (f_t(x_t) - f_t(u))} + GD\, \tsum_{t=1}^T \Pr(|\cS_t| = C)\notag\\
         &\le \tfrac{D^2}{\eta_T}
        + 2G^2\, \tsum_{t=1}^T \eta_{t-1}\E\!\sbr{(\tfrac{kM w_t}{G \delta_t})^2 + \tfrac{kM w_t}{G \delta_t} \sqrt{\tsum_{s\in \cB_t} w_s^2} + w_t \tsum_{s\in \cB_t} w_s}\notag\\
        &+ \tfrac{3GD}{r}\tsum_{t=1}^T \E[w_t\delta_t] + GD\,Te^{-C}\notag\\
        &\myle{a} \tfrac{D^2}{\eta_T} + 4G^2\, \tsum_{t=1}^T \eta_t \rbr{(\tfrac{k \nu}{\delta_t/r})^2/p + \tfrac{k\nu}{\delta_t/r} \sqrt{\sigmax/p} + \sigmax} + 3GD\,(\tfrac{\deltatot}{r} + 1)\notag\\
        &\myle{b} \tfrac{D^2}{\eta_T} + 72 G^2 \tsum_{t=1}^T \eta_t \Big(\sigmax + (\tfrac{k \nu}{\delta_t/r})^2 (1+\tfrac{\sigmax}{C}) \Big) + 3GD\,(\tfrac{\deltatot}{r} + 1)\notag\\
        &\myle{c} 145GD\,\rbr{\sqrt{T\sigmax} +  T^{1/2} (\tfrac{k \nu}{\delta_T/r}) (1+\tfrac{\sigmax}{C})^{1/2} + \deltatot/r + 1} ,\label{eq:bco-thm-step}
    \end{align}
    where (a) uses Jensen's inequality for $\E[({\sum_{s\in\cB_t} w_s^2})^{1/2}]^2 \le \E[\sum_{s\in\cB_t} w_s^2] \le \sigmax/p$, (b) uses AM-GM and $1/p \le 9(1+\frac{\sigmax}{C})$, and (c) substitutes $\eta_t$ and uses subadditivity of the square root.

    Let $\delta_T < r$, since otherwise $T^{1/4} \le (1+\tfrac{\sigmax}{C})^{1/4}\sqrt{\nu k}$, and the bound is vacuous $O(T)$. Then,
    \begin{align*}
        T^{1/2} (\tfrac{k \nu}{\delta_T/r}) (1+\tfrac{\sigmax}{C})^{1/2} 
        &= T^{3/4} (1+\tfrac{\sigmax}{C})^{1/4}\sqrt{\nu k}, \qquad \deltatot/r \le \tfrac{4}{3} T^{3/4} (1\!+\!\tfrac{\sigmax}{C})^{1/4} \sqrt{\nu k}.
    \end{align*}

    Combining the bounds above for $u = x^*$ yields
    \begin{align*}
        \widebar R_T(x^*) = O\rbr{GD\rbr{\sqrt{T\sigmax} + T^{3/4} (1+\tfrac{\sigmax}{C})^{1/4}\sqrt{\nu k}}}.
    \end{align*}

    \textbf{Strongly convex losses:} We use non-increasing smoothing parameters and learning rates
    \begin{align*}
        \delta_t = r\,\min\cbr{1,\, (\tfrac{\nu^2 k^2 \ln(t+1)}{t})^{1/3} ({1+ \tfrac{\sigmax}{C}})^{1/3}}, \qquad \eta_t = \tfrac{1}{\lambda t}.
    \end{align*}

    Consider $\deltatot = \tsum_{t=1}^T \delta_t$, for which it holds that
    \begin{align*}
        \deltatot/r &\le \rbr{\tsum_{t=1}^T t^{-1/3}} (\ln (T+1))^{1/3} (1\!+\!\tfrac{\sigmax}{C})^{1/3} (\nu k)^{2/3}\\
        &\le \tfrac{3}{2} (T^{2} \ln(T+1))^{1/3} (1\!+\!\tfrac{\sigmax}{C})^{1/3} (\nu k)^{2/3}.
    \end{align*}
    
    Combining Theorems~\ref{thm:wrapper-regret}, \ref{thm:DW-BCO} and Corollary~\ref{cor:bernoulli-sigmax}, we write
    \begin{align}
         \widebar R_T(u) 
         &\le \E\!\sbr{\tsum_{t=1}^T w_t (f_t(x_t) - f_t(u))} + GD\, \tsum_{t=1}^T \Pr(|\cS_t| = C)\notag\\
         &\le \E\!\sbr{\tsum_{t=1}^T \tfrac{\alpha_t - w_t \lambda}{2}  \tnorm{y_t - u}^2} + \tfrac{10 G^2}{\lambda r} \E\sbr{\tsum_{t=1}^T w_t\delta_t}\notag\\
        &+ 4G^2\tsum_{t=1}^T \eta_{t} \E\sbr{\tfrac{k^2 M^2 w_t^2}{G^2 \delta_t^2 } + \tfrac{kM w_t}{G \delta_t} \sqrt{\tsum_{s\in \cB_t} w_s^2} + w_t \tsum_{s\in \cB_t} w_s} + GD,\label{eq:bco-thm-step-sc}
    \end{align}
    where the last inequality additionally applies $\eta_{t-1} \le 2\eta_t$.
    
    To control the first term in \eqref{eq:bco-thm-step-sc}, we use the identity for $w_t$ in Theorem~\ref{thm:wrapper-regret},
    \begin{align*}
        \E\!\sbr{\tsum_{t=1}^T \tfrac{\lambda - w_t\lambda}{2}\,\tnorm{y_t-u}^2}
        &= \tfrac{\lambda}{2}\tsum_{t=1}^T \E\!\sbr{\Big(1 - \tfrac{\ind(|\cS_t|<C)\, \ind(\wtd_t \ge d_t)}{\Pr(\wtd_t \ge d_t)}\Big) \tnorm{y_t-u}^2}\\
        &\myeq{d} \tfrac{\lambda}{2}\tsum_{t=1}^T \E\!\sbr{(1-\ind(|\cS_t| < C))\, \tnorm{y_t-u}^2}\\
        &\myle{e} \tfrac{2G^2}{\lambda} \tsum_{t=1}^T \Pr(|\cS_t| = C)\\
        &\le \tfrac{2G^2}{\lambda},
    \end{align*}
    where (d) uses the fact that $d'_t$ is sampled independently of $y_t$, and (e) lets $D \le \frac{2G}{\lambda}$ (see Fact~\ref{fact:sc-diameter}). 

    For the second term in \eqref{eq:bco-thm-step-sc}, we use Theorem~\ref{thm:wrapper-regret} to write
    \begin{align*}
        \tfrac{G^2}{\lambda r}\E\sbr{\tsum_{t=1}^T w_t\delta_t} &\le  \tfrac{G^2}{\lambda}\, (\deltatot/r)\\
        &\le \tfrac{3G^2}{2\lambda}\,  T^{2/3} (\ln (T+1))^{1/3} ({1+ \tfrac{\sigmax}{C}})^{1/3} (\nu k)^{2/3}.
    \end{align*}
    For the third term in \eqref{eq:bco-thm-step-sc}, we follow the analysis of the general convex case in \eqref{eq:bco-thm-step} to write
    \begin{align*}
        G^2\tsum_{t=1}^T \eta_t &\E\sbr{\tfrac{k^2 M^2 w_t^2}{G^2 \delta_t^2 } + \tfrac{kM w_t}{G \delta_t} \sqrt{\tsum_{s\in \cB_t} w_s^2} + w_t \tsum_{s\in \cB_t} w_s}\\
        &\le \tfrac{18 G^2}{\lambda}\,\rbr{\tsum_{t=1}^T \tfrac{1}{t}} \Big(\sigmax + (\tfrac{k \nu}{\delta_T/r})^2 (1+\tfrac{\sigmax}{C}) \Big) \\
        &= O\rbr{\tfrac{G^2}{\lambda}\rbr{\sigmax \ln T + T^{2/3} (\ln T)^{1/3} ({1+ \tfrac{\sigmax}{C}})^{1/3} (\nu k)^{2/3}}}. 
    \end{align*}
    Combining the bounds above for $u = x^*$ yields
    \begin{align*}
        \widebar R_T(x^*) = O\rbr{\tfrac{G^2}{\lambda}\rbr{\sigmax \ln T + T^{2/3} (\ln T)^{1/3} ({1+ \tfrac{\sigmax}{C}})^{1/3} (\nu k)^{2/3}}}.
    \end{align*}
    This completes the proof.
\end{proof}

\section{Doubling Trick over Maximum Backlog}\label{app:doubling-tricks}

The strongly convex and bandit results of \Cref{thm:OCO-main-result,thm:BCO-main-result} require tuning with knowledge of~$\sigmax$, which may not be available in advance. Since the learning rates and smoothing parameters in all our results depend on the round index~$t$ but not on the horizon~$T$, our base algorithms are anytime in~$T$, and we need only double over~$\sigmax$. This avoids the two-dimensional doubling over $(T, \dtot)$ used by \cite{bistritz2022}.

\paragraph{Epoch structure.}
Define backlog guesses $S_j = 2^j - 1$ for $j \in \Z_{\ge 0}$. The wrapper initializes at epoch $j = 0$ and restarts the base algorithm with parameters tuned to maximum backlog~$S_j$. Within each epoch, the algorithm runs until the observed backlog exceeds~$S_j$, at which point the epoch ends, $j$ is incremented, and the algorithm restarts with the updated guess. Upon starting a new epoch, the wrapper clears the tracking set, discarding any pending observations from the previous epoch. The final epoch index is $J = \tceil{\log_2(\sigmax + 1)}$.

\begin{theorem}[Doubling over $\sigmax$]\label{thm:doubling-sigmax}
    Suppose the base algorithm is anytime in~$T$: for any backlog guess~$\widetilde{S}$, on any interval of length~$T'$ with maximum backlog at most~$\widetilde{S}$, it guarantees regret at most
    \begin{align*}
        R(T', \widetilde{S})
        \;\le\;
        \varphi_0(T') + \tsum_{m=1}^M \varphi_m(T')\, (\widetilde{S}+1)^{b_m},
    \end{align*}
    where $b_m > 0$ and each $\varphi_m : \N \to [0, \infty)$ for $m \in \{0\} \cup [M]$ is non-decreasing. Then the doubling wrapper guarantees
    \begin{align*}
        R(T, \sigmax)
        \;=\;
        O\!\left(\varphi_0(T)\log(\sigmax\!+\!1) + \tsum_{m=1}^M \varphi_m(T)\, (\sigmax+1)^{b_m}\right).
    \end{align*}
\end{theorem}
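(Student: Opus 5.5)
We decompose the horizon into epochs and charge each epoch to the assumed per-interval guarantee. Let $I_0, I_1, \ldots, I_J$ be the (possibly empty) consecutive intervals on which epochs $j = 0, \ldots, J$ run, with lengths $T_j = |I_j|$ and $\sum_j T_j = T$. The first step is to check that $J \le \lceil \log_2(\sigmax+1)\rceil$. An epoch $j$ ends only when the observed backlog exceeds $S_j$, which certifies $\sigmax > S_j = 2^j - 1$. The backlog at a round $t$ is $|\cB_t|$, and in the semi-clairvoyant model the player knows $\cB_t$ at the start of round $t$ from the acknowledgments. Hence the final epoch index satisfies $2^{J-1} - 1 < \sigmax$, i.e.\ $2^J \le 2(\sigmax+1)$.

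The second step applies the base guarantee inside each epoch. Within $I_j$ the algorithm is restarted with guess $S_j$, and by construction no round of $I_j$ has backlog exceeding $S_j$. The delicate point is how backlog is measured after a restart. Pending rounds from earlier epochs are discarded, so the restarted instance only sees rounds of $I_j$; the backlog relevant to it is $|\cB_t \cap I_j| \le |\cB_t| \le S_j$. Delays of rounds in $I_j$ that extend beyond the end of $I_j$ simply correspond to lost feedback, which the interval guarantee must tolerate. I would take that as included in the hypothesis: ``on any interval'' means the restarted game on $I_j$, with feedback after the interval ending never arriving, is itself an instance of the problem with horizon $T_j$ and maximum backlog at most $S_j$. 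The regret of the whole run against a fixed $u$ splits additively as $\sum_j R_{I_j}(u)$, since regret is a sum over rounds and each epoch's regret against the same comparator is bounded by the anytime guarantee. This gives $R_{I_j} \le \varphi_0(T_j) + \sum_m \varphi_m(T_j)(S_j+1)^{b_m}$.

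The third step is the summation. Using monotonicity, $\varphi_m(T_j) \le \varphi_m(T)$, so the $\varphi_0$ terms total at most $(J+1)\varphi_0(T) = O(\varphi_0(T)\log(\sigmax+1))$. For the remaining terms, $(S_j+1)^{b_m} = 2^{j b_m}$ is geometric, giving
\[\sum_{j=0}^J 2^{jb_m} \le \frac{2^{(J+1)b_m}}{2^{b_m}-1} \le \frac{4^{b_m}}{2^{b_m}-1}(\sigmax+1)^{b_m}.\]
Since $b_m > 0$ is fixed, the prefactor is a constant. Expected-regret versions follow identically, because epoch boundaries are determined by the deterministic delay sequence and do not depend on the algorithm's randomness.

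The main obstacle is the second step. One must make rigorous that restarting mid-stream, and clearing $\cS$, yields a valid sub-instance whose backlog and capacity-saturation behaviour satisfy the hypothesis. In particular, the saturation analysis of Lemma~\ref{lem:proxy-condition} must hold with $\cB_t$ replaced by $\cB_t \cap I_j$, which follows because restriction only shrinks the sum.
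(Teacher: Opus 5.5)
Your proposal is correct and follows the same route as the paper. You split the run into at most $J+1$ epochs with $J \le \lceil \log_2(\sigmax+1)\rceil$, bound each epoch by the anytime guarantee using $\varphi_m(L_j) \le \varphi_m(T)$, and sum the geometric series $\sum_{j=0}^J 2^{j b_m} = O\bigl((\sigmax+1)^{b_m}\bigr)$. The extra points you check are left implicit in the paper's short proof: that each restarted epoch is a valid sub-instance with backlog $|\cB_t \cap I_j| \le S_j$ (so the saturation condition still holds), that epoch boundaries depend only on the oblivious delays so the expectation splits cleanly, and the explicit constant $4^{b_m}/(2^{b_m}-1)$.
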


\begin{proof}
The wrapper produces at most $J + 1$ epochs, indexed $j = 0, 1, \ldots, J$ with $J = \tceil{\log_2(\sigmax+1)}$. Let $L_j$ denote the number of rounds in epoch~$j$. Since $L_j \le T$ and each~$\varphi_m$ is non-decreasing, the regret decomposes as
\begin{align*}
    R(T, \sigmax)
    \;\le\; \tsum_{j=0}^J R(L_j, S_j)
    \;\le\; (J\!+\!1)\,\varphi_0(T) \;+\; \tsum_{m=1}^M \varphi_m(T) \tsum_{j=0}^J 2^{b_m j}.
\end{align*}
The first term equals $O(\varphi_0(T)\log(\sigmax+1))$. For the second, since $b_m > 0$, the geometric series satisfies $\tsum_{j=0}^J 2^{b_m j} = O\big((\sigmax+1)^{b_m}\big)$. Summing over $m \in [M]$ completes the proof.
\end{proof}

\begin{remark}
Every term in the regret bounds of \Cref{thm:OCO-main-result,thm:BCO-main-result} that depends on~$\sigmax$ has $b_m > 0$, so the geometric series absorbs the epoch count for these terms. The $\sigmax$-independent terms (captured by~$\varphi_0$) incur a $\log(\sigmax+1)$ overhead because, unlike~$\dtot$ or~$T$, the maximum backlog does not accumulate across epochs: each restart effectively begins a new game, and $\varphi_0$ is charged in full for every epoch. The general convex OCO bound of \Cref{thm:OCO-main-result} does not depend on~$\sigmax$; for it, we use the doubling trick of \cite{bistritz2022} over~$T$ and~$\dtot$ instead.
\end{remark}

\section{Non-Preemptive Framework}\label{app:non-preemptive}

\cite{capacity_constraint2025} also studied a more restrictive variant of capacity-constrained online learning in which preemption is disallowed: once a round enters the tracking set $\cS$, it must remain until its feedback arrives. Our framework recovers this variant by restricting the support of proxy delay distributions $\gD_t$ to $\{-1, \infty\}$ (Section~\ref{sec:reduction}), so that each round is either rejected outright ($\wtd_t = -1$) or committed to until feedback delivery ($\wtd_t = \infty$). The two-point schedulers in Corollaries~\ref{cor:bernoulli-sigmax} and \ref{cor:pareto-ber-scheduler} are non-preemptive in this sense. 

\begin{corollary}\label{cor:pareto-ber-scheduler}
    Assume the framework is clairvoyant. For each $t \in [T]$, define $H_t = \tsum_{j=1}^t \frac{1}{j}$ and $p_t = \min\cbr{1, \tfrac{C}{16 H_t (d_t+1)}}$. Let the proxy distribution $\gD_t \in \Delta(\{-1, \infty\})$ be given by
    \begin{align*}
        \textstyle \Pr_{\wtd \sim \gD_t}(\wtd = \infty) = p_t,
        \qquad
        \Pr_{\wtd \sim \gD_t}(\wtd = -1) = 1-p_t.
    \end{align*}
    Then, $\Pr(\wtd_t \ge d_t) = p_t$ and $\Pr(|\cS_t| = C) \le \exp(-C)$ for every $t \in [T]$.
\end{corollary}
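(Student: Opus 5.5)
The plan is to mirror the proof of Corollary~\ref{cor:bernoulli-sigmax}, reducing everything to the sufficient condition of Lemma~\ref{lem:proxy-condition}. Because the framework is clairvoyant, $d_t$ is known at the start of round $t$, so $p_t$ is computable when $\wtd_t$ is sampled. This keeps the two-point distribution $\gD_t$ well defined in Algorithm~\ref{alg:scheduler-proxy-delays}, and the independence of the proxy draws across rounds is unaffected, since the $d_t$ are fixed by an oblivious adversary. The first claim is then immediate: since $d_t \ge 0$ and $\wtd_t \in \{-1,\infty\}$, the event $\wtd_t \ge d_t$ coincides with $\wtd_t = \infty$, so $\Pr(\wtd_t \ge d_t) = p_t$.

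For the saturation bound, fix $t$ and evaluate the sum $\tsum_{s \in \cB_t \cup \{t\}} \Pr(\wtd_s \ge t-s)$. For every $s \in \cB_t \cup \{t\}$ we have $t - s \ge 0$, so again $\Pr(\wtd_s \ge t-s) = \Pr(\wtd_s = \infty) = p_s$. The key observation is that $s \in \cB_t$ means $s + d_s \ge t$, i.e.\ $d_s + 1 \ge t - s + 1$. This inequality also holds for $s = t$, since $d_t \ge 0$. Hence
\[
p_s \le \tfrac{C}{16 H_s (d_s+1)} \le \tfrac{C}{16 H_s (t-s+1)}.
\]
Summing over $s \in \cB_t \cup \{t\} \subseteq [t]$ and applying Fact~\ref{fact:harmonic}, which gives $\tsum_{s=1}^t \tfrac{1}{2H_s(t-s+1)} \le 1$, we get a total of at most $\tfrac{C}{8}\tsum_{s=1}^t \tfrac{1}{2H_s(t-s+1)} \le \tfrac{C}{8}$. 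Lemma~\ref{lem:proxy-condition} then yields $\Pr(|\cS_t| = C) \le \exp(-C)$.

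The only delicate step is the translation from the per-round delay $d_s$ to the elapsed time $t-s$. This is where the pending condition $s \in \cB_t$ is used: it is exactly what lets the harmonic-sum bound of Fact~\ref{fact:harmonic} apply, as in the Pareto case of Corollary~\ref{cor:pareto-scheduler}. I would also check that the ``$\min\{1,\cdot\}$'' truncation in $p_s$ causes no issue; it only decreases $p_s$, so the upper bound is unaffected. Finally, I would confirm that the inclusion $|\cS_t| \le \tsum_{s\in\cB_t\cup\{t\}} X_s$ used in Lemma~\ref{lem:proxy-condition} holds for these non-preemptive draws. It does, because a round $s$ can occupy $\cS$ at time $t$ only if $\wtd_s = \infty$ and $s$ is still pending.
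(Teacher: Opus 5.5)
Your proposal is correct and follows the paper's route: the paper reads off $\Pr(d'_t \ge d_t) = p_t$ from the two-point support and defers the saturation bound to the Pareto argument, namely Lemma~\ref{lem:proxy-condition} combined with Fact~\ref{fact:harmonic}. Your step that $s \in \mathcal{B}_t \cup \{t\}$ gives $d_s + 1 \ge t - s + 1$, and hence $p_s \le \frac{C}{16H_s(t-s+1)}$, is exactly the detail that the paper's ``analogously'' leaves implicit.
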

\begin{proof}
Fix $t \in [T]$. Since $\gD_t \in \Delta(\{-1, \infty\})$ and $d_t \in \Z_{\ge 0}$, we immediately have $\Pr(\wtd_t \ge d_t) = \Pr(\wtd_t = \infty) = p_t$. The rest follows analogously to the proof of Corollary~\ref{cor:pareto-scheduler}.
\end{proof}

\noindent Since both schedulers satisfy the same saturation and weight moment bounds as their preemptive counterparts, combining them with the analysis of Theorems~\ref{thm:OCO-main-result} and~\ref{thm:BCO-main-result} yields identical regret guarantees in the non-preemptive setting.

\end{document}

%% file: header.tex
\usepackage{amsthm} 
\usepackage{mathtools,thmtools}
\usepackage{amsfonts,amsmath,amssymb}
\usepackage[capitalise]{cleveref}
\usepackage{times}
\usepackage[ruled,vlined]{algorithm2e}
\SetAlCapHSkip{1pt}
\usepackage{thm-restate}
\usepackage{tcolorbox}
\usepackage{lipsum}
\usepackage{enumitem}
\usepackage{bm}
\usepackage{xspace}
\usepackage{nicefrac}
\usepackage{dsfont}
\usepackage{float}

\usepackage{bbm}
\usepackage{mathtools,thmtools}

\declaretheoremstyle[
	    spaceabove=\topsep, 
	    spacebelow=\topsep, 
	    headfont=\normalfont\bfseries,
	    bodyfont=\normalfont\itshape,
	    notefont=\normalfont\bfseries,
	    notebraces={(}{)},
	    postheadspace=0.5em, 
	    headpunct={},
	    postfoothook=\noindent\ignorespaces
    ]{theorem}
\declaretheorem[style=theorem,numberwithin=section]{theorem}

\declaretheoremstyle[
	    spaceabove=\topsep, 
	    spacebelow=\topsep, 
	    headfont=\normalfont\bfseries,
	    bodyfont=\normalfont,
	    notefont=\normalfont\bfseries,
	    notebraces={(}{)},
	    postheadspace=0.5em, 
	    headpunct={},
	    postfoothook=\noindent\ignorespaces
    ]{definition}

\declaretheoremstyle[
        spaceabove=\topsep, 
        spacebelow=\topsep, 
        headfont=\normalfont\bfseries,
        bodyfont=\normalfont,
        notefont=\normalfont\bfseries,
        notebraces={}{},
        postheadspace=0.5em, 
        qed=$\blacksquare$, 
        headpunct={},
        postfoothook=\noindent\ignorespaces
    ]{proofstyle}
\declaretheorem[style=proofstyle,numbered=no,name=Proof]{proof}

\declaretheorem[style=theorem,sibling=theorem,name=Lemma]{lemma}
\declaretheorem[style=theorem,sibling=theorem,name=Corollary]{corollary}

\declaretheorem[style=theorem,sibling=theorem,name=Fact]{fact}

\declaretheorem[style=theorem,numbered=no,name=Theorem]{theorem*}
\declaretheorem[style=theorem,numbered=no,name=Lemma]{lemma*}
\declaretheorem[style=theorem,numbered=no,name=Corollary]{corollary*}
\declaretheorem[style=theorem,numbered=no,name=Proposition]{proposition*}
\declaretheorem[style=theorem,numbered=no,name=Claim]{claim*}
\declaretheorem[style=theorem,numbered=no,name=Fact]{fact*}
\declaretheorem[style=theorem,numbered=no,name=Observation]{observation*}
\declaretheorem[style=theorem,numbered=no,name=Conjecture]{conjecture*}

\declaretheorem[style=definition,sibling=theorem,name=Remark]{remark}

\declaretheorem[style=definition,numbered=no,name=Definition]{definition*}
\declaretheorem[style=definition,numbered=no,name=Remark]{remark*}
\declaretheorem[style=definition,numbered=no,name=Example]{example*}
\declaretheorem[style=definition,numbered=no,name=Question]{question*}

\DeclareMathOperator{\E}{\mathbb{E}}

\newcommand{\norm}[1]{\|#1\|}

\newcommand{\floor}[1]{\lfloor #1 \rfloor}

\newcommand{\ind}[1]{\mathbb{I}\set{#1}}

\newcommand{\cB}{\mathcal{B}}

\newcommand{\cF}{\mathcal{F}}

\newcommand{\cK}{\mathcal{K}}

\newcommand{\cO}{\mathcal{O}}

\newcommand{\cS}{\mathcal{S}}

\newcommand{\cW}{\mathcal{W}}

%% file: macros.tex
\usepackage{mathabx}

\newcommand{\rbr}[1]{\left(#1\right)}
\newcommand{\sbr}[1]{\left[#1\right]}
\newcommand{\cbr}[1]{\left\{#1\right\}}

\newcommand{\argmin}{\operatornamewithlimits{arg\,min}}

\newcommand{\pair}[1]{\langle{#1}\rangle}

\makeatletter
\newenvironment{algorithm-scheduler}[1][htb]{
    \renewcommand{\ALG@name}{Scheduler}
   \begin{algorithm}[#1]
  }{\end{algorithm}}
\makeatother

\newcommand{\E}{\mathbb{E}}

\newcommand{\ind}{\mathbb I}

\newcommand{\tnorm}[1]{\| #1 \|}
\newcommand{\norm}[1]{\left\| #1 \right\|}

\newcommand{\floor}[1]{\left\lfloor\, {#1}\,\right\rfloor}

\newcommand{\tceil}[1]{\lceil\, {#1}\,\rceil}

\newcommand{\tsum}{\textstyle\sum}

\newcommand\Z{\mathbb Z}
\newcommand\N{\mathbb N}
\newcommand\R{\mathbb R}

\newcommand{\Sphere}{\mathbb{S}^{k-1}}
\newcommand{\Ball}{\mathbb{B}^{k}}

\newlength{\pgmtab}  
\setlength{\pgmtab}{1em}

\def\qedsketch{\ifmmode\Box\else{\unskip\nobreak\hfil
\penalty50\hskip1em\null\nobreak\hfil$\Box$
\parfillskip=0pt\finalhyphendemerits=0\endgraf}\fi}

\newlength{\tpush}
\setlength{\tpush}{2\headheight}
\addtolength{\tpush}{\headsep}

\newcommand{\handout}[5]{
   \noindent
   \begin{center}
   \framebox{ \vbox{ \hbox to \textwidth { {\bf \coursenum\ :\  \coursename} \hfill #5 }
       \vspace{3mm}
       \hbox to \textwidth { {\Large \hfill #2  \hfill} }
       \vspace{1mm}
       \hbox to \textwidth { {\it #3 \hfill #4} }
     }
   }
   \end{center}
   \vspace*{4mm}
   \newcommand{\lecturenum}{#1}
   \addcontentsline{toc}{chapter}{Lecture #1 -- #2}
}

\newcommand{\normop}[1]{{\left\vert\kern-0.25ex\left\vert\kern-0.25ex\left\vert #1 
		\right\vert\kern-0.25ex\right\vert\kern-0.25ex\right\vert}}

\newcommand{\cB}{\mathcal{B}}

\newcommand{\cF}{\mathcal{F}}

\newcommand{\cK}{\mathcal{K}}

\newcommand{\cO}{\mathcal{O}}

\newcommand{\cS}{{\mathcal{S}}}

\newcommand{\cW}{\mathcal{W}}

\newcommand{\wtx}{\widetilde{x}}

\newcommand{\wty}{\widetilde{y}}

\newcommand{\htg}{\widehat{g}}

\newcommand{\dtot}{d_{\textnormal{tot}}}

\newcommand{\dmax}{d_{\textnormal{max}}}

\newcommand{\sigmax}{\sigma_{\textnormal{max}}}

\newcommand{\myeq}[1]{\stackrel{\text{(#1)}}{=}}
\newcommand{\myle}[1]{\stackrel{\text{(#1)}}{\le}}

\newcommand{\Unif}{\textnormal{Unif}}

\newcommand{\Pareto}{\textnormal{Pareto}}

\newcommand{\deltatot}{\delta_{\textnormal{tot}}}

\newcommand{\ydelta}{y^{\delta}}

\newcommand{\vdelta}{v^{\delta}}
\newcommand{\fdelta}{f^{\delta}}